\newtheorem{claim}{Claim}[section]
\newtheorem{lemma}[claim]{Lemma}
\newtheorem{assumption}{Assumption}[]
\newtheorem{proposition}[claim]{Proposition}
\newtheorem{definition}[claim]{Definition}
\title{
Supervised Pretraining Can Learn\\In-Context Reinforcement Learning
}
\author{%
  Jonathan N. Lee\thanks{Equal contribution.}$^{\ \,1}$
  \And
  Annie Xie$^{*1}$
  \And
  Aldo Pacchiano$^{2}$
  \And
  Yash Chandak$^{1}$
  \AND
  Chelsea Finn$^{1}$
  \And
  Ofir Nachum$^{3}$
  \And
  Emma Brunskill$^{1}$
  \AND
  \textnormal{$^{1}$Stanford University, $^{2}$Microsoft Research, 
  $^{3}$Google DeepMind}
}
\begin{document}

\maketitle

\begin{abstract}
Large transformer models trained on diverse datasets have shown a remarkable ability to \emph{learn in-context}, achieving high few-shot performance on tasks they were not explicitly trained to solve. In this paper, we study the in-context learning capabilities of transformers in decision-making problems, i.e., reinforcement learning (RL) for bandits and Markov decision processes. To do so, we introduce and study \emph{Decision-Pretrained Transformer} (\emph{DPT}), a supervised pretraining method where the transformer predicts an optimal action given a query state and an in-context dataset of interactions, across a diverse set of tasks. This procedure, while simple, produces a model with several surprising capabilities. We find that the pretrained transformer can be used to solve a range of RL problems in-context, exhibiting both exploration online and conservatism offline, despite not being explicitly trained to do so. The model also generalizes beyond the pretraining distribution to new tasks and automatically adapts its decision-making strategies to unknown structure. Theoretically, we show DPT can be viewed as an efficient implementation of Bayesian posterior sampling, a provably sample-efficient RL algorithm. We further leverage this connection to provide guarantees on the regret of the in-context algorithm yielded by DPT, and prove that it can learn faster than algorithms used to generate the pretraining data. These results suggest a promising yet simple path towards instilling strong in-context decision-making abilities in transformers.
\end{abstract}

\section{Introduction}

For supervised learning, transformer-based models trained at scale have shown impressive abilities to perform tasks given an input context, often referred to as few-shot prompting or in-context learning~\citep{brown2020language}. In this setting, a pretrained model is presented with a small number of supervised input-output examples in its context, and is then asked to predict the most likely completion (i.e. output) of an unpaired input, without parameter updates. Over the last few years in-context learning has been applied to solve a range of tasks~\citep{min2022rethinking} and a growing number works are beginning to understand and analyze in-context learning for supervised learning~\citep{chan2022data,garg2022can,razeghi2022impact,akyurek2022learning}.
In this work, our focus is to study and understand in-context learning applied to sequential decision-making, specifically in the context of reinforcement learning (RL) settings. Decision-making (e.g. RL) is considerably more dynamic and complex than supervised learning. Understanding and leveraging in-context learning here could potentially unlock significant improvements in an agent's ability to adapt and make few-shot decisions in response to observations from the world. Such capabilities are instrumental for practical applications ranging from robotics to recommendation systems. 

For in-context decision-making~\citep{laskin2022context,xu2022prompting,xu2023hyper}, rather than input-output tuples, the context takes the form of state-action-reward tuples representing a dataset of interactions with an unknown environments. The agent must leverage these interactions to understand the dynamics of the world and what actions lead to good outcomes. A hallmark of good decision-making in online RL algorithms is a judicious balance of selecting exploratory actions to gather information and selecting increasingly optimal actions by exploiting that information~\citep{sutton2018reinforcement}. In contrast, an RL agent with access to only a suboptimal offline dataset should produce a policy that conservatively selects actions~\citep{levine2020offline}. An ideal in-context decision-maker should exhibit similar behaviors.

To study in-context decision-making formally, we propose a new simple supervised pretraining objective, namely, to train (via supervised learning) a transformer to predict an optimal action label\footnote{If not explicitly known, the optimal action can be determined by running any (potentially inefficient) minimax-optimal regret algorithm for each pretraining task.} given a query state and an in-context dataset of interactions, across a diverse set of tasks. 
We refer to the pretrained model as a Decision-Pretrained Transformer (DPT). Once trained, DPT can be deployed as either an online or offline RL algorithm in a new task by passing it an in-context dataset of interactions and querying it for predictions of the optimal action in different states. For example, online, the in-context dataset is initially empty and DPT's predictions are uncertain because the new task is unknown, but it fills the dataset with its interactions as it learns and becomes more confident about the optimal action. We show empirically and theoretically that DPT yields a surprisingly effective in-context decision-maker with regret guarantees. As it turns out, DPT effectively performs posterior sampling --- a provably sample-efficient Bayesian RL algorithm that has historically been limited by its computational burden~\citep{osband2013more}. We summarize our main findings below.
\begin{itemize}[leftmargin=1em]
\item \textbf{Predicting optimal actions alone gives rise to near-optimal decision-making algorithms.}
The DPT objective is solely based on predicting optimal actions from in-context interactions. At the outset, it is not immediately apparent that these predictions at test-time would yield good decision-making behavior when the task is unknown and behaviors such as
online exploration are necessary to solve it.
Intriguingly, DPT as an algorithm is capable of dealing with this uncertainty in-context. For example, despite not being explicitly trained to explore, DPT exhibits an exploration strategy on par with hand-designed algorithms, as a means to discover the optimal actions.

\item \textbf{DPT generalizes to new decision-making problems, offline and online.} We show DPT can handle reward distributions unseen in its pretraining data on bandit problems as well as unseen goals, dynamics, and datasets in simple MDPs. This suggests that the in-context strategies learned during pretraining are robust and generalizable without any parameter updates at test time.

\item \textbf{\macroName{} improves over the data used to pretrain it by exploiting latent structure.}
As an example, in parametric bandit problems, specialized algorithms can leverage structure (such as linear rewards) and offer provably better regret, but a representation must be known in advance. Perhaps surprisingly, we find that pretraining on linear bandit problems, even with unknown representations, leads DPT to select actions and explore in a way that matches an efficient linear bandit algorithm. This holds even when the source pretraining data comes from a suboptimal algorithm (i.e., one that does not take advantage of any latent structure), demonstrating the ability to learn improved in-context strategies beyond what it was trained on.  

\item \textbf{Posterior sampling can be implemented via in-context learning.} 
Posterior sampling (PS), a generalization of Thompson Sampling, can provably sample-efficiently solve online RL problems~\citep{osband2013more}, but a common criticism is the lack of computationally efficient ways to update and sample from a posterior distribution. \macroName{} can be viewed as learning a posterior distribution over optimal actions, shortcutting the PS procedure. Under some conditions, we show theoretically that \macroName{}  in-context is equivalent to PS. Furthermore, \macroName{}'s prior and posterior updates are grounded in data rather than needing to be specified \textit{a priori}. This suggests that in-context learning could help unlock practical and efficient RL via posterior sampling.

\end{itemize}

\section{Related Work}

\textbf{Meta-learning.} 
Algorithmically, in-context learning falls under the meta-learning framework~\citep{schaul2010metalearning,bengio1990learning}.
At a high-level, these methods attempt to learn some underlying shared structure of the training distribution of tasks 
to accelerate learning of new tasks.
For decision-making and RL, there is a often choice in what shared `structure' is specifically learned such as the dynamics of the task~\citep{fu2016one,nagabandi2018learning,landolfi2019model}, a task context identifier~\citep{rakelly2019efficient,humplik2019meta,zintgraf2019varibad,liu2021decoupling}, temporally extended skills and options~\citep{perkins1999using,gupta2018meta,jiang2022learning}, or initialization of a
neural network policy~\citep{finn2017model,rothfuss2018promp}). In-context learning can be viewed as taking a more agnostic approach by learning the learning algorithm itself, more similar to~\cite{duan2016rl,wang2016learning,mishra2017simple}. 
Algorithm Distillation (AD)~\citep{laskin2022context,lu2023structured} also falls under this category, applying autoregressive supervised learning to distill (sub-sampled) traces of a single-task RL algorithm into a task-agnostic model. 
While \macroName{} also leverages autoregressive SL, it does not distill an existing RL algorithm in order to imitate how to learn. Instead, we pretrain DPT to predict optimal actions, yielding potentially emergent online and offline strategies at test time that automatically leverage the task structure to behave similarly to posterior sampling.

\textbf{Autoregressive transformers for decision-making.}
In decision-making fields such as RL and imitation learning, transformer models trained using autoregressive supervised action prediction have proliferated~\citep{yang2023foundation}, inspired by the successes of these techniques for large language models~\citep{vaswani2017attention,raffel2020exploring,brown2020language}.
For example, Decision Transformer (DT)~\citep{chen2021decision,janner2021offline} uses a transformer to autoregressively model sequences of actions from offline experience data, conditioned on the achieved return. During inference, one can then query the model conditioned on a desired return value. 
This approach has been shown to scale favorably to large models and multi-task settings~\citep{lee2022multi}, at times exceeding the performance of large-scale multi-task imitation learning with transformers~\citep{reed2022generalist,brohan2022rt,shafiullah2022behavior}.
However, DT is known to be provably (and unboundedly) sub-optimal in common scenarios~\citep{brandfonbrener2022does,yang2022dichotomy}. A common criticism of DT, and supervised learned transformers in general, is their inability to improve upon the dataset. For example, there is little reason for DT to output meaningful behavior if conditioned on return higher than any observed in training, without strong extrapolation assumptions~\citep{brandfonbrener2022does}.
In contrast, a major contribution of our work is theoretical and empirical evidence for the ability of DPT to improve over behaviors seen in the dataset in terms of regret.

\textbf{Value and policy-based offline RL.}
Offline RL algorithms offer the opportunity to learn from existing datasets.  
To address distributional shift, many prior algorithms incorporate the principle of value pessimism~\citep{kumar2020conservative,yu2021combo,liu2020provably,ghasemipour2022so}, or policy regularization~\citep{fujimoto2019off,kumar2019stabilizing,wu2019behavior,siegel2020keep,liu2019off}. 
To reduce the amount of offline data required in a new task, methods for offline meta-RL can reuse interactions collected in a set of related tasks~\citep{li2020focal,mitchell2021offline,dorfman2021offline}. However, they still must address distribution shift,  requiring solutions such as policy regularization~\citep{li2020focal} or additional online interactions~\citep{pong2022offline}.  
DPT follows the success of autoregressive models like DT and AD, avoiding these issues. With our pretraining objective, DPT also leverages offline datasets for new tasks more effectively than AD.

\section{In-Context Learning Model}
\label{sec:in-context}

\textbf{Basic decision models.} The basic decision model of our study is the finite-horizon Markov decision process (MDP). An MDP is specified by the tuple $\tau = \langle \Scal, \Acal, T, R, H, \rho \rangle$ to be solved, where $\Scal$ is the state space, $\Acal$ is the action space, $T: \Scal \times \Acal \to \Delta(\Scal)$ is the transition function, $R: \Scal \times \Acal \to \Delta(\RR)$ is the reward function, $H \in \NN$ is the horizon, and $\rho \in \Delta(\Scal)$ is the initial state distribution. A learner interacts with the environment through the following protocol: (1) an initial state $s_1$ is sampled from $\rho$; (2) at time step $h$, the learner chooses an action $a_h$ and transitions to state $s_{h+1} \sim T(\cdot|s_h,a_h)$, and receives a reward $r_h \sim R(\cdot | s_h, a_h)$. The episode ends after $H$ steps. A policy $\pi$ maps states to distributions over actions and can be used to interact with the MDP. We denote the optimal policy as $\pistar$, which maximizes the value function $V(\pistar) = \max_{\pi} V(\pi) :=  \max_{\pi} \E_{\pi} \sum_h r_h$. When necessary, we use the subscript $\tau$ to distinguish $V_\tau$ and $\pistar_\tau$ for the specific MDP $\tau$. We assume the state space is partitioned by $h \in [H]$ so that $\pistar$ is notationally independent of $h$. Note this framework encompasses multi-armed bandit settings where the state space is a single point, e.g. $\Scal = \{1\}$, $H=1$, and the optimal policy is  $a^\star \in \argmax_{a \in \Acal} \E \left[r_1| a_1 = a\right]$.

\begin{figure}
    \centering
    \includegraphics[width=\linewidth]{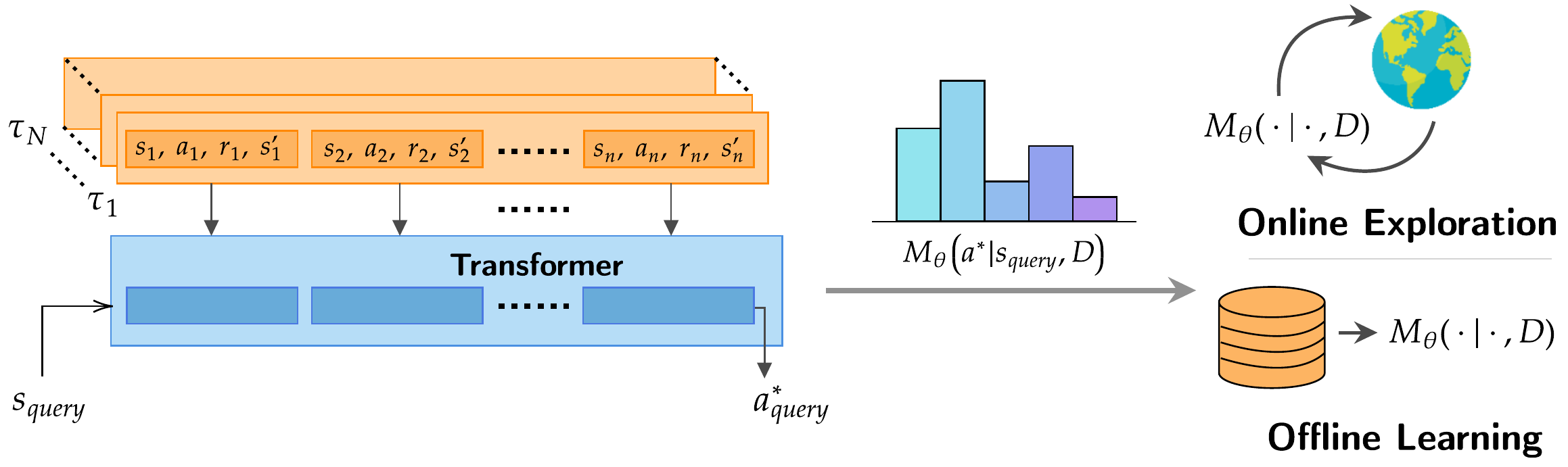}
    \vspace{-0.5cm}
    \caption{\small A transformer model $M_{\theta}$ is pretrained to predict an optimal action $a^\star_{\text{query}}$ from a state $\squery$ in a task, given a dataset of interactions from that task. The resulting Decision-Pretrained Transformer (DPT) learns a distribution over the optimal action conditioned on an in-context dataset. $M_{\theta}$ can be deployed in \emph{new} tasks online by collecting data on the fly, or offline by immediately conditioning on a static dataset. }
    \label{fig:model}
    \vspace{-0.3cm}
\end{figure}
\begin{algorithm}
\caption{Decision-Pretrained Transformer (DPT): Training and Deployment}         \label{alg:pretrain}
\begin{algorithmic}[1]
		\STATE \algcomment{Collecting pretraining dataset}
        \STATE Initialize empty pretraining dataset $\Bcal$
        \FOR{$i$ in $[N]$}
            \STATE Sample task $\tau \sim \pretraint$,  in-context dataset $D \sim \pretraind (\cdot ; \tau)$, query state $\squery \sim \Dquery$
            \STATE Sample label $a^\star \sim \pistar_\tau(\cdot | \squery)$
            and add $(\squery, D, a^\star)$ to $\Bcal$
        \ENDFOR
        \STATE \algcomment{Pretraining model on dataset}
        \STATE Initialize model $M_{\theta}$ with parameters $\theta$
        \WHILE{not converged}
            \STATE Sample $(\squery, D, a^\star)$ from $\Bcal$ and predict $\hat p_j(\cdot) = M_{\theta}(\cdot  | \squery, D_j)$ for all $j \in [n]$
            \STATE Compute loss in~\eqref{eq:pretrain-obj} with respect to $a^\star$ and backpropagate to update $\theta$.
        \ENDWHILE
        
        \STATE \algcomment{Offline test-time deployment}
        \STATE Sample unknown task $\tau \sim \testt$, sample dataset $D \sim \testd(\cdot; \tau)$
        \STATE Deploy $M_{\theta}$ in $\tau$ by choosing $a_h \in \argmax_{a \in \Acal} M_{\theta}(a | s_h, D)$ at step $h$
        
        \STATE \algcomment{Online test-time deployment}
        \STATE Sample unknown task $\tau \sim \testt$ and initialize empty $D = \{\}$

        \FOR{$\texttt{ep}$ in $\texttt{max\_eps}$}
                \STATE Deploy $M_{\theta}$ by sampling $a_h \sim M_{\theta}(\cdot  | s_h, D)$ at step $h$
                \STATE Add $(s_1, a_1, r_1, \ldots)$ to $D$
        \ENDFOR
\end{algorithmic}
\end{algorithm}

\textbf{Pretraining.} We give pseudocode in Algorithm~\ref{alg:pretrain} and a visualization in Figure~\ref{fig:model}. Let $\pretraint$ be a distribution over tasks at the time of pretraining. A task $\tau \sim \pretraint$ can be viewed as a specification of an MDP,
$\tau = \langle
\Scal, \Acal, T, R, H, \rho \rangle$. The distribution $\pretraint$ can 
span different reward and transition functions and even different state and action spaces. We then sample a context (or a prompt) which consists of a dataset $D \sim \pretraind(\cdot; \tau)$ of interactions between the learner and the MDP specified by $\tau$. $D = \{s_j,a_j, s'_j, r_j\}_{j \in [n]}$ is a collection of transition tuples taken in $\tau$. 
We refer to $D$ as the \emph{in-context dataset} because it provides the contextual information about $\tau$. $D$ could be generated through variety of means, such as:
   (1) random interactions within $\tau$,
    (2) demonstrations from an expert,
     and (3) rollouts of an algorithm.
Additionally, we independently sample a query state $\squery$ from the distribution $\Dquery$ over states $\Scal$ and a label $a^\star$ is sampled from the optimal policy $\pistar_\tau(\cdot | \squery)$ 
for task $\tau$ (see Section~\ref{subsec:mdp_ppo_data} for how to implement this in common practical scenarios). 
 We denote the joint pretraining distribution over tasks, in-context datasets,  query states, and action labels as $P_{pre}$:
\begin{align}
    P_{pre}(\tau, D, \squery, a^\star) = \pretraint(\tau) \pretraind( D; \tau) \Dquery(\squery ) \pistar_\tau( a^\star | \squery)
\end{align}

Given the 
in-context dataset $D$ and a query state $\squery$, we can train a model to predict the optimal action $a^\star$ in response simply via supervised learning. Let $D_j = \{ (s_1, a_1, s'_1, r_1), \ldots, (s_j, a_j, s'_j, r_j) \}$ denote the partial dataset up to $j$ samples. Formally, we aim to train a causal GPT-2 transformer model $M$ parameterized by $\theta$, which outputs a distribution over actions $\Acal$, to minimize the expected loss over samples from the pretraining distribution:
\begin{align}\label{eq:pretrain-obj}
{\textstyle \min_{\theta}  \E_{P_{pre}}  \sum_{j \in [n]} \ell \left( M_{\theta} (  \cdot \ | \ \squery, D_j), a^\star \right)} 
\end{align}
Generally, we set the loss to be the negative log-likelihood with $\ell( M_{\theta}( \cdot \ | \squery, D_j), a^\star)  := -\log M_{\theta}(a^\star \ | \ \squery, D_j)$. This framework can work for both discrete and continuous $\Acal$. For our experiments with discrete $\Acal$, we use a softmax parameterization for the distribution of $M_{\theta}$, essentially treating this as a classification problem.
The resulting output model $M_{\theta}$ can be viewed as an algorithm that takes in a dataset of interactions $D$ and can be queried with a forward pass for predictions of the optimal action via inputting a query state $\squery$. We refer to the trained model $M_{\theta}$ as a Decision-Pretrained Transformer (DPT).

\textbf{Testing.} 
After pretraining, a new task (MDP) $\tau$ is sampled from a test-task distribution $\testt$. If the DPT is to be tested \emph{offline}, then a dataset (prompt) is a sampled $D \sim \testd(\ \cdot \ ; \tau)$ and the policy that the model in-context learns is given conditionally as $M_{\theta}(\cdot \ | \ \cdot, D)$. Namely, we evaluate the policy by selecting action $a_h \in \argmax_{a} M_{\theta}(a | s_h, D)$ when the learner visits state $s_h$. 
If the model is to be tested \emph{online} through multiple episodes of interaction, then the dataset is initialized as empty $D = \{\empty\}$. At each episode, $M_{\theta}(\cdot \ | \ \cdot, D)$ is deployed where the model samples $a_h \sim M_{\theta}(\cdot  |s_h, D)$ upon observing state $s_h$. Throughout a full episode, it collects interactions $\{s_1, a_1, r_1, \ldots, s_H, a_H, r_H\}$ which are subsequently appended to $D$. The model then repeats the process with another episode, and so on until a specified number of episodes has been reached.

A key distinction of the testing phase is that there are no updates to the parameters of $M_\theta$. 
This is in contrast to hand-designed RL algorithms that would perform parameter updates or maintain statistics using  $D$ to learn from scratch. Instead, the model $M_{\theta}$ performs a computation through its forward pass to generate a distribution over actions conditioned on the in-context $D$ and query state $s_h$.

\textbf{Sources of distribution mismatch.}  
Inherent to pretraining, like nearly all foundation models, is distribution mismatch on downstream test-time tasks. DPT pretrained on sufficiently diverse data should ideally be robust (to some extent) to these mismatches. (1) When deployed, $M_{\theta}$ will execute its learned policy which invariably induces a distribution over states different from $\Dquery$. (2) Pretraining $\pretraint$ likely differs from the downstream $\testt$. (3) Similarly, the test-time datasets prompts can also differ, especially online where they are collected by $M_{\theta}$ itself.

\section{Learning in Bandits}\label{sec:bandits}

We begin with an empirical investigation of \macroName{} in a multi-armed bandit, a
well-studied special case of the MDP where the state space $\Scal$ is a singleton and the horizon $H = 1$ is a single step. We will examine the performance of \macroName{} both when aiming to select a good action from offline historical data and for online learning where the goal is to maximize cumulative reward from scratch. Offline, it is critical to account for uncertainty due to noise as certain actions may not be sampled well enough. Online, it is critical to judiciously balance exploration and exploitation to minimize overall regret. For detailed descriptions of the experiment setups, see Appendix~\ref{app:experiment-details}.

\textbf{Pretraining distribution.} For the pretraining task distribution $\pretraint$, we sample $5$-armed bandits ($|\Acal| = 5$). The reward function for arm $a$ is a normal distribution $R(\cdot | s,a) = \Ncal(\mu_a, \sigma^2)$ where $\mu_a \sim \unif[0, 1]$ independently and $\sigma = 0.3$. To generate in-context datasets $\pretraind$, we randomly generate action frequencies by sampling probabilities from a Dirichlet distribution and mixing them with a point-mass distribution on one random arm (see details in Appendix~\ref{app:bandit-pretrain-data}). Then we sample the actions accordingly from  this distribution. This encourages diversity of the in-context datasets. The optimal policy $\pistar_{\tau}$ for bandit $\tau$ is $\argmax_{a} \mu_a$, which we can easily compute during pretraining. We pretrain the model $M_{\theta}$ to predict $a^\star$ from $D$ as described in Section~\ref{sec:in-context} for datasets up to size $n = 500$.

\textbf{Comparisons.}
We compare to several well-known algorithms for bandits\footnote{See Appendix~\ref{app:comparisons} for additional details such as hyperparameters.}. All of the algorithms are designed to reason in a particular way about uncertainty based on their observations.
\begin{itemize}[leftmargin=1em,noitemsep]
\item Empirical mean algorithm (Emp) selects the action with the highest empirical mean reward naively.
\item Upper Confidence Bound (UCB) selects the action with the highest  upper confidence bound. 
\item Lower Confidence Bound (LCB) selects the action with the highest lower confidence bound. 
\item Thompson Sampling (TS) selects the action with the highest sampled mean from a posterior distribution over reward models. The prior and likelihood functions are Gaussian.
\end{itemize}
Emp and TS~\citep{russo2018tutorial,thompson1933likelihood} can both be used for offline or online learning; UCB \citep{auer2002finite} is known to be provably optimal online by ensuring exploration through optimism under uncertainty; and LCB \citep{xiao2021optimality,jin2021pessimism} is  used to minimize suboptimality given an offline dataset by selecting actions pessimistically. It is the opposite of UCB. We evaluate algorithms with standard bandit metrics. Offline, we use the suboptimality $\mu_{a^\star} - \mu_{\hat a}$ where $\hat a$ is the chosen action. Online, we use  cumulative regret: $\sum_{k} \mu_{a^\star} - \mu_{\hat a_k}$ where $\hat a_k$ is the $k$th action chosen.

\begin{figure}
    \centering
    \begin{subfigure}{0.33\textwidth}
        \centering
        \includegraphics[width=\linewidth]{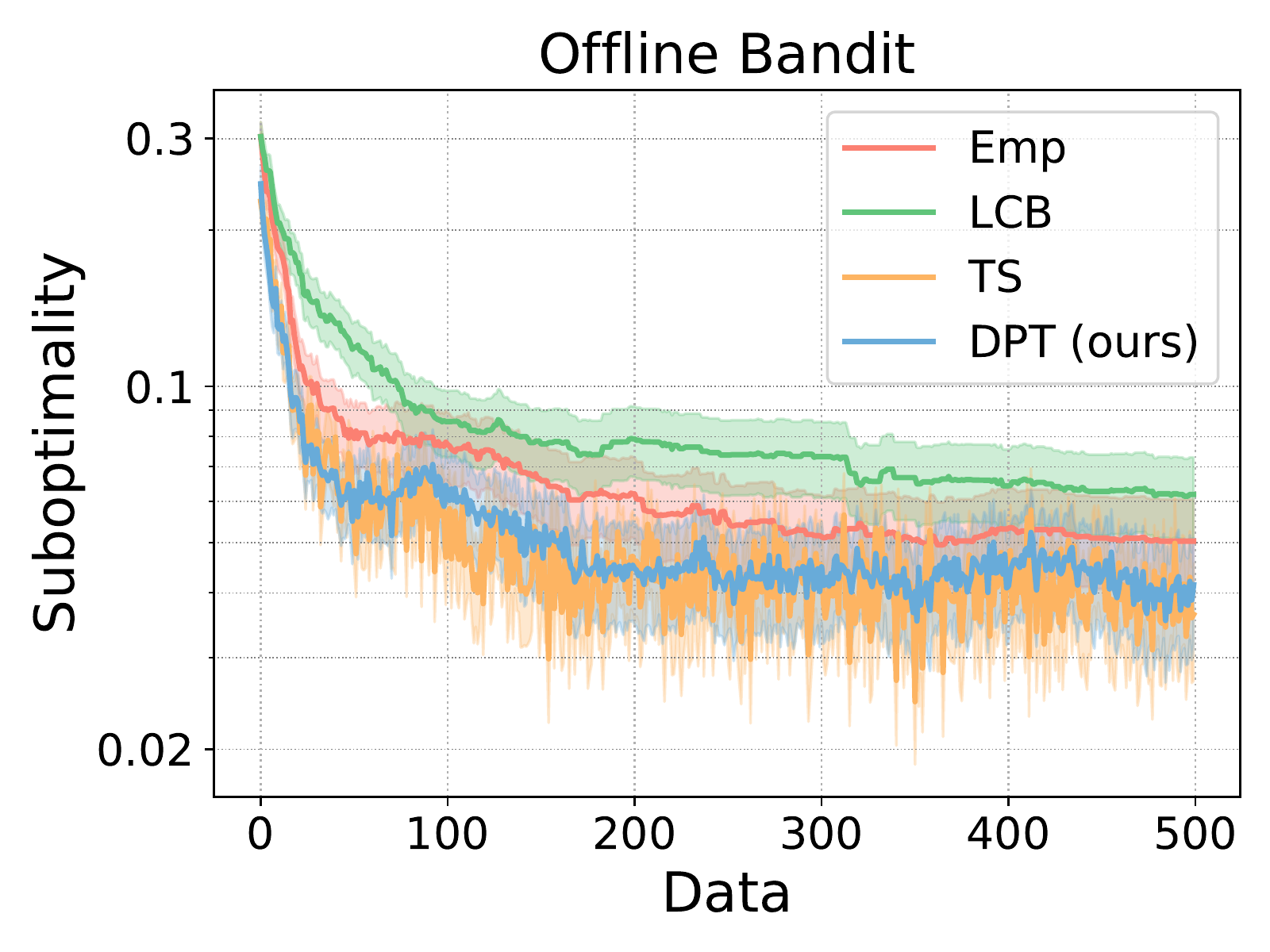}
        \caption{}
        \label{fig:basic-offline}
    \end{subfigure}
    \begin{subfigure}{0.33\textwidth}
        \centering
        \includegraphics[width=\linewidth]{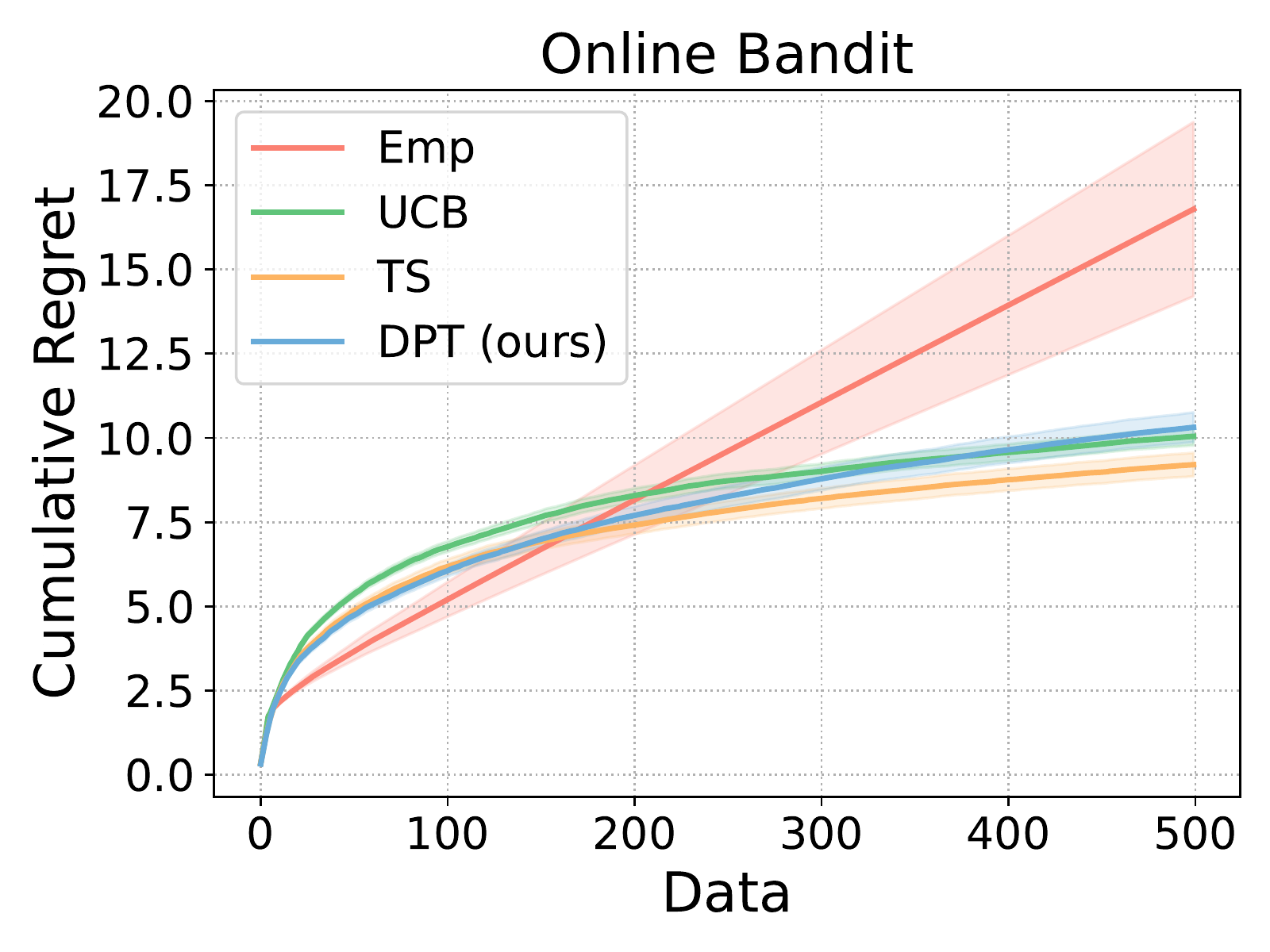}
        \caption{}
        \label{fig:basic-online}
    \end{subfigure}
\begin{subfigure}{0.32\textwidth}
  \centering
    \includegraphics[width=\linewidth]{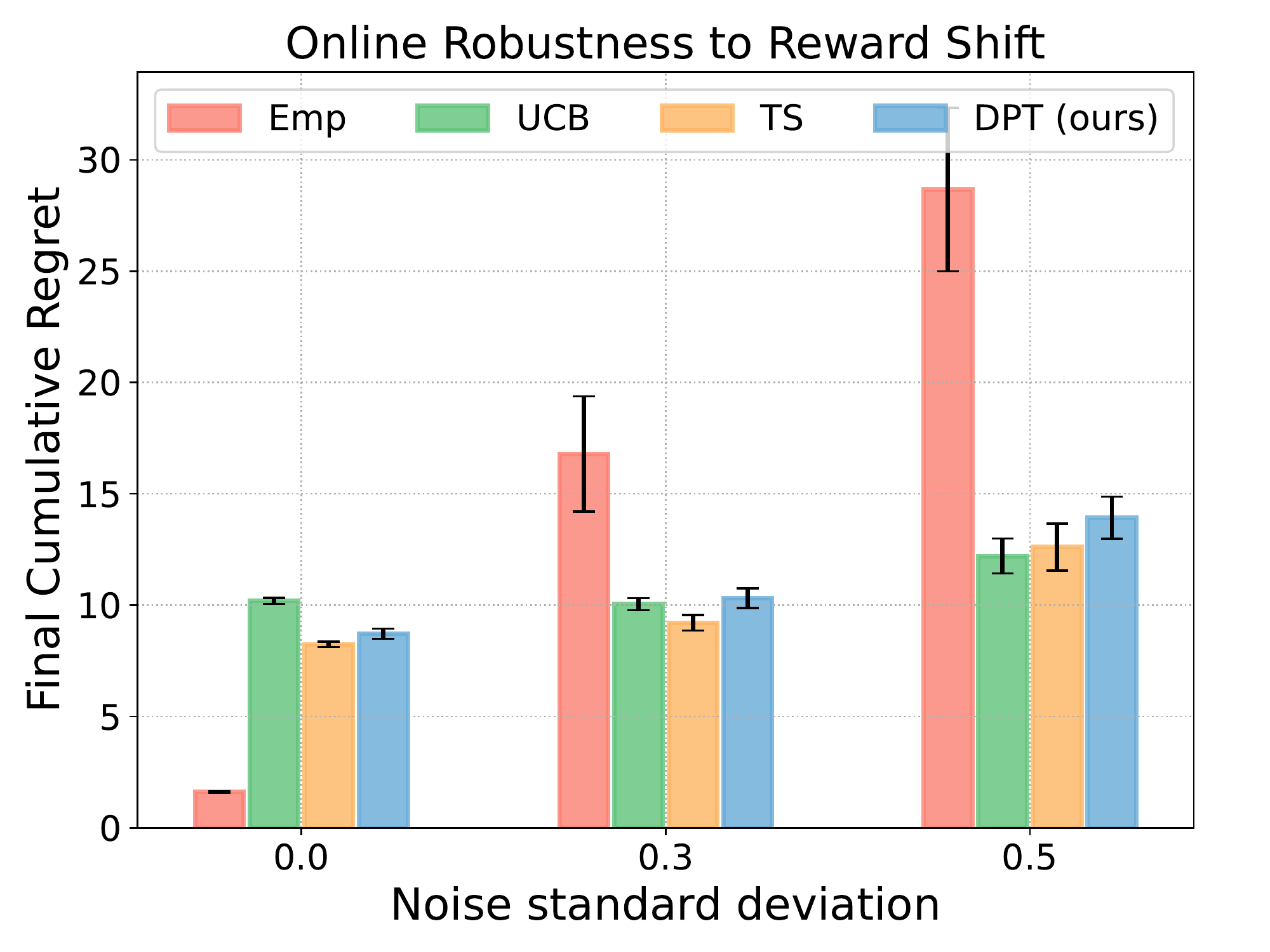}
  \caption{}
  \label{fig:online-vars}
\end{subfigure}
\vspace{-0.1cm}
    \caption{\small (a) Offline performance on in-distribution bandits, given random in-context datasets. (b) Online cumulative regret on bandits. (c) Final (after 500 steps) cumulative regret on out-of-distribution bandits with different Gaussian noise standard deviations. The mean and standard error are computed over $200$ test tasks.}
    \vspace{-0.4cm}
\end{figure}
\textbf{DPT learns to reason through uncertainty.}
As shown in Figure~\ref{fig:basic-offline}, in the offline setting, DPT significantly exceeds the performance of Emp and LCB while matching the performance of TS, when the in-context datasets are sampled from the same distribution as during pretraining. The results suggest that the transformer is capable of reasoning through uncertainty caused by the noisy rewards in the dataset. 
Unlike Emp which can be fooled by noisy, undersampled actions, the transformer has learned to \emph{hedge} to a degree. However, it also suggests that this hedging is fundamentally different from what LCB does, at least on this specific distribution\footnote{Note our randomly generated environments are equally likely to have expert-biased datasets and adversarial datasets, so LCB is not expected to outperform here~\citep{xiao2021optimality}.}.

Interestingly, the same transformer produces an extremely effective online bandit algorithm when sampling actions instead of taking an argmax. As shown in Figure~\ref{fig:basic-online}, DPT matches the performance of classical optimal algorithms, UCB and TS, which are specifically designed for exploration. This is notable because \macroName{} was not explicitly trained to explore, but its emergent strategy is on par with some of the best.  In Figure~\ref{fig:online-vars}, we show this property is robust to noise in the rewards not seen during pretraining by varying the standard deviation. In Appendix~\ref{app:more-exps}, we show this generalization happens offline too and even with unseen Bernoulli rewards.

\textbf{Leveraging structure from suboptimal data.} We now investigate whether \macroName{} can learn to leverage the inherent structure of a problem class, even without prior knowledge of this structure and even when learning from in-context datasets that do not explicitly utilize it. More precisely, we consider $\pretraint$ to be a distribution over \emph{linear} bandits, where the reward function is given by $\E\left[r \ | \ a, \tau \right] =  \dotp{\theta_\tau}{\phi(a)}$ and $\theta_\tau \in \RR^d$ is a task-specific parameter vector and $\phi: \Acal \to \RR^d$ is fixed feature vector that is the same for all tasks. 
Given the feature representation $\phi$, LinUCB~\citep{abbasi2011improved}, a UCB-style algorithm that leverages $\phi$, should achieve regret $\widetilde{\Ocal}(d\sqrt{ K})$ over $K$ steps, a substantial gain over UCB and TS when $d \ll |\Acal|$. 
Here, we pretrain a \macroName{} model with in-context datasets gathered by TS, which does not leverage the linear structure. Figures~\ref{fig:struct-offline} and~\ref{fig:struct-online} show that \macroName{} can exploit the unknown linear structure, essentially learning a surrogate for $\phi$, allowing to do more informed exploration online and decision-making offline. 
It is nearly on par with LinUCB (which is given $\phi$) and significantly outperforms the dataset source, TS, which does not know or use the structure. These results present evidence that (1) \macroName{} can automatically leverage structure, and (2) supervised learning-based approaches to RL \emph{can} learn novel explorations that transcend the quality of their pretraining data.

\textbf{Adapting to expert-biased datasets.}
A common assumption in offline RL is that datasets tend to be a mixture between optimal data (e.g. expert demonstrations) and suboptimal data (e.g. random interactions)~\citep{rashidinejad2021bridging}. Hence, LCB is generally effective in practice and the pretraining and testing distributions should be biased towards this setting. Motivated by this, we 
pretrain a second \macroName{} model where $\pretraind$ is generated by mixing the in-context datasets with varying fractions of expert data, biasing $\pretraind$ towards datasets that contain more examples of the optimal action. We denote this model by \macroName{}-Exp. In Figure~\ref{fig:bandit-exp}, we plot the test-time performance of both pretrained models when evaluated on new offline datasets with varying percentages of expert data\footnote{That is, $0\%$ is fully random while $100\%$ has only optimal actions in the in-context dataset.}. Our results suggest that when the pretraining distribution is also biased towards expert-suboptimal data, \macroName{}-Exp behaves similarly to LCB, while \macroName{} continues to resemble TS. This is quite interesting as for other methods, such as TS, it is less clear how to automatically incorporate the right amount of expert bias to yield the same effect, but \macroName{} can leverage this from pretraining.

\begin{figure}
    \centering
    \begin{subfigure}{0.32\textwidth}
        \centering
        \includegraphics[width=\linewidth]{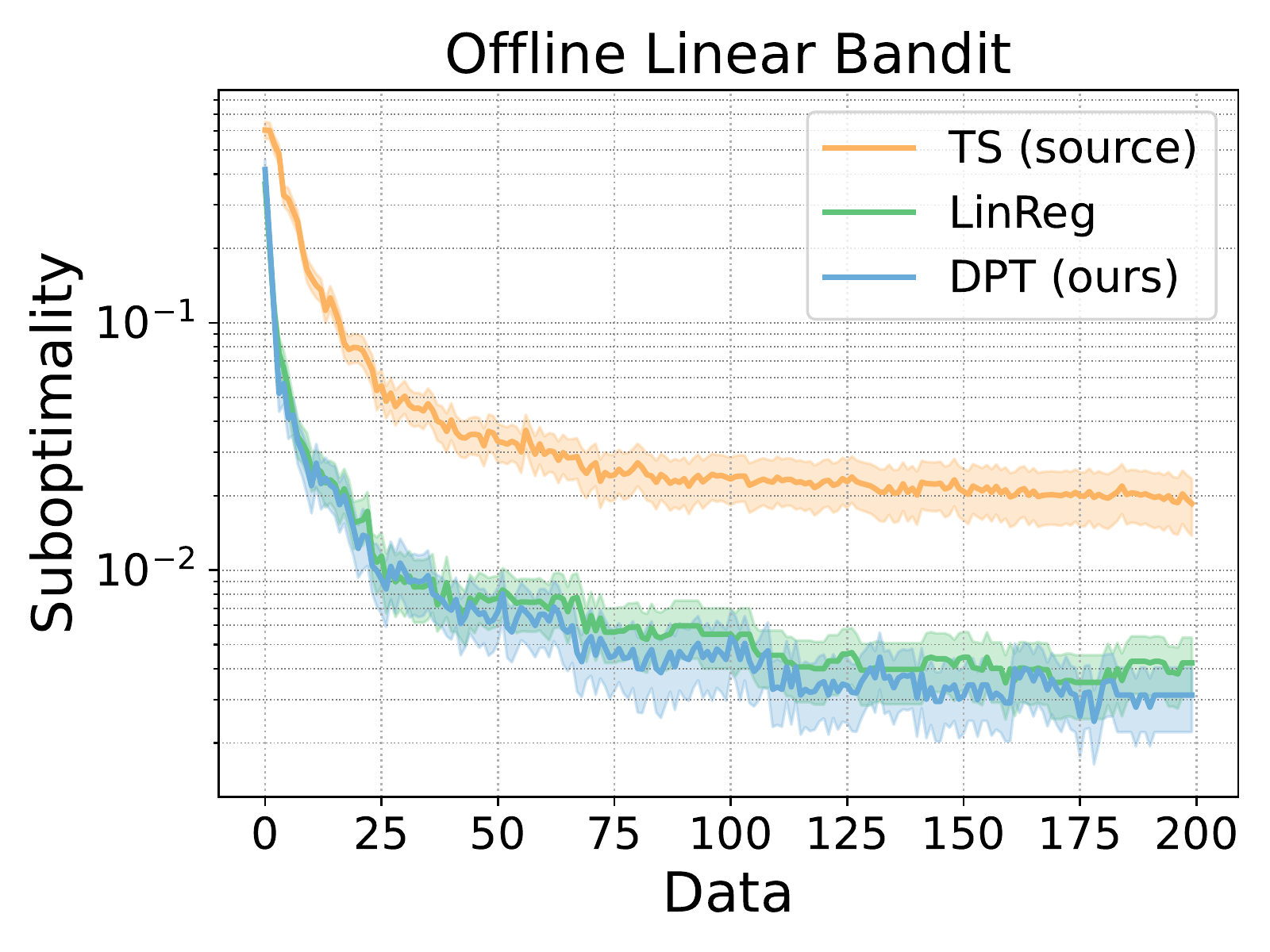}
        \caption{}
        \label{fig:struct-offline}
    \end{subfigure}
    \begin{subfigure}{0.32\textwidth}
        \centering
        \includegraphics[width=\linewidth]{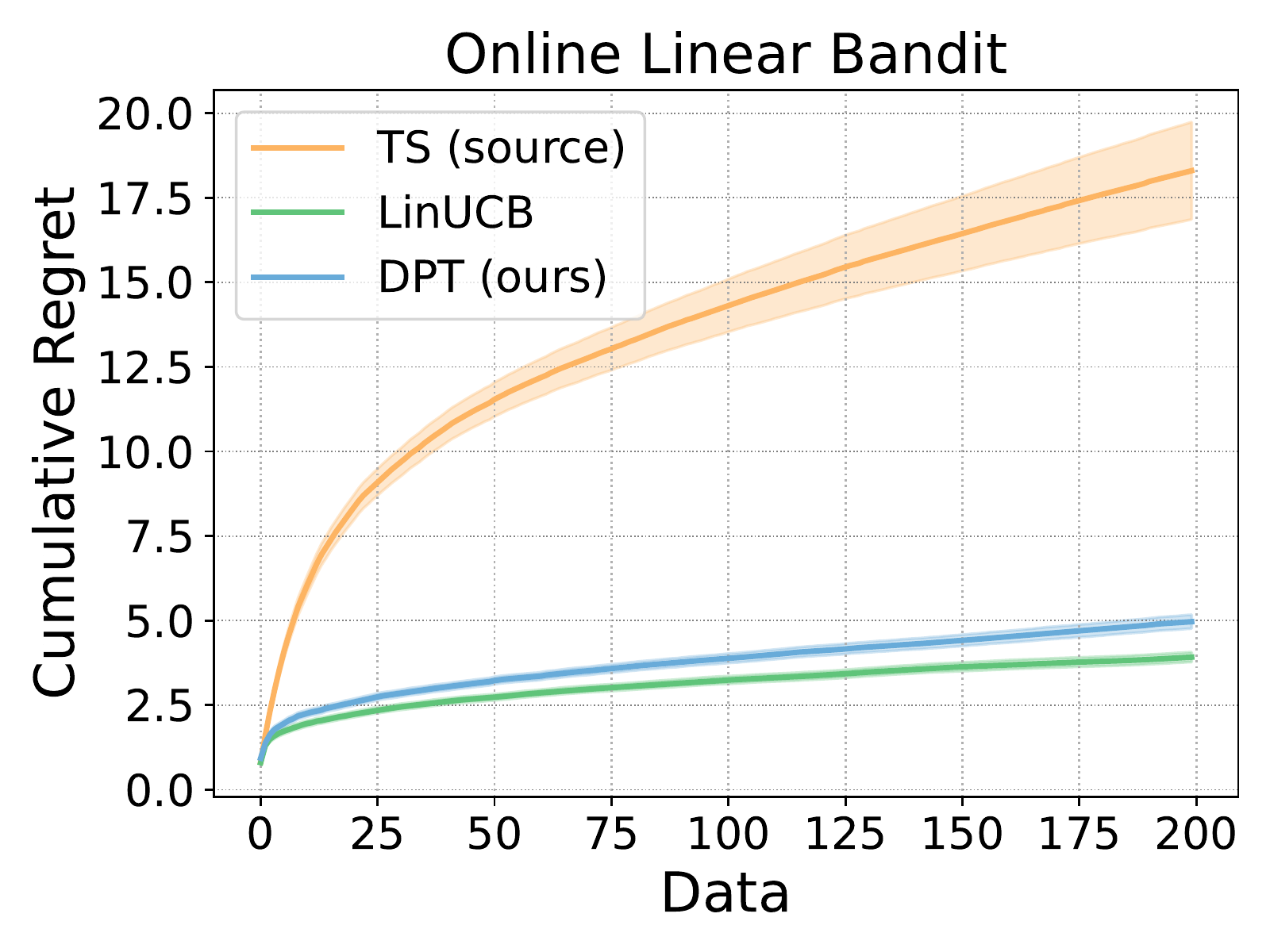}
        \caption{}
        \label{fig:struct-online}
    \end{subfigure}
    \begin{subfigure}{0.32\textwidth}
        \centering
        \includegraphics[width=\linewidth]{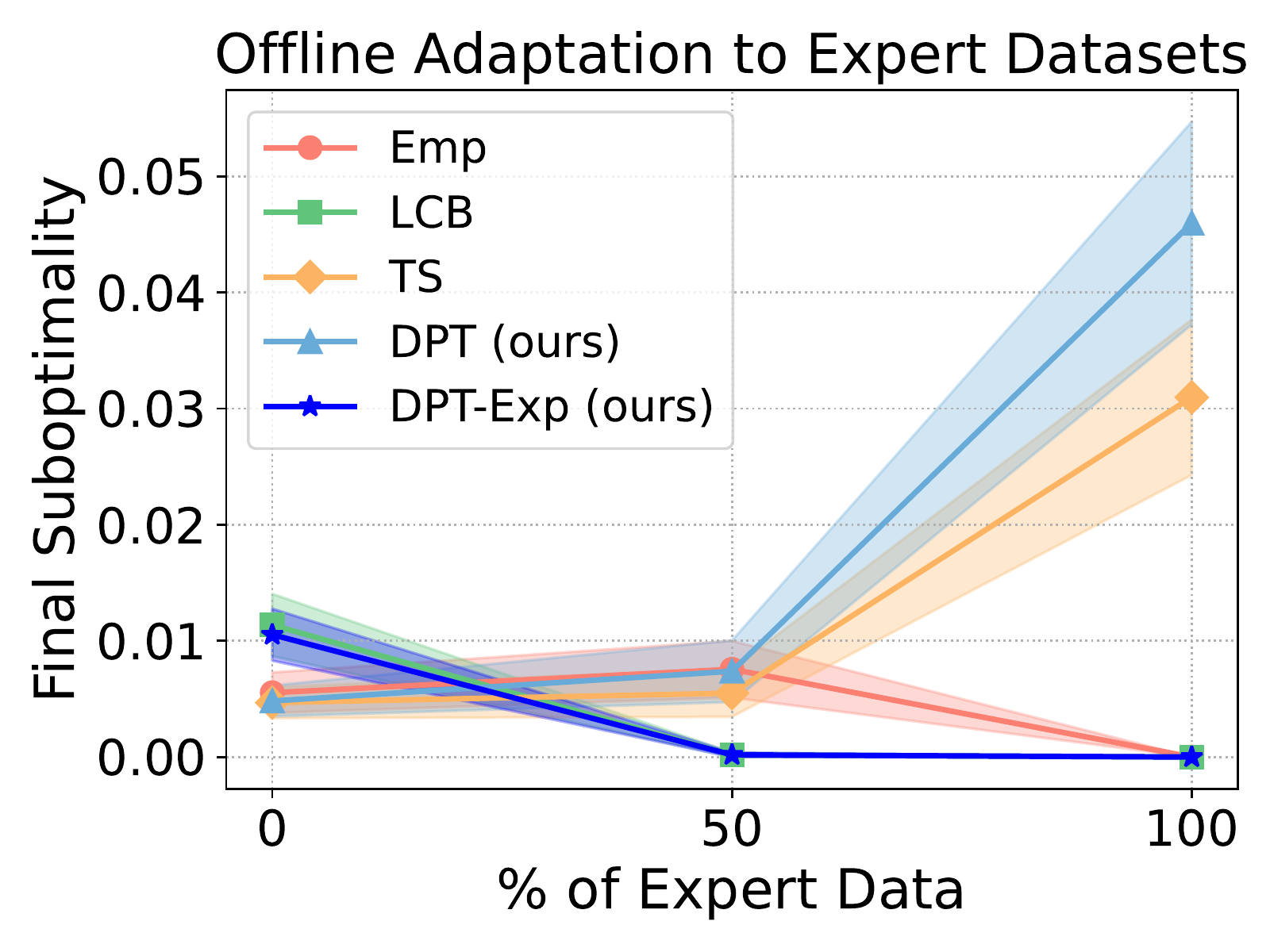}
        \caption{}
        \label{fig:bandit-exp}
    \end{subfigure}

    \vspace{-0.1cm}
    \caption{\small 
    (a) Offline performance of DPT trained on linear bandits from TS source data. LinReg does linear regression and outputs the greedy action. (b) Online cumulative regret of the same model. The mean and standard error are computed over $200$ test tasks. (c) Offline performance on expert-biased datasets. DPT pretrained on a different prior continues to match TS, but DPT-Exp trained from a more representative prior excels.}
    \vspace{-0.4cm}
\end{figure}

\section{Learning in Markov Decision Processes}

We next study how \macroName{} can tackle Markov decision processes by testing its ability to perform exploration and credit assignment. In the following experiments, the DPT demonstrates generalization to new tasks, scalability to image-based observations, and capability to stitch in-context behaviors (Section~\ref{subsec:mdp_main}). This section also examines whether \macroName{} can be pretrained with datasets and action labels generated by a different RL algorithm, rather than the exact optimal policy (Section~\ref{subsec:mdp_ppo_data}).

\subsection{Experimental Setup}

\textbf{Environments. } We consider environments that require targeted exploration to solve the task. The first is Dark Room~\citep{zintgraf2019varibad,laskin2022context}, a 2D discrete environment where the agent must locate the unknown goal location in a $10 \times 10$ room, 
and only receives a reward of $1$ when at the goal. We hold out a set of goals for generalization evaluation. 
Our second environment is Miniworld~\citep{chevalier2018miniworld}, a 3D visual navigation problem to test the scalability of DPT to image observations. The agent is in a room with four boxes of different colors, and must find the target box, the color of which is unknown to the agent initially. It receives a reward of $1$ only when near the correct box.
Details on these environments and the pre-training datasets 
are in App.~\ref{app:environments} and~\ref{app:pretrain-data}.

\textbf{Comparisons.} Our experiments aim to understand the effectiveness of DPT in comparison to that of other context-based meta-RL algorithms. To that end, we compare to meta-RL algorithms based on supervised and RL objectives.
\begin{itemize}[leftmargin=1em,noitemsep]
    \item Proximal Policy Optimization (PPO)~\citep{schulman2017proximal}: We compare to this single-task RL algorithm, which trains from scratch without any pretraining data, to contextualize the performance of DPT and other meta-RL algorithms.
    \item Algorithm Distillation (AD)~\citep{laskin2022context}: AD first generates a dataset of learning histories by running an RL algorithm in each training task.
    Then, given a sampled 
    subsequence $h_j = (s_j, a_j, r_j, \dots, s_{j+c})$ from a learning history, a tranformer is trained to predict the next action $a_{j+c}$ from the learning history. 
    \item $\text{RL}^2$~\citep{duan2016rl}: This online meta-RL comparison uses a recurrent neural network to adapt the agent's policy from the given context. Unlike AD and DPT, which are trained with a supervised objective, the $\text{RL}^2$ agent is trained to maximize the expected return with PPO.
\end{itemize}
PPO and RL$^2$ are online algorithms, while AD is capable of learning both offline and online. Details on the implementation of these algorithms can be found in Appendix~\ref{app:comparisons}.

\subsection{Main Results}
\label{subsec:mdp_main}

\begin{figure}
    \centering
    \begin{subfigure}{0.24\textwidth}
        \centering
        \includegraphics[width=\linewidth]{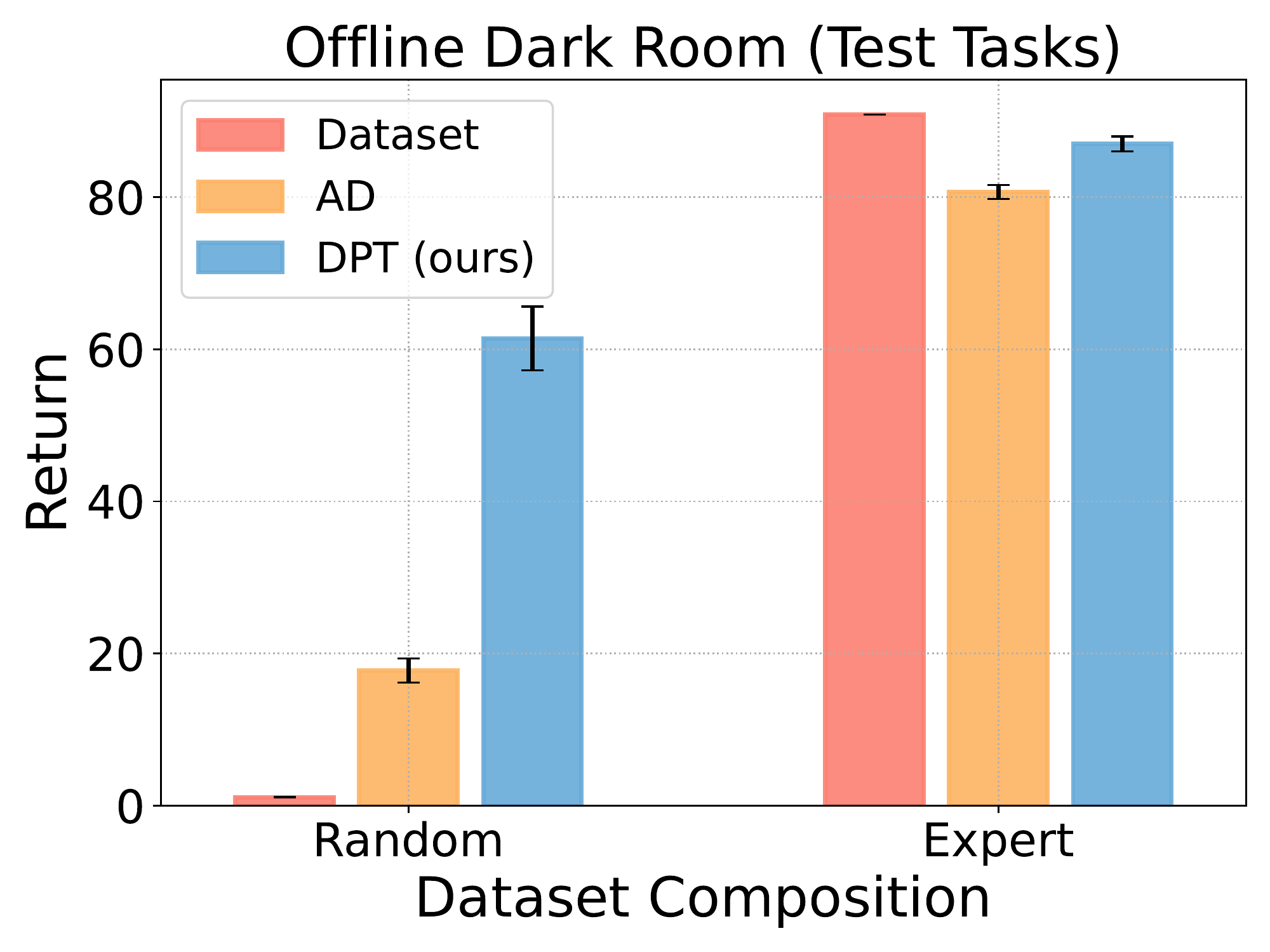}
        \caption{}
        \label{fig:darkroom-offline}
    \end{subfigure}
    \begin{subfigure}{0.24\textwidth}
        \centering
        \includegraphics[width=\linewidth]{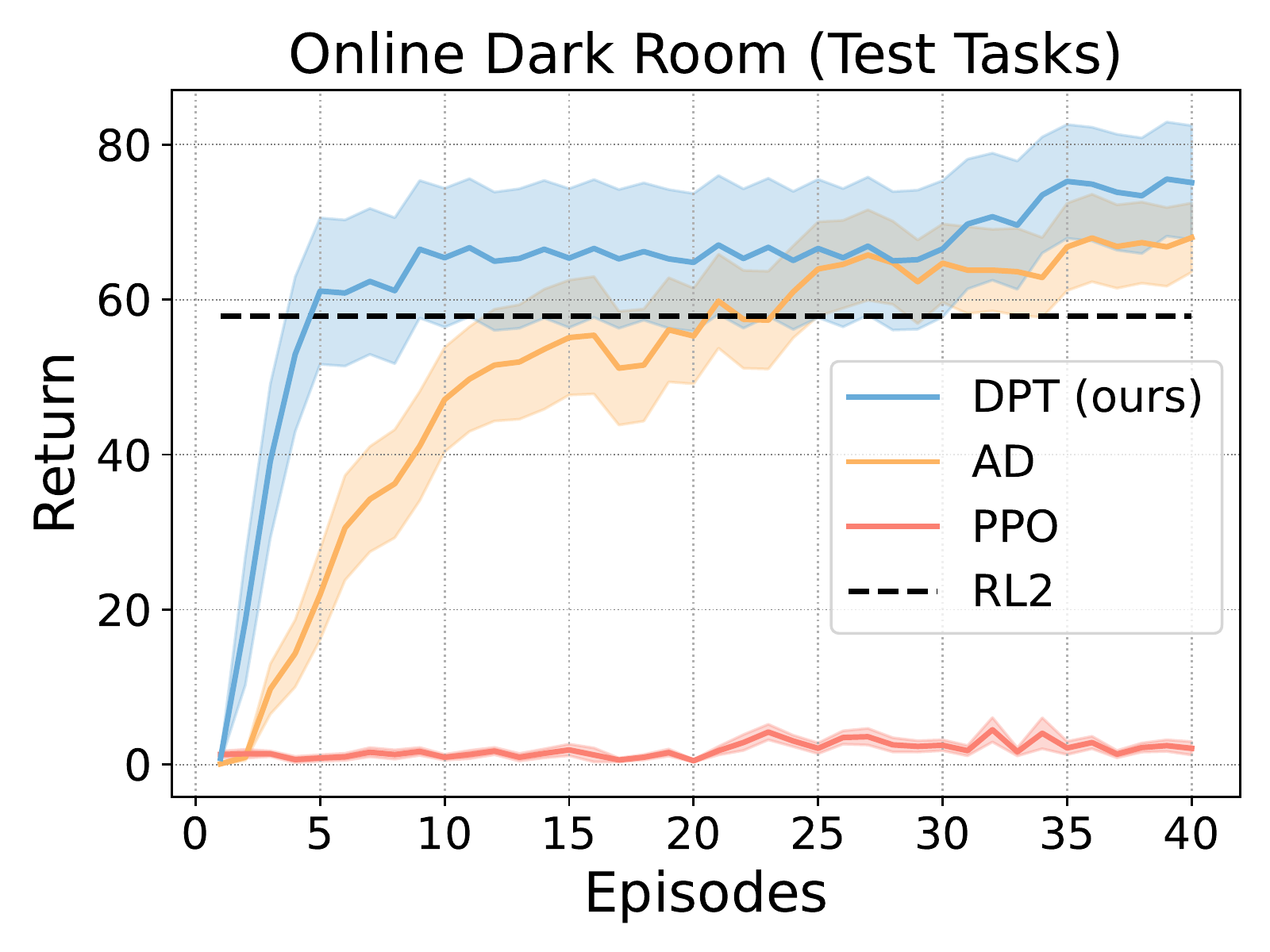}
        \caption{}
        \label{fig:darkroom-online}
    \end{subfigure}
    \begin{subfigure}{0.24\textwidth}
        \centering
        \includegraphics[width=\linewidth]{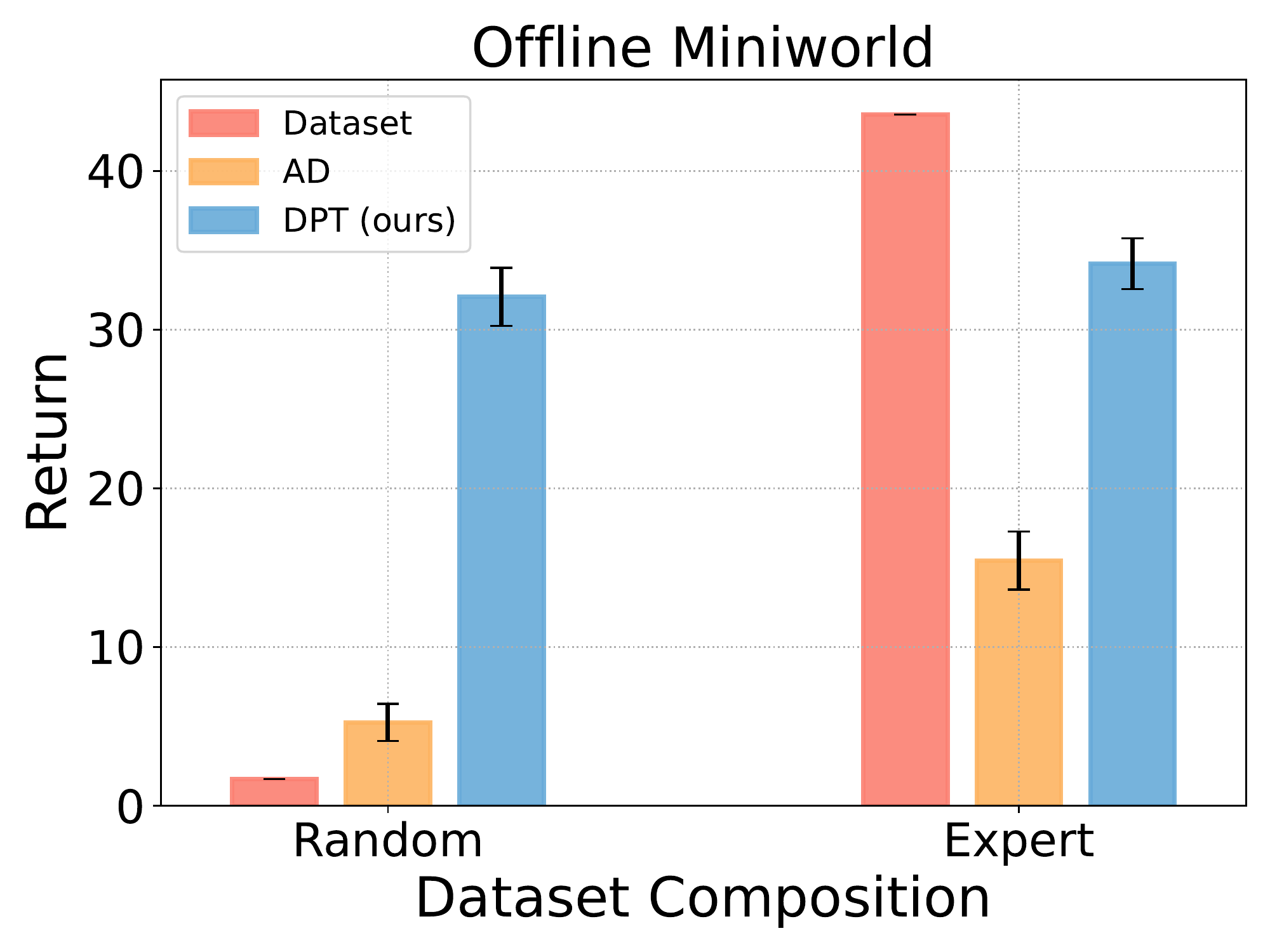}
        \caption{}
        \label{fig:miniworld-offline}
    \end{subfigure}
    \begin{subfigure}{0.24\textwidth}
        \centering
        \includegraphics[width=\linewidth]{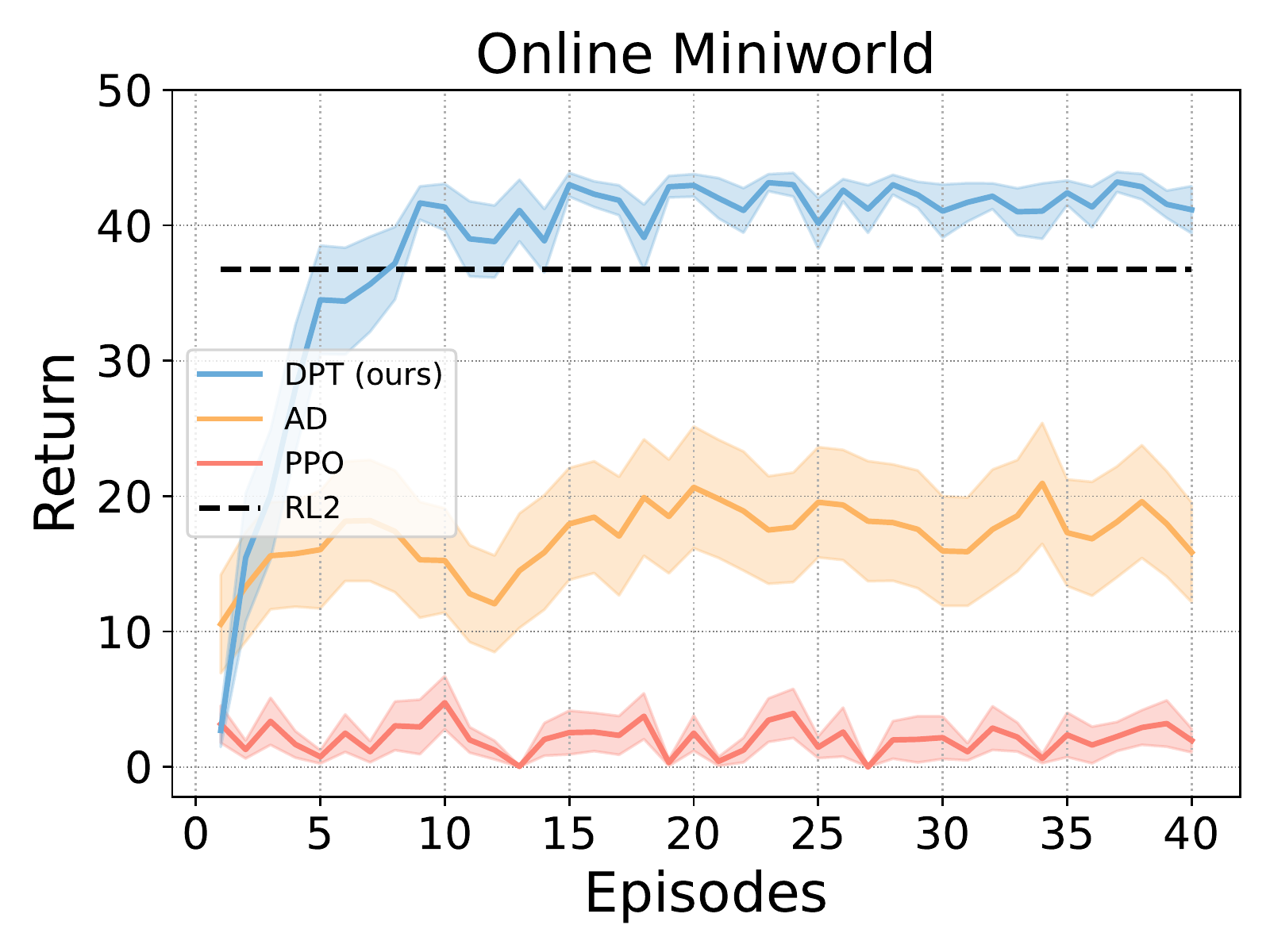}
        \caption{}
        \label{fig:miniworld-online}
    \end{subfigure}
    \vspace{-0.1cm}
    \caption{\small (a) Offline performance on held-out Dark Room goals, given random and expert datasets. (b) Online performance on held-out Dark Room goals. (c) Offline performance on Miniworld (images), given random and expert datasets. (d) Online performance on Miniworld (images) after $40$ episodes. We report the average and standard error of the mean over $100$ different offline datasets in (a) and (c) and $20$ online trials in (b) and (d).}
    \vspace{-0.2cm}
\end{figure}

\begin{figure}
    \centering
    \begin{subfigure}{0.32\textwidth}
        \centering
      \includegraphics[width=\textwidth]{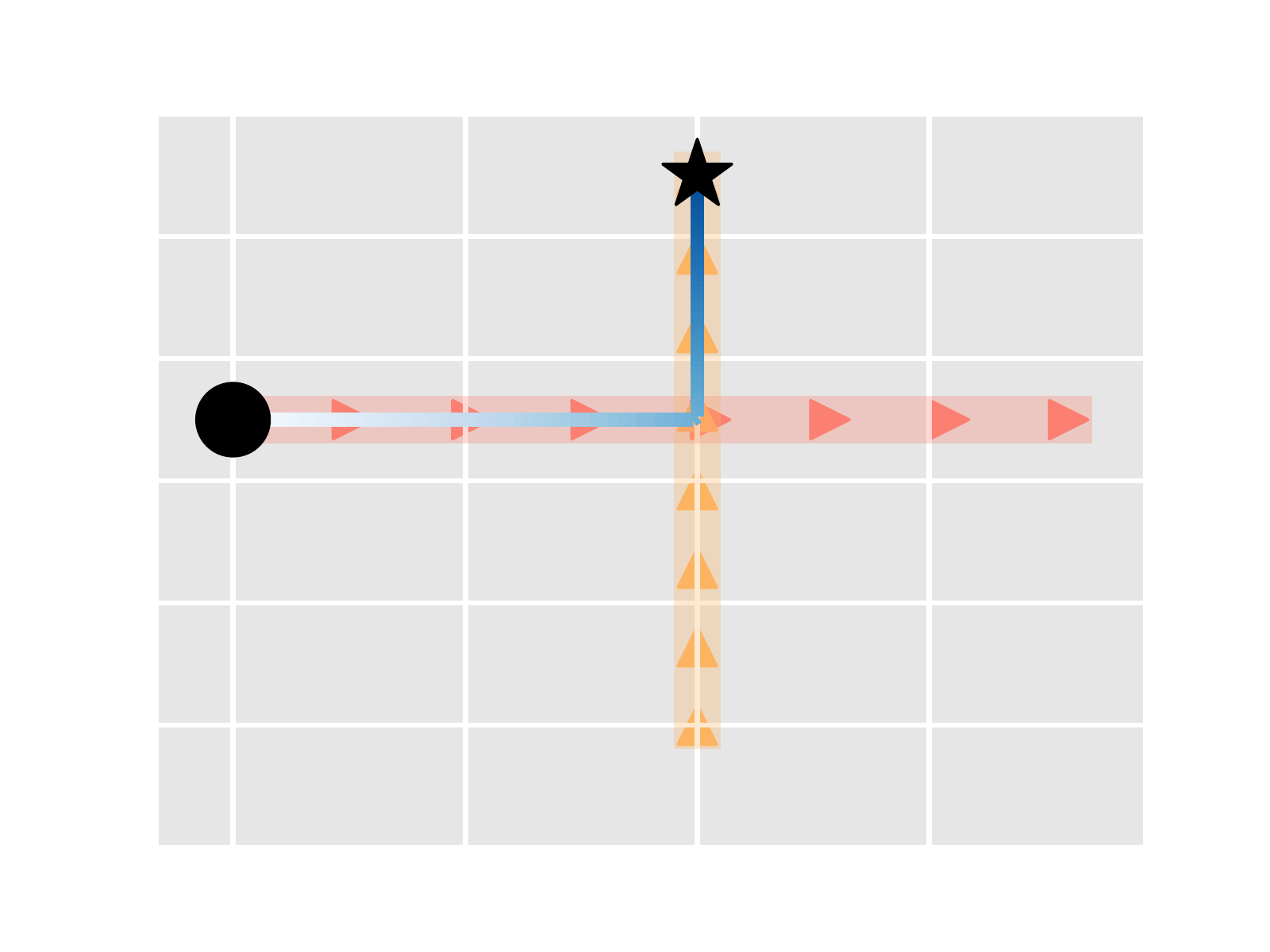}
      \caption{}
        \label{fig:darkroom-stitch}
    \end{subfigure}
    \begin{subfigure}{0.32\textwidth}
        \centering
        \includegraphics[width=\linewidth]{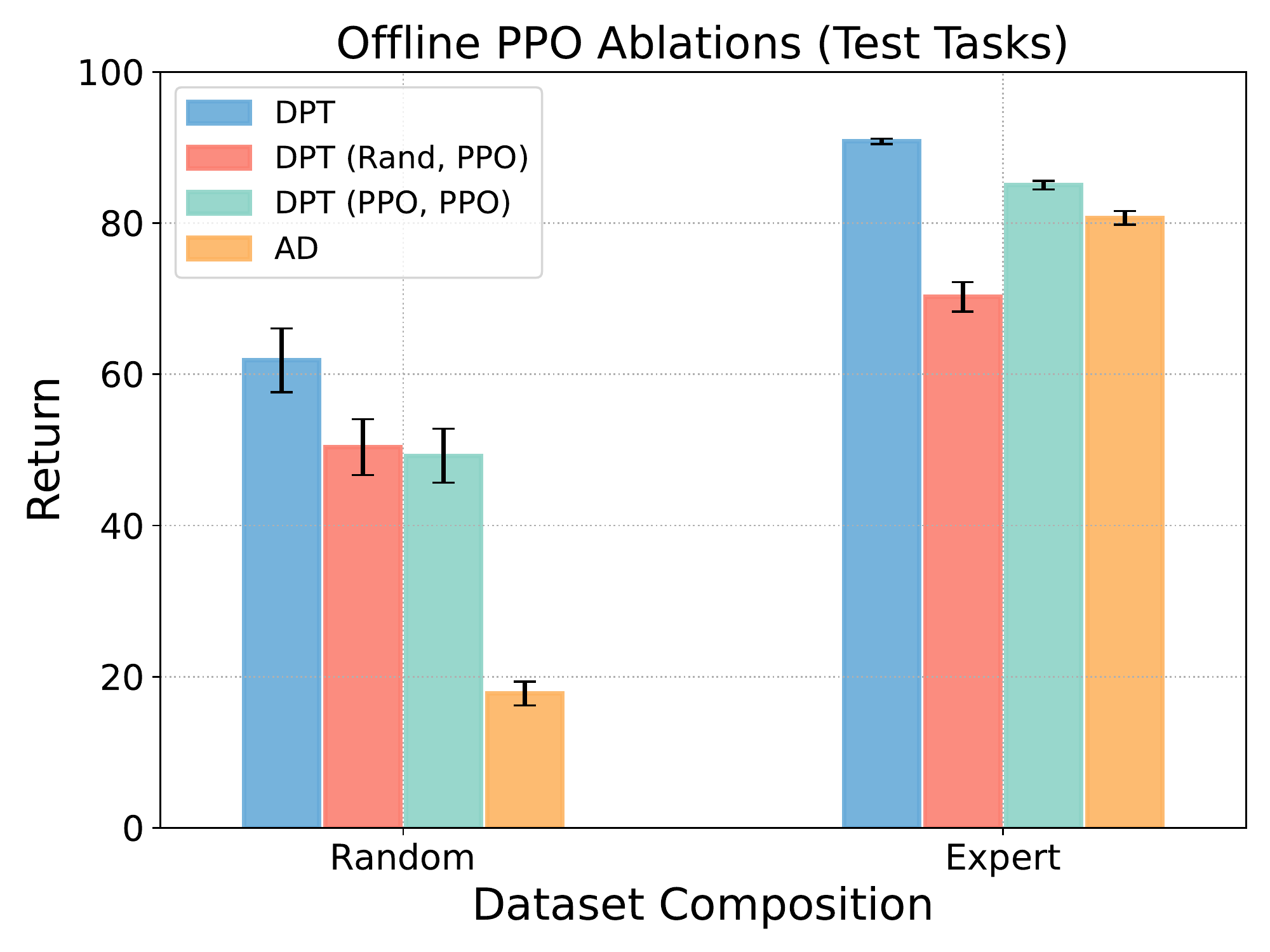}
        \caption{}
        \label{fig:dpt-ppo-offline}
    \end{subfigure}
    \begin{subfigure}{0.32\textwidth}
        \centering
        \includegraphics[width=\linewidth]{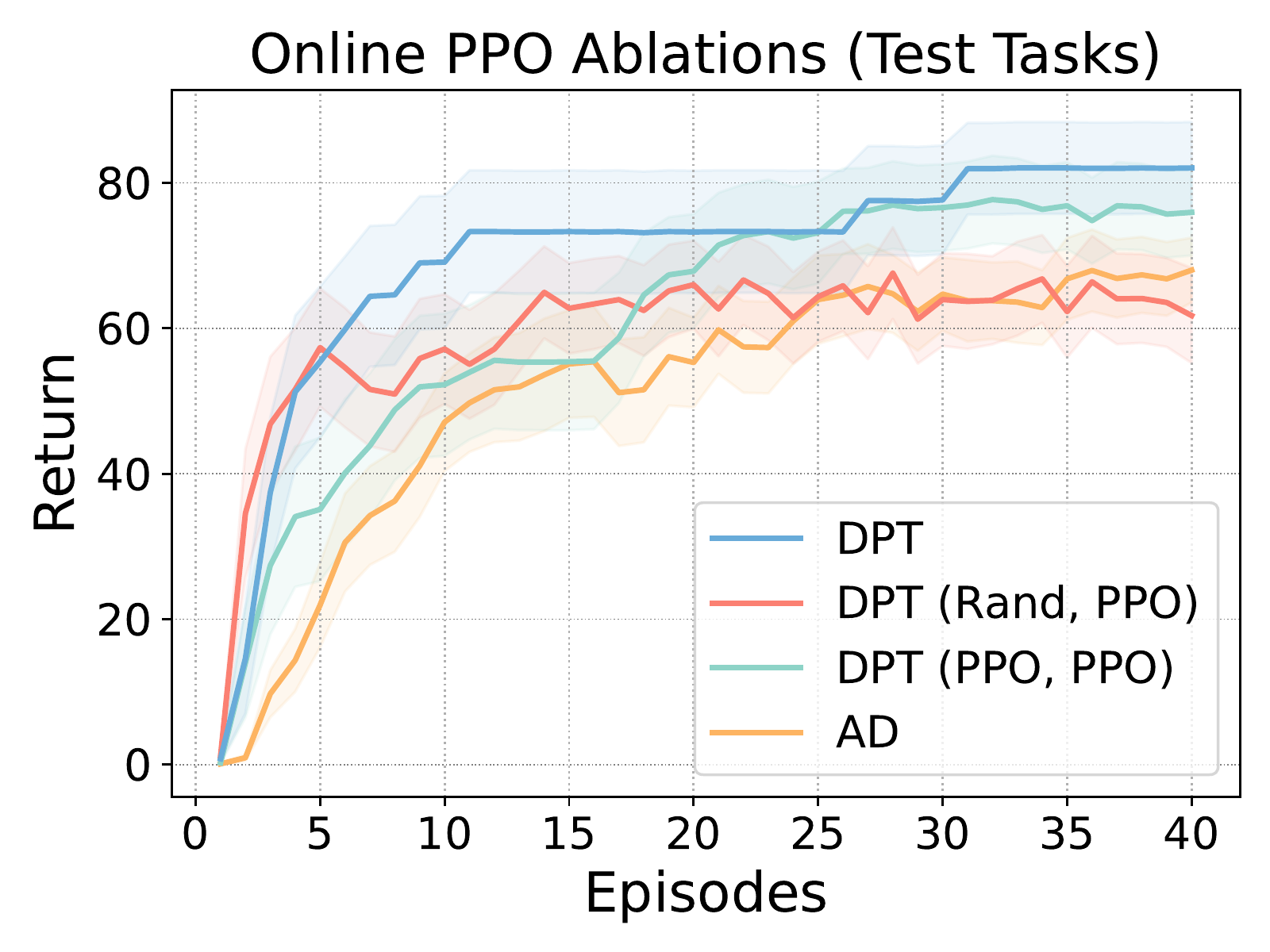}
        \caption{}
        \label{fig:dpt-ppo-online}
    \end{subfigure}
    \vspace{-0.1cm}
    \caption{\small (a) In \emph{Dark Room (Three Tasks)}, \macroName{} stitches a new, optimal trajectory to the goal (blue) given two in-context demonstrations of other tasks (pink and orange). (b) Offline Dark Room performance of \macroName{} trained on PPO data. (c) Online Dark Room performance of \macroName{} trained on PPO data.
    }
    \vspace{-0.2cm}
\end{figure}

\textbf{Generalizing to new offline datasets and tasks.} To study the generalization capabilities of \macroName{}, we evaluate the model in Dark Room on a set of $20$ held-out goals not in the pretraining dataset. When given an expert dataset, \macroName{} achieves near-optimal performance. Even when given a random dataset, which has an average total reward of $1.1$, \macroName{} obtains a much higher average return of $61.5$ (see Fig.~\ref{fig:darkroom-offline}). Qualitatively, we observe that when the in-context dataset contains a transition to the goal, \macroName{} immediately exploits this and takes a direct path to the goal. In contrast, while AD demonstrates strong offline performance with expert data, it performs worse in-context learning with random data compared to \macroName{}. The difference arises because AD is trained to infer a better policy than the in-context data, but not necessarily the optimal one.

We next evaluate \macroName{}, AD, RL$^2$, and PPO online without any prior data from the $20$ test-time Dark Room tasks, shown in Fig.~\ref{fig:darkroom-online}. After $40$ episodes, PPO does not make significant progress towards the goal, highlighting the difficulty of learning from such few interactions alone. RL$^2$ is trained to perform adaptation within four episodes each of length $100$, and we report the performance after the four adaptation episodes. Notably, \macroName{} on average solves each task faster than AD and reaches a higher final return than RL$^2$, demonstrating its capability to explore effectively online even in MDPs.
In Appendix~\ref{app:more-exps}, we also present results on generalization to new dynamics.

\textbf{Learning from image-based observations.} In Miniworld, the agent receives RGB image observations of $25 \times 25$ pixels. As shown in Fig.~\ref{fig:miniworld-online}, \macroName{} can solve this high-dimensional task offline from both random and expert datasets. Compared to AD and RL$^2$, \macroName{} also learns online more efficiently.

\textbf{Stitching novel trajectories from in-context subsequences.} A desirable property of some offline RL algorithms is the ability to stitch suboptimal subsequences from the offline dataset into new trajectories with higher return. To test whether \macroName{} exhibits stitching, we design the \emph{Dark Room (Three Tasks)} environment in which there are three possible tasks. The pretraining data consists only of expert demonstrations of two of them. At test-time \macroName{} is evaluated on third unseen task, but its offline dataset is only expert demonstrations of the original two. Despite this, it leverages the data to infer a path solving the third task (see Fig.~\ref{fig:darkroom-stitch}).

\subsection{Learning from Algorithm-Generated Policies and Rollouts}
\label{subsec:mdp_ppo_data}

So far, we have only considered action labels provided by an optimal policy. However, in some tasks, an optimal policy is not readily available even in pretraining. In this experiment, we use actions labeled by a policy learned via PPO and in-context datasets sampled from PPO replay buffers. We train PPO agents in each of the $80$ train tasks for $1$K episodes to generate $80$K total rollouts, from which we sample the in-context datasets. This variant, \macroName{} (PPO, PPO), performs on par with \macroName{} and still better than AD, as shown in Figures~\ref{fig:dpt-ppo-offline} and~\ref{fig:dpt-ppo-online}. \macroName{} (PPO, PPO) can be viewed as a direct comparison between our pretraining objective and that of AD, given the same pretraining data but just used differently. We also evaluated a variant, \macroName{} (Rand, PPO), which pretrains on random in-context datasets (like \macroName{}), but still using PPO action labels. The performance is worse than the other \macroName{} variants in some settings, but only marginally so. In Appendix~\ref{app:more-exps}, we analyze the sensitivity of \macroName{} to other hyperparameters, such as the context size and amount of pretraining data.

\section{Theory}\label{sec:theory}
We now shed light on the observations of the previous empirical results through a theoretical analysis. Our main result shows that DPT (under a slight modification to pretraining) essentially performs in-context posterior sampling (PS). PS is a generalization of Thompson Sampling for RL in MDPs. It maintains and samples from a posterior over tasks $\tau$ given historical data $D$ and executes optimal policies $\pistar_\tau$ (see Appendix~\ref{app:theory} for a formal outline). It is provably sample-efficient with online Bayesian regret guarantees~\citep{osband2013more}, but maintaining posteriors is generally computationally intractable. The ability for \macroName{} to perform PS in-context suggests a path towards computation- and provably sample-efficient RL with priors learned from the data.

\subsection{History-Dependent Pretraining and Assumptions}

We start with a modification to the pretraining of \macroName{}. Rather than conditioning only on $\squery$ and $D$ to predict $a^\star \sim \pistar_\tau(\cdot | \squery)$, we propose also conditioning on a sequence $\xi_{h} = (s_{1:h}, a_{1:h}^\star)$ where $s_{1:h} \sim \Sfrak_h \in \Delta(\Scal^h)$ is a distribution over sets of states, independent of $\tau$, and $a^\star_{h'} \sim \pistar_\tau(\cdot | s_{h'})$ for $h ' \in [h]$. Thus, we use $\pistar_\tau$ to label both the query state (which is the prediction label) and the sequence of states sampled from $\Sfrak_h$. Note that this does not require any environment interactions and hence no sampling from either $T_\tau$ or $R_\tau$.
At test-time at step $h$, this will allow us to condition on the history $\xi_{h - 1}$ of states that $M_{\theta}$ visits and the actions that it takes in those states. Formally, the learned $M_{\theta}$ is deployed as follows, given $D$.
    (1) At $h = 0$, initialize $\xi_0 = ()$ to be empty.
    (2) At step $h$, visit $s_h$ and find $a_h$ by sampling from  $M_{\theta}(\cdot | \squery, D, \xi_{h- 1})$.
    (3) Append $(s_h, a_h)$ to $\xi_{h -1}$ to get $\xi_h$.
Note for bandits and contextual bandits ($H = 1$), there is no difference between this and the original pretraining procedure of prior sections because $\xi_0$ is empty. For MDPs, the original DPT can be viewed as a convenient approximation.

We now make several assumptions to simplify the analysis. First, assume $\Dquery$, $\pretraind$, and $\Sfrak$ have sufficient support such that all conditional probabilities of $P_{pre}$ are well defined. Similar to other studies of in-context learning~\citep{xie2021explanation}, we assume $M_{\theta}$ fits the pretraining distribution exactly with enough coverage and data, so that the focus of the analysis is just the in-context learning abilities.
\begin{assumption}\label{asmp:equality}
(Learned model is consistent). Let $M_{\theta}$ denote the pretrained model. For all $(\squery, D, \xi_h)$,
we have $P_{pre}(a | \squery, D, \xi_h) = M_{\theta}( a | \squery, D, \xi_h)$ for all $a \in \Acal$.
\end{assumption}
To provide some cursory justification, if $M_{\theta}$ is the global minimizer of~\eqref{eq:pretrain-obj}, then $\E_{P_{pre}} \| P_{pre}(\cdot |\squery,  D, \xi_h)  - M_{\theta}(\cdot |  \squery, D, \xi_h) \|_1^2 \rightarrow  0$ as the number of pretraining samples $N \rightarrow \infty$ with high probability for transformer model classes of bounded complexity (see Proposition~\ref{prop:mle}). Approximate versions of the above  
 assumptions are easily possible but obfuscate the key elements of the analysis.
We also assume that the in-context dataset $D \sim \pretraind$ is compliant~\citep{jin2021pessimism}, meaning that the actions from $D$ can depend only on the observed history and not additional confounders. Note that this still allows $\pretraind$ to be very general --- it could be generated randomly or from adaptive algorithms like PPO or TS.

\begin{definition}[Compliance]\label{def:compliant}
    The in-context dataset distribution $\pretraind(\cdot; \tau)$ is \emph{compliant} if, for all $i \in [n]$, the $i$th action of the dataset, $a_i$, is conditionally independent of $\tau$ given the $i$th state $s_i$ and partial dataset, $D_{i - 1}$, so far. In other words, the distribution $\pretraind(a_i | s_i, D_{i - 1}; \tau)$ is invariant to $\tau$.
\end{definition}

Generally, $\pretraind$ can influence $M_{\theta}$. In Proposition~\ref{prop:invariance}, we show that all compliant $\pretraind$ form a sort of equivalence class that generate the same $M_{\theta}$. For the remainder, we assume all $\pretraind$ are compliant.

\subsection{Main Results}

\paragraph{Equivalence of DPT and PS.} We now state our main result which shows that the trajectories generated by a pretrained $M_{\theta}$ will follow the same distribution as those from a well-specified PS algorithm. In particular, let PS use the well-specified prior $\pretraint$. Let $\tau_c$ be an arbitrary task. 
Let $P_{ps}(\cdot \ |\  D, \tau_c)$ and $P_{M_{\theta}}(\cdot \ | \ D, \tau_c)$ denote the distributions over trajectories $\xi_H \in (\Scal \times \Acal)^{H}$ generated from running PS and $M_{\theta}(\cdot | \cdot, D, \cdot)$, respectively, in task $\tau_c$ given historical data $D$.

\begin{restatable}[DPT $\iff$ PS]{theorem}{thmmain}
\label{thm:main}
Let the above assumptions hold. Then, $P_{ps}(\xi_H  \ | \ D, \tau_c) = P_{M_{\theta}} (\xi_H \ | \ D, \tau_c)$ for all trajectories $\xi_H$.
\end{restatable}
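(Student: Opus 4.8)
The plan is to factor each trajectory law into a part governed by the true environment $\tau_c$ and a part governed by the action-selection rule, and then show the two action rules coincide. Writing $\xi_H = (s_{1:H}, a_{1:H})$, both $P_{ps}(\xi_H \mid D, \tau_c)$ and $P_{M_{\theta}}(\xi_H \mid D, \tau_c)$ share the same state-generation factor $\rho_{\tau_c}(s_1)\prod_{h=2}^H T_{\tau_c}(s_h \mid s_{h-1}, a_{h-1})$, since in both procedures the next state is drawn from the \emph{true} task $\tau_c$ and is independent of the action-selection mechanism given $(s_h,a_h)$. Hence it suffices to prove that the conditional action laws agree for every fixed state sequence, i.e. that $\prod_{h=1}^H M_{\theta}(a_h \mid s_h, D, \xi_{h-1})$ equals the posterior-sampling law $\int_\tau P_{ps}(\tau\mid D)\prod_{h=1}^H \pistar_\tau(a_h\mid s_h)\, d\tau$.

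I would carry this out in four steps. First, Assumption~\ref{asmp:equality} lets me replace each factor $M_{\theta}(a_h\mid s_h, D,\xi_{h-1})$ by the pretraining conditional $P_{pre}(a_h\mid s_h, D, \xi_{h-1})$. Second, I establish the telescoping identity
\begin{equation}
\prod_{h=1}^H P_{pre}(a_h\mid s_h, D, \xi_{h-1}) = P_{pre}(a_{1:H}\mid s_{1:H}, D),
\end{equation}
which follows from the chain rule once I verify that $P_{pre}(a_h\mid s_h, D, \xi_{h-1}) = P_{pre}(a_h \mid s_{1:H}, a_{1:h-1}, D)$, i.e. that conditioning additionally on the \emph{future} states leaves the current action law unchanged. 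Third, I decompose the same joint the other way by marginalizing over the latent task: since, conditional on $\tau$, the labels are independent draws $a_h\sim\pistar_\tau(\cdot\mid s_h)$, I obtain $P_{pre}(a_{1:H}\mid s_{1:H}, D) = \int_\tau P_{pre}(\tau\mid D)\prod_{h=1}^H \pistar_\tau(a_h\mid s_h)\,d\tau$, again using that the states carry no information about $\tau$ so that $P_{pre}(\tau\mid s_{1:H},D)=P_{pre}(\tau\mid D)$. Fourth, I identify $P_{pre}(\tau\mid D)$ with the PS posterior: by definition of $P_{pre}$ it is proportional to $\pretraint(\tau)\pretraind(D;\tau)$, and by compliance (Definition~\ref{def:compliant}, cf. Proposition~\ref{prop:invariance}) the $\tau$-dependence of $\pretraind(D;\tau)$ enters only through the environment transition and reward factors, the behavior-policy factors being $\tau$-invariant and cancelling, leaving it proportional to $\pretraint(\tau)\prod_i T_\tau(s'_i\mid s_i,a_i)R_\tau(r_i\mid s_i,a_i)$, which is exactly the posterior of PS run with prior $\pretraint$. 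Chaining the four steps equates the action laws, and combined with the shared dynamics factor this yields $P_{ps}(\xi_H\mid D,\tau_c)=P_{M_{\theta}}(\xi_H\mid D,\tau_c)$; the bandit and contextual-bandit case $H=1$ is the base case where $\xi_0$ is empty and the claim reduces to a single posterior-predictive step.

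The crux, and the step I expect to be the main obstacle, is the telescoping identity of the second step, since it is precisely here that DPT's \emph{per-step} posterior updating is reconciled with PS's \emph{commit-once-per-episode} sampling. Superficially these look different: DPT re-queries a freshly updated posterior $P_{pre}(\tau\mid D,\xi_{h-1})$ at every step, whereas PS draws a single $\tau$ at the start and never updates within the episode. The resolution is that the product of the one-step posterior-predictive distributions integrates back to the single mixture over $\tau$, and the delicate ingredient making this work is the conditional-independence claim $s_{h+1:H}\perp\tau$, which holds only because the history-dependent pretraining draws the states from $\Sfrak$ independently of the task. Verifying this independence carefully, so that future states genuinely drop out of the current posterior, is the technical heart of the argument; everything else is bookkeeping with the definition of $P_{pre}$ and the compliance cancellation.
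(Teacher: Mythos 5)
Your proposal is correct and is essentially the paper's own argument in different packaging: your Step 4 is exactly the paper's compliance lemma (Lemma~\ref{lem:compliance}, identifying $P_{pre}(\tau \mid D)$ with the PS posterior), and your telescoping identity in Steps 2--3 is precisely the paper's induction over $h$ unrolled, resting on the same key observation that the $\tau$-independent factors ($T_{\tau_c}$, $\Sfrak$, $\Dquery$) drop out of the posterior over $\tau$, leaving only the $\prod_i \pistar_\tau(a_i \mid s_i)$ likelihood terms. The only cosmetic difference is that you factor out the shared dynamics once and compare action laws globally, whereas the paper matches the joint state-action law prefix by prefix.
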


\paragraph{Regret implications.} To see this result in action, let us specialize to the finite MDP setting~\citep{osband2013more}. Suppose we pretrain $M_{\theta}$ on a distribution $\pretraint$ over MDPs with $S := |\Scal|$ and $A := |\Acal|$. Let $\pretraind$ be constructed by uniform sampling $(s_i, a_i)$ and observing $(r_i, s_i')$ for $i \in [KH]$. Let $\E\left[r_h | s_h, a_h\right] \in [0, 1]$. And let $\Dquery$ and $\Sfrak_h$ be uniform over $\Scal$ and $\Scal^h$ (for all $h$) respectively. Finally, let $\testt$ be the distribution over test tasks with the same cardinalities.
For a task $\tau$, define the online cumulative regret of DPT over $K$ episodes as   $\regret_\tau(M_{\theta}) := \sum_{k \in [K]} V_\tau(\pistar_\tau) - V_\tau(\hat \pi_k)$ where $\hat \pi_k(\cdot |s_h) = M_{\theta} (\cdot  | s_h, D_{(k - 1)}, \xi_{h - 1})$ and $D_{(k)}$ contains the first $k$ episodes collected from $\hat \pi_{1:k}$.
\begin{restatable}[Finite MDPs]{corollary}{cormdp}
\label{cor:mdp}
    Suppose that $\sup_{\tau} {\testt(\tau)}/{\pretraint(\tau)} \leq \Ccal$ for some $\Ccal > 0$. For the above MDP setting, the pretrained model $M_{\theta}$ satisfies
   $\E_{\testt} \left[ \regret_\tau(M_{\theta}) \right]  \leq \widetilde{\Ocal} (  \Ccal H^{3/2}S\sqrt{ A K }) $.
\end{restatable}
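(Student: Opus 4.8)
The plan is to reduce the regret of $\macroName$ entirely to the known Bayesian regret guarantee for posterior sampling, using Theorem~\ref{thm:main} as the bridge, and then to pay a multiplicative factor $\Ccal$ for the mismatch between $\pretraint$ and $\testt$ via a nonnegative change of measure.

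\textbf{Step 1: Lift the per-task, per-history equivalence to the full online process.} Theorem~\ref{thm:main} asserts that for a fixed task $\tau_c$ and a fixed historical dataset $D$, the one-episode trajectory distributions $P_{ps}(\cdot \mid D, \tau_c)$ and $P_{M_{\theta}}(\cdot \mid D, \tau_c)$ coincide. First I would argue by induction on the episode index $k$ that the joint distribution over the entire sequence of $K$ episodes (and hence over the accumulated datasets $D_{(0)}, D_{(1)}, \ldots$) is identical under $\macroName$ and under well-specified PS with prior $\pretraint$. The base case is $D_{(0)} = \{\}$; for the inductive step, conditioned on a common realization of $D_{(k-1)}$, both procedures draw the next episode's trajectory from the same distribution (Theorem~\ref{thm:main} with $D = D_{(k-1)}$) and append it to form $D_{(k)}$. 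Crucially, $\macroName$'s self-generated online data is compliant in the sense of Definition~\ref{def:compliant}, so by the invariance of the posterior to the compliant data-generating distribution (Proposition~\ref{prop:invariance}), Assumption~\ref{asmp:equality} still furnishes the correct posterior even though the online datasets are not uniformly sampled as in $\pretraind$. Consequently, for every fixed $\tau$ the regret $\regret_\tau(M_{\theta})$ has the same law as the regret of PS, and taking expectations over $\tau \sim \pretraint$ gives
\begin{equation}
\E_{\pretraint}\left[\regret_\tau(M_{\theta})\right] = \E_{\pretraint}\left[\regret_\tau(\text{PS})\right].
\end{equation}

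\textbf{Step 2: Change of measure.} Since $\regret_\tau(M_{\theta}) \geq 0$ and $\testt(\tau)/\pretraint(\tau) \leq \Ccal$ pointwise, and since $M_{\theta}$'s behavior given $\tau$ does not depend on whether $\tau$ was drawn from $\testt$ or $\pretraint$, importance reweighting yields
\begin{equation}
\E_{\testt}\left[\regret_\tau(M_{\theta})\right] = \E_{\pretraint}\left[\frac{\testt(\tau)}{\pretraint(\tau)}\,\regret_\tau(M_{\theta})\right] \leq \Ccal \cdot \E_{\pretraint}\left[\regret_\tau(M_{\theta})\right].
\end{equation}
Combined with Step~1, this reduces the problem to bounding the Bayesian regret of PS under the well-specified prior $\pretraint$.

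\textbf{Step 3: Invoke the PSRL regret bound.} With $\Dquery$ and $\Sfrak_h$ uniform and $\E[r_h \mid s_h, a_h] \in [0,1]$, PS with prior $\pretraint$ is exactly posterior sampling for RL in finite MDPs, so I would cite the Bayesian regret bound of~\citep{osband2013more}, which over $K$ episodes of horizon $H$ (total $T = KH$ steps) scales as $\widetilde{\Ocal}(H S \sqrt{A T}) = \widetilde{\Ocal}(H^{3/2} S \sqrt{A K})$. Substituting into Step~2 gives $\E_{\testt}[\regret_\tau(M_{\theta})] \leq \widetilde{\Ocal}(\Ccal H^{3/2} S \sqrt{A K})$, as claimed. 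The main obstacle is Step~1: Theorem~\ref{thm:main} is conditional on a single fixed dataset $D$, whereas the online regret concerns an adaptively growing dataset generated by $M_{\theta}$ itself, so the coupling must be made rigorous by checking that (i) $M_{\theta}$'s self-generated data stays compliant, so posterior-consistency applies off the pretraining data distribution, and (ii) the pretraining support (datasets up to size $KH$) is rich enough that every conditional encountered online is well-defined. Once the two sequential processes are coupled exactly, the remaining steps are a standard nonnegative change of measure and a direct appeal to the existing PSRL analysis.
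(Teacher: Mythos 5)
Your proposal is correct and follows essentially the same route as the paper's proof: invoke Theorem~\ref{thm:main} to identify $M_{\theta}$ with posterior sampling under the prior $\pretraint$ (noting that uniform-sampling $\pretraind$ is compliant), apply the Bayesian regret bound of Theorem~1 of \cite{osband2013more} to get $\E_{\pretraint}[\regret_\tau(M_{\theta})] \leq \widetilde{\Ocal}(H^{3/2}S\sqrt{AK})$, and then pay the factor $\Ccal$ via the nonnegative change of measure. Your Step~1 episode-wise coupling makes explicit what the paper treats as immediate (the paper's proof is three lines), and that extra care is sound --- though note the compliance-based posterior identification for self-generated online data is really carried by Theorem~\ref{thm:main} holding for arbitrary $D$ in the support, rather than by Proposition~\ref{prop:invariance}, which compares two pretraining distributions.
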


A similar analysis due to \cite{russo2014learning} allows us to prove why pretraining on (latently) linear bandits can lead to substantial empirical gains, even when the in-context datasets are generated by algorithms unaware of this structure. We observed this empirically in Section~\ref {sec:bandits}. Consider a similar setup as there where $\Scal$ is a singleton, $\Acal$ is finite but large, $\theta_\tau \in \RR^d$ is sampled as $\theta_{\tau} \sim \Ncal(0, I/d)$,   $\phi: \Acal \to \RR^d$ is a fixed feature map with $\sup_{a \in \Acal} \| \phi(a) \|_2 \leq 1$, and the reward of $a \in \Acal$ in task $\tau$ is distributed as $\Ncal(\dotp{\theta_\tau}{\phi(a)}, 1)$.  This time, we let $\pretraind(\cdot ; \tau)$ be given by running Thompson Sampling with Gaussian priors and likelihood functions on $\tau$.

\begin{restatable}[Latent representation learning in linear bandits]{corollary}{corlin}
\label{cor:lin} For $\testt = \pretraint$ in the above linear bandit setting, $M_{\theta}$ satisfies
   $\E_{\testt} \left[ \regret_\tau(M_{\theta}) \right]  \leq \widetilde{\Ocal} ( d\sqrt{   K }) $.
\end{restatable}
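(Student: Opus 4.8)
The plan is to turn the regret of $M_{\theta}$ into the Bayesian regret of a well-specified posterior-sampling algorithm and then invoke the linear-bandit analysis of \cite{russo2014learning}. First I would verify the hypotheses of Theorem~\ref{thm:main} in this setting: since $H = 1$, the history $\xi_{h-1}$ is always empty, so the history-dependent pretraining coincides with the original DPT; Thompson Sampling produces a compliant $\pretraind$ (its $i$th action depends only on $D_{i-1}$ and never on $\tau$ directly), and Assumption~\ref{asmp:equality} supplies the exact fit. By the invariance of compliant data distributions (Proposition~\ref{prop:invariance}), the choice of TS as the data source is irrelevant to the identity of $M_{\theta}$; only the prior $\pretraint$ matters. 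Theorem~\ref{thm:main} then gives that the action sequence generated by deploying $M_{\theta}$ online is equal in distribution to that generated by posterior sampling with the well-specified prior $\pretraint$, namely $\theta_\tau \sim \Ncal(0, I/d)$ with Gaussian likelihood, i.e., exactly Bayesian-optimal linear Thompson Sampling. This is where the apparent paradox resolves: the structure-agnostic source algorithm is washed out by invariance, and the structured $d$-dimensional prior is what drives the deployed policy.

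Because $\testt = \pretraint$, the trajectory-distribution equality of Theorem~\ref{thm:main} upgrades to an equality of expected regret, so $\E_{\testt}[\regret_\tau(M_{\theta})]$ equals the Bayesian regret of posterior sampling over $K$ pulls, with both the prior over $\theta_\tau$ and the realized task drawn from the same $\Ncal(0, I/d)$. This is the crucial use of well-specification: a mismatch between pretraining and test priors would introduce the factor $\Ccal$ appearing in Corollary~\ref{cor:mdp}, but here it is exactly $1$. It then remains to bound the Bayesian regret of posterior sampling for this $d$-dimensional Gaussian linear bandit. The argument of \cite{russo2014learning} rests on confidence-set duality: since posterior sampling draws each $a_k$ from the posterior of the optimal arm given the history, its per-step regret can be controlled by the width of any valid sequence of confidence ellipsoids, the same quantity that governs LinUCB. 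Summing these widths via the self-normalized / elliptical-potential bound yields $\widetilde{\Ocal}(d\sqrt{K})$, with the $d$ rather than $|\Acal|$ dependence coming precisely from the $d$-dimensional feature geometry encoded in the prior. Combining with the reduction gives the stated bound.

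The main obstacle is not the regret bound itself, which is essentially a citation, but verifying that the DPT-to-PS equivalence transfers cleanly to the \emph{cumulative} regret functional in the online, data-accumulating deployment. Theorem~\ref{thm:main} equates trajectory distributions for a \emph{fixed} $D$, whereas online the dataset $D_{(k-1)}$ is itself built from $M_{\theta}$'s own pulls. I would close this gap with an induction over episodes: at episode $k$, conditioned on a common accumulated history $D_{(k-1)}$, Theorem~\ref{thm:main} (trivially, since $\xi_0$ is empty) makes the next-action laws of $M_{\theta}$ and PS identical, and since the reward is then drawn from the same task $\tau_c$, the histories $D_{(k)}$ update with identical law. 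Hence the joint distribution over all $K$ (task, history, action) tuples coincides, and the expectation of the additive regret is preserved term by term. Once this bookkeeping is in place, the structured $d\sqrt{K}$ rate follows from the well-specified prior alone.
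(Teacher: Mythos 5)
Your proposal follows essentially the same route as the paper's proof: use Theorem~\ref{thm:main} to identify the online behavior of $M_{\theta}$ with posterior sampling under the well-specified prior $\pretraint$, then cite the Bayesian regret bound of Russo and Van Roy for linear bandits to get $\widetilde{\Ocal}(d\sqrt{K})$. Your episode-by-episode induction, which extends the fixed-$D$ trajectory equivalence to the online setting where $D_{(k-1)}$ is built from $M_{\theta}$'s own pulls, is bookkeeping the paper leaves implicit, and it is sound.

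The one step you gloss over, and which the paper's proof explicitly flags as the remaining thing to check, is a support condition. Here $\pretraind$ is generated by Gaussian Thompson Sampling, an adaptive algorithm, so Assumption~\ref{asmp:equality} (and Proposition~\ref{prop:invariance}, whose hypothesis requires compliant distributions \emph{with the same support}) only pins down $M_{\theta}(\cdot \mid \squery, D)$ on datasets $D$ lying in the support of $\pretraind$. At test time, the datasets $D_k$ are produced by $M_{\theta}$'s own action choices, so one must verify that every such $D_k$ has positive probability under the TS-generated pretraining distribution; otherwise the conditional $P_{pre}(\cdot \mid D_k)$ is undefined there and the equivalence with posterior sampling says nothing about those histories. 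The paper resolves this by noting that non-degenerate Gaussian TS (positive prior and likelihood variances) places positive probability on every action at every step, so for finite $K$ every reachable $D_k$ is covered. Without this check, your statement that the structure-agnostic source algorithm is ``washed out by invariance'' is not fully justified, since invariance only applies across data distributions sharing support.
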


This significantly improves over the $\widetilde{\Ocal}(\sqrt{|\Acal| K } )$ upper regret bound for TS that does not leverage the linear structure. This highlights how DPT can have provably tighter upper bounds on future bandit problems than the algorithms used to generate its (pretraining) data. Note that if there is additional structure in the tasks which yields a tighter regret bound (for example if there are only a small finite number of known MDPs in the possible distribution), that may further improve performance, such as by removing the dependence on the problem finite state, action or full d-dimensional representation. 

\paragraph{Invariance of $M_{\theta}$ to compliant $\pretraind$.} Our final result sheds light on how $\pretraind$ impacts the final DPT behavior $M_{\theta}$. Combined with Assumption~\ref{asmp:equality}, $M_{\theta}$ is invariant to $\pretraind$ satisfying Definition~\ref{def:compliant}.
\begin{restatable}{proposition}{propinvariance}
\label{prop:invariance}
Let $P_{pre}^1$ and $P_{pre}^2$ be pretraining distributions that differ only by their in-context dataset distributions, denoted by $\pretraind^1$ and $\pretraind^2$. If $\pretraind^1$ and $\pretraind^2$ are compliant with the same support, then $P_{pre}^1(a^\star| \squery, D, \xi_h) = P_{pre}^2(a^\star| \squery, D, \xi_h)$ for all $a^\star, \squery, D, \xi_h$.
\end{restatable}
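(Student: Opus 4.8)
The plan is to reduce the claim to an invariance of the \emph{task posterior} $P_{pre}(\tau \mid \squery, D, \xi_h)$, since the optimal-action label is produced by $\pistar_\tau$, which never involves $\pretraind$. Marginalizing over the latent task,
\[
P_{pre}(a^\star \mid \squery, D, \xi_h) = \sum_{\tau} \pistar_\tau(a^\star \mid \squery)\, P_{pre}(\tau \mid \squery, D, \xi_h),
\]
where I use that, conditioned on $\tau$, the query label $a^\star$ depends only on $\squery$ through $\pistar_\tau$ and is independent of $D$ and $\xi_h$. As $\pistar_\tau$ is a property of the task and not of the logging distribution, it suffices to show the posterior $P_{pre}(\tau \mid \squery, D, \xi_h)$ is identical under $\pretraind^1$ and $\pretraind^2$.

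Next I would expand the posterior with Bayes' rule against the prior $\pretraint$ and factor the likelihood of the conditioning variables. The query state $\squery \sim \Dquery$ and the states of $\xi_h$ drawn from $\Sfrak_h$ are sampled independently of $\tau$, while the labeled actions in $\xi_h$ contribute factors $\prod_{h'} \pistar_\tau(a^\star_{h'} \mid s_{h'})$; none of these involve $\pretraind$. The crux is to factor the in-context-dataset likelihood as $\pretraind(D; \tau) = g(D)\, f_\tau(D)$, where $g(D)$ collects the state- and action-selection probabilities of the logging process and $f_\tau(D) = \prod_i T_\tau(s'_i \mid s_i, a_i)\, R_\tau(r_i \mid s_i, a_i)$ collects the environment transition and reward likelihoods at the observed transitions. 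Compliance (Definition~\ref{def:compliant}) is exactly what forces each action-selection factor $\pretraind(a_i \mid s_i, D_{i-1})$ to be $\tau$-independent, so $g(D)$ does not depend on $\tau$; conversely $f_\tau(D)$ is determined by $\tau$ alone and is independent of $\pretraind$.

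With this factorization, $g(D)$, $\Dquery(\squery)$, and $\Sfrak_h(s_{1:h})$ are all constant in $\tau$ and cancel between the numerator and the normalizing denominator, leaving
\[
P_{pre}(\tau \mid \squery, D, \xi_h) = \frac{\pretraint(\tau)\, f_\tau(D) \prod_{h'} \pistar_\tau(a^\star_{h'} \mid s_{h'})}{\sum_{\tau'} \pretraint(\tau')\, f_{\tau'}(D) \prod_{h'} \pistar_{\tau'}(a^\star_{h'} \mid s_{h'})}.
\]
The right-hand side references only $\pretraint$, the environment likelihoods $f_\tau$, and $\pistar_\tau$ --- nothing specific to $\pretraind^1$ or $\pretraind^2$. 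Because the two distributions share the same support, the conditioning event has positive probability under both, this common formula applies, and the posteriors agree; substituting back into the marginalization then yields $P_{pre}^1(a^\star \mid \squery, D, \xi_h) = P_{pre}^2(a^\star \mid \squery, D, \xi_h)$.

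I expect the main obstacle to be making the factorization $\pretraind(D; \tau) = g(D)\, f_\tau(D)$ fully rigorous: I must verify that \emph{every} $\tau$-dependent factor in the dataset likelihood arises from the environment $T_\tau, R_\tau$ and not from how states are chosen, so that the entire data-generation mechanism (beyond the observed rewards and next states) is $\tau$-independent. Compliance supplies this for the actions; I would argue the state-selection factors are likewise $\tau$-independent because they are functions of the observed history $D_{i-1}$ alone, and then carefully telescope the per-step product so that $g(D)$ genuinely separates out. The shared-support hypothesis must also be invoked to guarantee the posterior is well-defined under both distributions before equality can be asserted.
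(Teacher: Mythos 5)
Your proposal is correct and follows essentially the same route as the paper's proof: marginalize the label over the latent task, reduce the claim to invariance of the task posterior $P_{pre}(\tau \mid \squery, D, \xi_h)$, and use compliance to factor the in-context dataset likelihood into a $\tau$-independent logging term (your $g(D)$, the paper's $\prod_i \pretraind(a_i \mid s_i, D_{i-1})$) times the environment likelihoods $\prod_i T_\tau(s'_i \mid s_i, a_i) R_\tau(r_i \mid s_i, a_i)$, after which the logging terms cancel in the normalized posterior. Your explicit normalized formula and the paper's proportionality-in-$\tau$ argument (two posteriors proportional and both integrating to one, hence equal) are the same idea, and your flagged concern about $\tau$-independent state-selection factors is handled the same way the paper implicitly handles it.
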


That is, if we generate in-context datasets $D$ by running various algorithms that depend only on the observed data in the current task, we will end up with the same $M_{\theta}$. For example, TS could be used for $\pretraind^1$ and PPO for $\pretraind^2$. Expert-biased datasets discussed in Section~\ref{sec:bandits} violate Definition~\ref{def:compliant}, since privileged knowledge of $\tau$ is being used. This helps explain our empirical results that pretraining on expert-biased datasets leads to a qualitatively different learned model at test-time.

\section{Discussion}

In this paper, we studied the problem of in-context decision-making. We introduced a new pretraining method and transformer model,  \macroName{}, which is trained via supervised learning to predict optimal actions given an in-context dataset of interactions. Through in-depth evaluations in classic decision problems in bandits and MDPs, we showed that this simple objective naturally gives rise to an in-context RL algorithm that is capable of online exploration and offline decision-making, unlike other algorithms that are explicitly trained or designed to do these. Our empirical and theoretical results provide first steps towards understanding these capabilities that arise from DPT and what factors are important for it to succeed. 
The inherent strength of pretraining lies in its simplicity--we can sidestep the complexities of hand-designing exploration or conservatism in RL algorithms and while simultaneously allowing the transformer to derive novel strategies that best leverage problem structure. 
These findings underscore the potential of supervised pretraining in equipping transformer models with in-context decision-making abilities.

\textbf{Limitations and future work. } 
One limitation of \macroName{} is the requirement of optimal actions at pretraining. Empirically, we find that this requirement can be relaxed by using actions generated by another RL-trained agent during pretraining, which only leads to a slight loss in performance. However, fully understanding this problem and how best to leverage multi-task decision-making datasets remains a key open problem. 
We also discussed that the practical implementation for MDPs differs from true posterior sampling. It would be interesting to further understand and bridge this empirical-theoretical gap in the future.
We also remark that our preliminary analysis shows promise for DPT to generalize to new tasks beyond its pretraining distribution. This suggests that diversifying the task distributions during pretraining could significantly enhance the model's ability to generalize to new tasks. This possibility holds an exciting avenue for future work. Finally, further investigation is required to understand the implications of these findings for existing foundation models, such as instruction-finetuned models, that are increasingly being deployed in decision-making settings~\citep{wang2023voyager}.

\begin{ack}
We thank Evan Liu, Sherry Yang, and Lucy Shi for helpful discussions and feedback. This work was supported in part by NSF grant 2112926 and ONR grant N00014-21-1-2685. JNL acknowledges support from the NSF GRFP.
\end{ack}

\bibliography{refs}

\begin{thebibliography}{10}

\bibitem{brown2020language}
Tom Brown, Benjamin Mann, Nick Ryder, Melanie Subbiah, Jared~D Kaplan, Prafulla
  Dhariwal, Arvind Neelakantan, Pranav Shyam, Girish Sastry, Amanda Askell,
  et~al.
\newblock Language models are few-shot learners.
\newblock {\em Advances in neural information processing systems},
  33:1877--1901, 2020.

\bibitem{min2022rethinking}
Sewon Min, Xinxi Lyu, Ari Holtzman, Mikel Artetxe, Mike Lewis, Hannaneh
  Hajishirzi, and Luke Zettlemoyer.
\newblock Rethinking the role of demonstrations: What makes in-context learning
  work?
\newblock {\em arXiv preprint arXiv:2202.12837}, 2022.

\bibitem{chan2022data}
Stephanie Chan, Adam Santoro, Andrew Lampinen, Jane Wang, Aaditya Singh, Pierre
  Richemond, James McClelland, and Felix Hill.
\newblock Data distributional properties drive emergent in-context learning in
  transformers.
\newblock {\em Advances in Neural Information Processing Systems},
  35:18878--18891, 2022.

\bibitem{garg2022can}
Shivam Garg, Dimitris Tsipras, Percy~S Liang, and Gregory Valiant.
\newblock What can transformers learn in-context? a case study of simple
  function classes.
\newblock {\em Advances in Neural Information Processing Systems},
  35:30583--30598, 2022.

\bibitem{razeghi2022impact}
Yasaman Razeghi, Robert~L Logan~IV, Matt Gardner, and Sameer Singh.
\newblock Impact of pretraining term frequencies on few-shot reasoning.
\newblock {\em arXiv preprint arXiv:2202.07206}, 2022.

\bibitem{akyurek2022learning}
Ekin Aky{\"u}rek, Dale Schuurmans, Jacob Andreas, Tengyu Ma, and Denny Zhou.
\newblock What learning algorithm is in-context learning? investigations with
  linear models.
\newblock {\em arXiv preprint arXiv:2211.15661}, 2022.

\bibitem{laskin2022context}
Michael Laskin, Luyu Wang, Junhyuk Oh, Emilio Parisotto, Stephen Spencer,
  Richie Steigerwald, DJ~Strouse, Steven Hansen, Angelos Filos, Ethan Brooks,
  et~al.
\newblock In-context reinforcement learning with algorithm distillation.
\newblock {\em arXiv preprint arXiv:2210.14215}, 2022.

\bibitem{xu2022prompting}
Mengdi Xu, Yikang Shen, Shun Zhang, Yuchen Lu, Ding Zhao, Joshua Tenenbaum, and
  Chuang Gan.
\newblock Prompting decision transformer for few-shot policy generalization.
\newblock In {\em International Conference on Machine Learning}, pages
  24631--24645. PMLR, 2022.

\bibitem{xu2023hyper}
Mengdi Xu, Yuchen Lu, Yikang Shen, Shun Zhang, Ding Zhao, and Chuang Gan.
\newblock Hyper-decision transformer for efficient online policy adaptation.
\newblock {\em arXiv preprint arXiv:2304.08487}, 2023.

\bibitem{sutton2018reinforcement}
Richard~S Sutton and Andrew~G Barto.
\newblock {\em Reinforcement learning: An introduction}.
\newblock MIT press, 2018.

\bibitem{levine2020offline}
Sergey Levine, Aviral Kumar, George Tucker, and Justin Fu.
\newblock Offline reinforcement learning: Tutorial, review, and perspectives on
  open problems.
\newblock {\em arXiv preprint arXiv:2005.01643}, 2020.

\bibitem{osband2013more}
Ian Osband, Daniel Russo, and Benjamin Van~Roy.
\newblock (more) efficient reinforcement learning via posterior sampling.
\newblock {\em Advances in Neural Information Processing Systems}, 26, 2013.

\bibitem{schaul2010metalearning}
Tom Schaul and J{\"u}rgen Schmidhuber.
\newblock Metalearning.
\newblock {\em Scholarpedia}, 5(6):4650, 2010.

\bibitem{bengio1990learning}
Yoshua Bengio, Samy Bengio, and Jocelyn Cloutier.
\newblock {\em Learning a synaptic learning rule}.
\newblock Citeseer, 1990.

\bibitem{fu2016one}
Justin Fu, Sergey Levine, and Pieter Abbeel.
\newblock One-shot learning of manipulation skills with online dynamics
  adaptation and neural network priors.
\newblock In {\em 2016 IEEE/RSJ International Conference on Intelligent Robots
  and Systems (IROS)}, pages 4019--4026. IEEE, 2016.

\bibitem{nagabandi2018learning}
Anusha Nagabandi, Ignasi Clavera, Simin Liu, Ronald~S Fearing, Pieter Abbeel,
  Sergey Levine, and Chelsea Finn.
\newblock Learning to adapt in dynamic, real-world environments through
  meta-reinforcement learning.
\newblock {\em arXiv preprint arXiv:1803.11347}, 2018.

\bibitem{landolfi2019model}
Nicholas~C Landolfi, Garrett Thomas, and Tengyu Ma.
\newblock A model-based approach for sample-efficient multi-task reinforcement
  learning.
\newblock {\em arXiv preprint arXiv:1907.04964}, 2019.

\bibitem{rakelly2019efficient}
Kate Rakelly, Aurick Zhou, Chelsea Finn, Sergey Levine, and Deirdre Quillen.
\newblock Efficient off-policy meta-reinforcement learning via probabilistic
  context variables.
\newblock In {\em International conference on machine learning}, pages
  5331--5340. PMLR, 2019.

\bibitem{humplik2019meta}
Jan Humplik, Alexandre Galashov, Leonard Hasenclever, Pedro~A Ortega, Yee~Whye
  Teh, and Nicolas Heess.
\newblock Meta reinforcement learning as task inference.
\newblock {\em arXiv preprint arXiv:1905.06424}, 2019.

\bibitem{zintgraf2019varibad}
Luisa Zintgraf, Kyriacos Shiarlis, Maximilian Igl, Sebastian Schulze, Yarin
  Gal, Katja Hofmann, and Shimon Whiteson.
\newblock Varibad: A very good method for bayes-adaptive deep rl via
  meta-learning.
\newblock {\em arXiv preprint arXiv:1910.08348}, 2019.

\bibitem{liu2021decoupling}
Evan~Z Liu, Aditi Raghunathan, Percy Liang, and Chelsea Finn.
\newblock Decoupling exploration and exploitation for meta-reinforcement
  learning without sacrifices.
\newblock In {\em International conference on machine learning}, pages
  6925--6935. PMLR, 2021.

\bibitem{perkins1999using}
Theodore~J Perkins, Doina Precup, et~al.
\newblock Using options for knowledge transfer in reinforcement learning.
\newblock Technical report, Citeseer, 1999.

\bibitem{gupta2018meta}
Abhishek Gupta, Russell Mendonca, YuXuan Liu, Pieter Abbeel, and Sergey Levine.
\newblock Meta-reinforcement learning of structured exploration strategies.
\newblock {\em Advances in neural information processing systems}, 31, 2018.

\bibitem{jiang2022learning}
Yiding Jiang, Evan Liu, Benjamin Eysenbach, J~Zico Kolter, and Chelsea Finn.
\newblock Learning options via compression.
\newblock {\em Advances in Neural Information Processing Systems},
  35:21184--21199, 2022.

\bibitem{finn2017model}
Chelsea Finn, Pieter Abbeel, and Sergey Levine.
\newblock Model-agnostic meta-learning for fast adaptation of deep networks.
\newblock In {\em International conference on machine learning}, pages
  1126--1135. PMLR, 2017.

\bibitem{rothfuss2018promp}
Jonas Rothfuss, Dennis Lee, Ignasi Clavera, Tamim Asfour, and Pieter Abbeel.
\newblock Promp: Proximal meta-policy search.
\newblock {\em arXiv preprint arXiv:1810.06784}, 2018.

\bibitem{duan2016rl}
Yan Duan, John Schulman, Xi~Chen, Peter~L Bartlett, Ilya Sutskever, and Pieter
  Abbeel.
\newblock Rl2: Fast reinforcement learning via slow reinforcement learning.
\newblock {\em arXiv preprint arXiv:1611.02779}, 2016.

\bibitem{wang2016learning}
Jane~X Wang, Zeb Kurth-Nelson, Dhruva Tirumala, Hubert Soyer, Joel~Z Leibo,
  Remi Munos, Charles Blundell, Dharshan Kumaran, and Matt Botvinick.
\newblock Learning to reinforcement learn.
\newblock {\em arXiv preprint arXiv:1611.05763}, 2016.

\bibitem{mishra2017simple}
Nikhil Mishra, Mostafa Rohaninejad, Xi~Chen, and Pieter Abbeel.
\newblock A simple neural attentive meta-learner.
\newblock {\em arXiv preprint arXiv:1707.03141}, 2017.

\bibitem{lu2023structured}
Chris Lu, Yannick Schroecker, Albert Gu, Emilio Parisotto, Jakob Foerster,
  Satinder Singh, and Feryal Behbahani.
\newblock Structured state space models for in-context reinforcement learning.
\newblock {\em arXiv preprint arXiv:2303.03982}, 2023.

\bibitem{yang2023foundation}
Sherry Yang, Ofir Nachum, Yilun Du, Jason Wei, Pieter Abbeel, and Dale
  Schuurmans.
\newblock Foundation models for decision making: Problems, methods, and
  opportunities.
\newblock {\em arXiv preprint arXiv:2303.04129}, 2023.

\bibitem{vaswani2017attention}
Ashish Vaswani, Noam Shazeer, Niki Parmar, Jakob Uszkoreit, Llion Jones,
  Aidan~N Gomez, {\L}ukasz Kaiser, and Illia Polosukhin.
\newblock Attention is all you need.
\newblock {\em Advances in neural information processing systems}, 30, 2017.

\bibitem{raffel2020exploring}
Colin Raffel, Noam Shazeer, Adam Roberts, Katherine Lee, Sharan Narang, Michael
  Matena, Yanqi Zhou, Wei Li, and Peter~J Liu.
\newblock Exploring the limits of transfer learning with a unified text-to-text
  transformer.
\newblock {\em The Journal of Machine Learning Research}, 21(1):5485--5551,
  2020.

\bibitem{chen2021decision}
Lili Chen, Kevin Lu, Aravind Rajeswaran, Kimin Lee, Aditya Grover, Misha
  Laskin, Pieter Abbeel, Aravind Srinivas, and Igor Mordatch.
\newblock Decision transformer: Reinforcement learning via sequence modeling.
\newblock {\em Advances in neural information processing systems},
  34:15084--15097, 2021.

\bibitem{janner2021offline}
Michael Janner, Qiyang Li, and Sergey Levine.
\newblock Offline reinforcement learning as one big sequence modeling problem.
\newblock {\em Advances in neural information processing systems},
  34:1273--1286, 2021.

\bibitem{lee2022multi}
Kuang-Huei Lee, Ofir Nachum, Mengjiao~Sherry Yang, Lisa Lee, Daniel Freeman,
  Sergio Guadarrama, Ian Fischer, Winnie Xu, Eric Jang, Henryk Michalewski,
  et~al.
\newblock Multi-game decision transformers.
\newblock {\em Advances in Neural Information Processing Systems},
  35:27921--27936, 2022.

\bibitem{reed2022generalist}
Scott Reed, Konrad Zolna, Emilio Parisotto, Sergio~Gomez Colmenarejo, Alexander
  Novikov, Gabriel Barth-Maron, Mai Gimenez, Yury Sulsky, Jackie Kay,
  Jost~Tobias Springenberg, et~al.
\newblock A generalist agent.
\newblock {\em arXiv preprint arXiv:2205.06175}, 2022.

\bibitem{brohan2022rt}
Anthony Brohan, Noah Brown, Justice Carbajal, Yevgen Chebotar, Joseph Dabis,
  Chelsea Finn, Keerthana Gopalakrishnan, Karol Hausman, Alex Herzog, Jasmine
  Hsu, et~al.
\newblock Rt-1: Robotics transformer for real-world control at scale.
\newblock {\em arXiv preprint arXiv:2212.06817}, 2022.

\bibitem{shafiullah2022behavior}
Nur~Muhammad Shafiullah, Zichen Cui, Ariuntuya~Arty Altanzaya, and Lerrel
  Pinto.
\newblock Behavior transformers: Cloning $ k $ modes with one stone.
\newblock {\em Advances in neural information processing systems},
  35:22955--22968, 2022.

\bibitem{brandfonbrener2022does}
David Brandfonbrener, Alberto Bietti, Jacob Buckman, Romain Laroche, and Joan
  Bruna.
\newblock When does return-conditioned supervised learning work for offline
  reinforcement learning?
\newblock {\em arXiv preprint arXiv:2206.01079}, 2022.

\bibitem{yang2022dichotomy}
Mengjiao Yang, Dale Schuurmans, Pieter Abbeel, and Ofir Nachum.
\newblock Dichotomy of control: Separating what you can control from what you
  cannot.
\newblock {\em arXiv preprint arXiv:2210.13435}, 2022.

\bibitem{kumar2020conservative}
Aviral Kumar, Aurick Zhou, George Tucker, and Sergey Levine.
\newblock Conservative q-learning for offline reinforcement learning.
\newblock {\em Advances in Neural Information Processing Systems},
  33:1179--1191, 2020.

\bibitem{yu2021combo}
Tianhe Yu, Aviral Kumar, Rafael Rafailov, Aravind Rajeswaran, Sergey Levine,
  and Chelsea Finn.
\newblock Combo: Conservative offline model-based policy optimization.
\newblock {\em Advances in neural information processing systems},
  34:28954--28967, 2021.

\bibitem{liu2020provably}
Yao Liu, Adith Swaminathan, Alekh Agarwal, and Emma Brunskill.
\newblock Provably good batch off-policy reinforcement learning without great
  exploration.
\newblock {\em Advances in neural information processing systems},
  33:1264--1274, 2020.

\bibitem{ghasemipour2022so}
Kamyar Ghasemipour, Shixiang~Shane Gu, and Ofir Nachum.
\newblock Why so pessimistic? estimating uncertainties for offline rl through
  ensembles, and why their independence matters.
\newblock {\em Advances in Neural Information Processing Systems},
  35:18267--18281, 2022.

\bibitem{fujimoto2019off}
Scott Fujimoto, David Meger, and Doina Precup.
\newblock Off-policy deep reinforcement learning without exploration.
\newblock In {\em International conference on machine learning}, pages
  2052--2062. PMLR, 2019.

\bibitem{kumar2019stabilizing}
Aviral Kumar, Justin Fu, Matthew Soh, George Tucker, and Sergey Levine.
\newblock Stabilizing off-policy q-learning via bootstrapping error reduction.
\newblock {\em Advances in Neural Information Processing Systems}, 32, 2019.

\bibitem{wu2019behavior}
Yifan Wu, George Tucker, and Ofir Nachum.
\newblock Behavior regularized offline reinforcement learning.
\newblock {\em arXiv preprint arXiv:1911.11361}, 2019.

\bibitem{siegel2020keep}
Noah~Y Siegel, Jost~Tobias Springenberg, Felix Berkenkamp, Abbas Abdolmaleki,
  Michael Neunert, Thomas Lampe, Roland Hafner, Nicolas Heess, and Martin
  Riedmiller.
\newblock Keep doing what worked: Behavioral modelling priors for offline
  reinforcement learning.
\newblock {\em arXiv preprint arXiv:2002.08396}, 2020.

\bibitem{liu2019off}
Yao Liu, Adith Swaminathan, Alekh Agarwal, and Emma Brunskill.
\newblock Off-policy policy gradient with state distribution correction.
\newblock {\em UAI}, 2019.

\bibitem{li2020focal}
Lanqing Li, Rui Yang, and Dijun Luo.
\newblock Focal: Efficient fully-offline meta-reinforcement learning via
  distance metric learning and behavior regularization.
\newblock {\em arXiv preprint arXiv:2010.01112}, 2020.

\bibitem{mitchell2021offline}
Eric Mitchell, Rafael Rafailov, Xue~Bin Peng, Sergey Levine, and Chelsea Finn.
\newblock Offline meta-reinforcement learning with advantage weighting.
\newblock In {\em International Conference on Machine Learning}, pages
  7780--7791. PMLR, 2021.

\bibitem{dorfman2021offline}
Ron Dorfman, Idan Shenfeld, and Aviv Tamar.
\newblock Offline meta reinforcement learning--identifiability challenges and
  effective data collection strategies.
\newblock {\em Advances in Neural Information Processing Systems},
  34:4607--4618, 2021.

\bibitem{pong2022offline}
Vitchyr~H Pong, Ashvin~V Nair, Laura~M Smith, Catherine Huang, and Sergey
  Levine.
\newblock Offline meta-reinforcement learning with online self-supervision.
\newblock In {\em International Conference on Machine Learning}, pages
  17811--17829. PMLR, 2022.

\bibitem{russo2018tutorial}
Daniel~J Russo, Benjamin Van~Roy, Abbas Kazerouni, Ian Osband, Zheng Wen,
  et~al.
\newblock A tutorial on thompson sampling.
\newblock {\em Foundations and Trends{\textregistered} in Machine Learning},
  11(1):1--96, 2018.

\bibitem{thompson1933likelihood}
William~R Thompson.
\newblock On the likelihood that one unknown probability exceeds another in
  view of the evidence of two samples.
\newblock {\em Biometrika}, 25(3-4):285--294, 1933.

\bibitem{auer2002finite}
Peter Auer, Nicolo Cesa-Bianchi, and Paul Fischer.
\newblock Finite-time analysis of the multiarmed bandit problem.
\newblock {\em Machine learning}, 47:235--256, 2002.

\bibitem{xiao2021optimality}
Chenjun Xiao, Yifan Wu, Jincheng Mei, Bo~Dai, Tor Lattimore, Lihong Li, Csaba
  Szepesvari, and Dale Schuurmans.
\newblock On the optimality of batch policy optimization algorithms.
\newblock In {\em International Conference on Machine Learning}, pages
  11362--11371. PMLR, 2021.

\bibitem{jin2021pessimism}
Ying Jin, Zhuoran Yang, and Zhaoran Wang.
\newblock Is pessimism provably efficient for offline rl?
\newblock In {\em International Conference on Machine Learning}, pages
  5084--5096. PMLR, 2021.

\bibitem{abbasi2011improved}
Yasin Abbasi-Yadkori, D{\'a}vid P{\'a}l, and Csaba Szepesv{\'a}ri.
\newblock Improved algorithms for linear stochastic bandits.
\newblock In {\em Advances in Neural Information Processing Systems}, pages
  2312--2320, 2011.

\bibitem{rashidinejad2021bridging}
Paria Rashidinejad, Banghua Zhu, Cong Ma, Jiantao Jiao, and Stuart Russell.
\newblock Bridging offline reinforcement learning and imitation learning: A
  tale of pessimism.
\newblock {\em Advances in Neural Information Processing Systems},
  34:11702--11716, 2021.

\bibitem{chevalier2018miniworld}
Maxime Chevalier-Boisvert.
\newblock Miniworld: Minimalistic 3d environment for rl and robotics research,
  2018.

\bibitem{schulman2017proximal}
John Schulman, Filip Wolski, Prafulla Dhariwal, Alec Radford, and Oleg Klimov.
\newblock Proximal policy optimization algorithms.
\newblock {\em arXiv preprint arXiv:1707.06347}, 2017.

\bibitem{xie2021explanation}
Sang~Michael Xie, Aditi Raghunathan, Percy Liang, and Tengyu Ma.
\newblock An explanation of in-context learning as implicit bayesian inference.
\newblock {\em arXiv preprint arXiv:2111.02080}, 2021.

\bibitem{russo2014learning}
Daniel Russo and Benjamin Van~Roy.
\newblock Learning to optimize via posterior sampling.
\newblock {\em Mathematics of Operations Research}, 39(4):1221--1243, 2014.

\bibitem{wang2023voyager}
Guanzhi Wang, Yuqi Xie, Yunfan Jiang, Ajay Mandlekar, Chaowei Xiao, Yuke Zhu,
  Linxi Fan, and Anima Anandkumar.
\newblock Voyager: An open-ended embodied agent with large language models.
\newblock {\em arXiv preprint arXiv:2305.16291}, 2023.

\bibitem{von2022transformers}
Johannes von Oswald, Eyvind Niklasson, Ettore Randazzo, Jo{\~a}o Sacramento,
  Alexander Mordvintsev, Andrey Zhmoginov, and Max Vladymyrov.
\newblock Transformers learn in-context by gradient descent.
\newblock {\em arXiv preprint arXiv:2212.07677}, 2022.

\bibitem{olsson2022context}
Catherine Olsson, Nelson Elhage, Neel Nanda, Nicholas Joseph, Nova DasSarma,
  Tom Henighan, Ben Mann, Amanda Askell, Yuntao Bai, Anna Chen, et~al.
\newblock In-context learning and induction heads.
\newblock {\em arXiv preprint arXiv:2209.11895}, 2022.

\bibitem{kirsch2022general}
Louis Kirsch, James Harrison, Jascha Sohl-Dickstein, and Luke Metz.
\newblock General-purpose in-context learning by meta-learning transformers.
\newblock {\em arXiv preprint arXiv:2212.04458}, 2022.

\bibitem{shin2022effect}
Seongjin Shin, Sang-Woo Lee, Hwijeen Ahn, Sungdong Kim, HyoungSeok Kim, Boseop
  Kim, Kyunghyun Cho, Gichang Lee, Woomyoung Park, Jung-Woo Ha, et~al.
\newblock On the effect of pretraining corpora on in-context learning by a
  large-scale language model.
\newblock {\em arXiv preprint arXiv:2204.13509}, 2022.

\bibitem{li2023transformers}
Yingcong Li, M~Emrullah Ildiz, Dimitris Papailiopoulos, and Samet Oymak.
\newblock Transformers as algorithms: Generalization and implicit model
  selection in in-context learning.
\newblock {\em arXiv preprint arXiv:2301.07067}, 2023.

\bibitem{wies2023learnability}
Noam Wies, Yoav Levine, and Amnon Shashua.
\newblock The learnability of in-context learning.
\newblock {\em arXiv preprint arXiv:2303.07895}, 2023.

\bibitem{abernethy2023mechanism}
Jacob Abernethy, Alekh Agarwal, Teodor~V Marinov, and Manfred~K Warmuth.
\newblock A mechanism for sample-efficient in-context learning for sparse
  retrieval tasks.
\newblock {\em arXiv preprint arXiv:2305.17040}, 2023.

\bibitem{agrawal2017near}
Shipra Agrawal and Navin Goyal.
\newblock Near-optimal regret bounds for thompson sampling.
\newblock {\em Journal of the ACM (JACM)}, 64(5):1--24, 2017.

\bibitem{strens2000bayesian}
Malcolm Strens.
\newblock A bayesian framework for reinforcement learning.
\newblock In {\em ICML}, volume 2000, pages 943--950, 2000.

\bibitem{agrawal2017optimistic}
Shipra Agrawal and Randy Jia.
\newblock Optimistic posterior sampling for reinforcement learning: worst-case
  regret bounds.
\newblock {\em Advances in Neural Information Processing Systems}, 30, 2017.

\bibitem{lu2017ensemble}
Xiuyuan Lu and Benjamin Van~Roy.
\newblock Ensemble sampling.
\newblock {\em Advances in neural information processing systems}, 30, 2017.

\bibitem{osband2016deep}
Ian Osband, Charles Blundell, Alexander Pritzel, and Benjamin Van~Roy.
\newblock Deep exploration via bootstrapped dqn.
\newblock {\em Advances in neural information processing systems}, 29, 2016.

\bibitem{osband2018randomized}
Ian Osband, John Aslanides, and Albin Cassirer.
\newblock Randomized prior functions for deep reinforcement learning.
\newblock {\em Advances in Neural Information Processing Systems}, 31, 2018.

\bibitem{paszke2019pytorch}
Adam Paszke, Sam Gross, Francisco Massa, Adam Lerer, James Bradbury, Gregory
  Chanan, Trevor Killeen, Zeming Lin, Natalia Gimelshein, Luca Antiga, et~al.
\newblock Pytorch: An imperative style, high-performance deep learning library.
\newblock {\em Advances in neural information processing systems}, 32, 2019.

\bibitem{radford2019language}
Alec Radford, Jeffrey Wu, Rewon Child, David Luan, Dario Amodei, Ilya
  Sutskever, et~al.
\newblock Language models are unsupervised multitask learners.
\newblock {\em OpenAI blog}, 1(8):9, 2019.

\bibitem{raffin2021stable}
Antonin Raffin, Ashley Hill, Adam Gleave, Anssi Kanervisto, Maximilian
  Ernestus, and Noah Dormann.
\newblock Stable-baselines3: Reliable reinforcement learning implementations.
\newblock {\em The Journal of Machine Learning Research}, 22(1):12348--12355,
  2021.

\bibitem{zintgraf2020varibad}
Luisa Zintgraf, Kyriacos Shiarlis, Maximilian Igl, Sebastian Schulze, Yarin
  Gal, Katja Hofmann, and Shimon Whiteson.
\newblock Varibad: A very good method for bayes-adaptive deep rl via
  meta-learning.
\newblock In {\em International Conference on Learning Representation (ICLR)},
  2020.

\bibitem{agarwal2020flambe}
Alekh Agarwal, Sham Kakade, Akshay Krishnamurthy, and Wen Sun.
\newblock Flambe: Structural complexity and representation learning of low rank
  mdps.
\newblock {\em Advances in neural information processing systems},
  33:20095--20107, 2020.

\end{thebibliography}
\bibliographystyle{unsrt}

\newpage
\appendix

\section*{Additional Related Work}

\textbf{In-context learning.} Beyond decision-making and reinforcement learning, our approach takes inspiration from general in-context learning, a phenomenon observed most prominently in large language models in which large-scale autoregressive modelling can surprisingly lead to a model that exhibits meta-learning capabilities~\citep{brown2020language}. Recently, there has been great interest in understanding the capabilities and properties of in-context learning~\citep{garg2022can,von2022transformers,razeghi2022impact,olsson2022context,kirsch2022general,shin2022effect,li2023transformers,wies2023learnability,akyurek2022learning,abernethy2023mechanism}. While a common hypothesis suggests that this phenomenon is due to properties of the data used to train large language models~\citep{chan2022data}, our work suggests that this phenomenon can also be encouraged in general settings via adjustments to the pre-training objective. In fact, DPT could be interpreted as explicitly encouraging the ability to perform Bayesian inference, which is a popular explanation for the mechanism behind in-context learning for large language models~\citep{xie2021explanation}. 

\textbf{Posterior Sampling.}  Posterior sampling originates from the seminal work of~\cite{thompson1933likelihood}, and has been popularized and thoroughly investigated in recent years by a number of authors~\cite{russo2018tutorial,agrawal2017near,strens2000bayesian,osband2013more,agrawal2017optimistic,russo2014learning}. For bandits, it is often referred to as Thompson Sampling, but the framework is easily generalizable to RL. The principle is as follows: begin with a prior over possible models (i.e. reward and transition functions), and maintain a posterior distribution over models by updating as new interactions are made. At decision-time, sample a model from the posterior and execute its optimal policy. The aforementioned prior works have developed strong theoretical guarantees on Bayesian and frequentist regret for posterior sampling. Despite its desirable theoretical characteristics, a major limitation is that computing the posterior is often computationally intractable, leading practitioners to rely on approximation-based solutions~\citep{lu2017ensemble,osband2016deep,osband2018randomized}. 
In Section~\ref{sec:theory}, we show that a version of the DPT model learned from pretraining can be viewed as implementing posterior sampling as it should be without resorting to approximations or deriving complicated posterior updates. Instead, the posterior update is implicitly learned through pretraining to predict the optimal action. This suggests that in-context learning (or meta-learning more generally) could be a key in unlocking practically applicable posterior sampling for RL.

\section{Implementation and Experiment Details}
\label{app:experiment-details}

\begin{algorithm}
\caption{Decision-Pretrained Transformer (detailed)} 
\begin{algorithmic}[1]
		\STATE \algcomment{Collecting pretraining dataset}
        \STATE Initialize empty dataset $\Bcal$
        \FOR{$i$ in $[N]$}
            \STATE Sample task $\tau \sim \pretraint$
            \STATE Sample interaction dataset $D \sim \pretraind (\cdot ; \tau)$ of length $n$
            \STATE Sample $\squery \sim \Dquery$ and $a^\star \sim \pistar_\tau(\cdot | \squery)$
            \STATE Add $(\squery, D, a^\star)$ to $\Bcal$
        \ENDFOR
        \STATE \algcomment{Training model on dataset}
        \STATE Initialize model $M_{\theta}$ with parameters $\theta$
        \WHILE{not converged}
            \STATE Sample $(\squery, D, a^\star)$ from $\Bcal$
            \STATE Predict $\hat p_j(\cdot) = M_{\theta}(\cdot  | \squery, D_j)$ for all $j \in [n]$.
            \STATE Compute loss in~\eqref{eq:app-loss} with respect to $a^\star$ and backpropagate to update $\theta$.
        \ENDWHILE
        
\end{algorithmic}
\end{algorithm}

\begin{algorithm}
\caption{Offline test-time deployment (detailed)} 
\begin{algorithmic}[1]

        \STATE \algcomment{Task and offline dataset are generated without learner's control}
        \STATE Sample unknown task $\tau \sim \testt$
        \STATE Sample dataset $D \sim \testd(\cdot; \tau)$
        \STATE \algcomment{Deploying offline policy $M_{\theta}(\cdot | \cdot, D)$}
        \STATE $s_1 = \texttt{reset}(\tau)$
        \FOR{$h$ in $[H]$}
            \STATE $a_h = \argmax_{a \in \Acal} M_{\theta} (\cdot | s_h, D)$ \algcomment{Most likely action}
            \STATE $s_{h + 1}, r_h = \texttt{step}(\tau, a_h)$
        \ENDFOR
\end{algorithmic}
\end{algorithm}

\begin{algorithm}
\caption{Online test-time deployment (detailed)} 
\begin{algorithmic}[1]
        \STATE \algcomment{Online, dataset is empty as learning is from scratch}
        \STATE Initialize $D = \{\}$
        \STATE Sample unknown task $\tau \sim \testt$
        \FOR{$\texttt{ep}$ in $\texttt{max\_eps}$}
            \STATE $s_1 = \texttt{reset}(\tau)$
            \FOR{$h$ in $[H]$}
                \STATE $a_h \sim M_{\theta}(\cdot | s_h, D)$ \algcomment{Sample action from predicted distribution}
                \STATE $s_{h + 1}, r_h = \texttt{step}(\tau, a_h)$
            \ENDFOR
            \STATE \algcomment{Experience from previous episode added to dataset}
            \STATE Add $(s_1, a_1, r_1, \ldots)$ to $D$
        \ENDFOR
            
\end{algorithmic}
\end{algorithm}

\subsection{DPT Architecture: Formal Description}
\label{app:implementation}

In this section, we provide a detailed description of the architecture alluded to in Section~\ref{sec:in-context} and Figure~\ref{fig:model}. See hyperparameter details for models in their respective sections. The model is implemented in Python with PyTorch~\citep{paszke2019pytorch}. The backbone of the transformer architecture we use is an autoregressive GPT-2 model from the HuggingFace \texttt{transformers} library. 

For the sake of exposition, we suppose that $\Scal$ and $\Acal$ are subsets of $\RR^{d_S}$ and $\RR^{d_A}$ respectively. We handle discrete state and action spaces with one-hot encoding. Consider a single training datapoint derived from an (potentially unknown) task $\tau$: we have a dataset $D$ of interactions within $\tau$, a query state $\squery$, and its corresponding optimal action $a^\star = \pistar_\tau(\squery)$.
We construct the embeddings to be passed to the GPT-2 backbone in the following way. From the dataset $D = \{ (s_j, a_j, s'_j, r_j) \}_{j \in [n]}$, we construct vectors $\xi_j = (s_j, a_j, s'_j, r_j)$ by stacking the elements of the transition tuple into dimension $d_{\xi} := 2d_S + d_A + 1$ for each $j$ in the sequence. This sequence of $n$ elements is concatenated with another vector $v := (\squery, \zero)$ where the $\zero$ vector is a vector of zeros of sufficient length to make the entire element dimension $d_{\xi}$. The $(n+1)$-length sequence is given by $X = (v, \xi_1, \ldots, \xi_n)$. As order does not often matter for the dataset $D$\footnote{This is not always true such as when data comes from an algorithm such as PPO or Thompson Sampling.}, we do not use positional encoding in order to take advantage of this invariance.
We first apply a linear layer $\texttt{Linear}(X)$ and pass the result to the transformer, which outputs the sequence $Y = (\hat y_0, \hat y_1, \ldots, \hat y_n)$.  In the continuous action case, these can be used as is for predictions of $a^\star$. For the discrete action case, we use them as logits to be converted to either a distribution over actions in $\Acal$ or one-hot vector predictions of $a^\star$. Here, we compute action probabilities
\begin{align}
    \hat p_j = \texttt{softmax}(\hat y_j) \in \Delta(\Acal)
\end{align}
Because of the GPT-2 causal architecture (we defer details to the original papers~\citep{radford2019language,brown2020language}), we note that $\hat p_j$ depends only on $\squery$ and the partial dataset $D_{j} = \{(s_k, a_k, s_k', r_k)\}_{k \in [j]}$, which is why we write the model notation,
\begin{align}
    M_{\theta}(\cdot | \squery, D_j) = \hat p_j(\cdot),
\end{align}
to denote that the predicted probabilities of the $j$th element only depend on $D_j$ and not the entire $D$ for the model $M$ with parameters $\theta \in \Theta$. For example, with $j = 0$, the prediction of $a^\star$ is made without any contextual information about the task $\tau$ except for $\squery$, which can be interpreted as the prior over $a^\star$.
We measure loss of this training example via the cross entropy for each $j \in [n]$:
\begin{align}\label{eq:app-loss}
-\sum_{j \in [n]} \log \hat p_{j}(a^\star)
\end{align}

\paragraph{Intuition.} Elements of the inputs sequence $X$ represent transitions in the environment. When passed through the GPT-2 transformer, the model learns to associate elements of the sequence via the standard query-key-value mechanism of the attention model. The query state $\squery$ is demarcated by its zeros vector (which also acts as padding). Unlike other examples of transformers used for decision-making such as the Decision Transformer~\citep{chen2021decision} and Algorithm Distillation~\citep{laskin2022context}, DPT does not separate the individual $(s, a, s', r)$ into their own embeddings to be made into one long sequence. This is because we view the transition tuples in the dataset as their own singletons, to be related with other singletons in the dataset through the attention mechanism. We note that there are various other implementation variations one could take, but we found success and robustness with this one.

\subsection{Implementation Details}
\label{app:comparisons}

\subsubsection{Bandit algorithms}
First, we describe the comparisons from the bandit experiments with hyperparameters.

\paragraph{Empirical Mean (Emp).} Emp has no hyperparameters, but we give it some mechanism to avoid degenerate scenarios.  In the offline setting, Emp will only choose from actions that have at least one example in the dataset. This gives Emp and LCB-style effect when actions are missing. Similarly, online, Emp will sample each action at least once before defaulting to its real strategy. These changes only improve Emp.

\paragraph{Upper Confidence Bound (UCB).}
According to the Hoeffding bound, we choose actions as
$\hat a \in \argmax_{a \in \Acal} \left\{  \hat \mu_a + \sqrt{1 / n_a} \right\}$
where $\hat \mu_a$ is the empirical mean so far for action $a$ and $n_a$ is the number of times $a$ has been chosen so far. To arrive at this constant for the bonus, we coarsely tried a set of plausible values given the noise and found this to perform the best.

\paragraph{Lower Confidence Bound (LCB).}
We choose actions as    $\hat a \in \argmax_{a \in \Acal} \left\{  \hat \mu_a - \sqrt{1 / n_a} \right\}$
where $\hat \mu_a$ is the empirical mean so far for action $a$ and $n_a$ is the number of times $a$ has been chosen so far.

\paragraph{Thompson Sampling (TS).} Since the means are sampled uniformly from $[0, 1]$, Gaussian TS is partially misspecified; however, we set prior mean and variance to $\frac{1}{2}$ and $\frac{1}{12}$ to match the true ones. The noise model was well-specified with the correct variance. In the linear experiments of Figure~\ref{fig:struct-offline} and Figure~\ref{fig:struct-online}, we set the prior mean and variance to $0$ and $1$ to fit the true ones better.

\paragraph{LinUCB.} We choose $\hat a_t \in \argmax_{a \in \Acal} \dotp{ \hat \theta_t}{\phi(a)} + \beta \| \phi(a) \|_{\hat \Sigma^{-1}_t}$ where $\beta = 1$ and $\hat \Sigma_t = I + \sum_{s \in [t- 1] }  \phi(a_s)\phi(a_s)^\top$ and $\hat \theta_t = \hat \Sigma^{-1}_t \sum_{s \in [t - 1]} r_s \phi(a_s)$. Here, $r_s$ and $a_s$ are the reward and action observed at time $s$.

\paragraph{LinReg.} LinReg (offline) is the same as LinUCB except we set $\beta = 0$ to greedily choose actions.

\paragraph{\macroName{}.} The transformer for \macroName{} has an embedding size of $32$, context length of $500$ for basic bandits and $200$ for linear bandits, $4$ hidden layers, and $4$ attention heads per attention layer for all bandits. We use the AdamW optimizer with weight decay 1e-4, learning rate 1e-4, and batch-size 64. For all experiments, we shuffle the in-context dataset $D$ since order does not matter except in the linear bandit.

\subsubsection{RL Algorithms}

Below, we describe the comparisons from the MDP experiments and their hyperparameters.

\paragraph{Proximal Policy Optimization (PPO).} The reported results for PPO use the Stable Baselines3 implementation~\citep{raffin2021stable} with the default hyperparameters, which successfully learns each task given $100$K environment steps in Dark Room and $125$K environment steps in Miniworld. In Dark Room, the policy is implemented as a multi-layer perceptron with two hidden layers of $64$ units each. In Miniworld, the policy is a convolutional neural network with two convolutional layers with $16$ $3 \times 3$ kernels each, followed by a linear layer with output dimension of $8$. 

\paragraph{Algorithm Distillation (AD).} We first collect learning histories with PPO for each of the training tasks. Then, given a cross-episodic context of length $H$, where $H$ is the task horizon, the model is trained to predict the actions taken $K$ episodes later (given the states visited in that episode). This was shown to lead to faster algorithms in \citep{laskin2022context}. We evaluated AD across different values of $K$. Between $K = 10, 50, 100$, we found $K = 100$ to be most performant in the Dark Room environment. In Miniworld, we also subsampled with $K = 100$. In Dark Room, the transformer has similar hyperparameters as \macroName{}: an embedding size of $32$, context length of $100$ steps, $4$ hidden layers, and $4$ attention heads per attention layer. In Miniworld, as with \macroName{}, we first encode the image with a convolutional network with two convolutional layers with $16$ $3 \times 3$ kernels each, followed by a linear layer with output dimension of $8$.

\paragraph{$\text{RL}^2$.} The reported results for $\text{RL}^2$ use an open-sourced implementation from~\citep{zintgraf2020varibad}. The implementation uses PPO as the RL algorithm and defines a single trial as four consecutive episodes. The policy is implemented with one hidden layer of $32$ units in Dark Room. In Miniworld, the policy is parameterized with a convolutional neural network with two convolutional layers with $16$ $3 \times 3$ kernels each, followed by a linear layer with output dimension of $8$.

\paragraph{\macroName{}.} The transformer for \macroName{} has an embedding size of $32$, context length of $100$ steps, $4$ hidden layers, and $4$ attention heads per attention layer in Dark Room. In Miniworld, the image is first passed through a convolutional network with two convolutional layers $16$ $3 \times 3$ kernels each, followed by a linear layer with output dimension of $8$. The transformer model that processes these image embeddings otherwise has the same hyperparameters as in Dark Room. We use the AdamW optimizer with weight decay 1e-4, learning rate 1e-3, and batch-size $128$.

\subsection{Bandit Pretraining and Testing}
\label{app:bandit-pretrain-data}

\paragraph{Basic Bandit. }
Offline, to generate the in-context datasets for pretraining, we used a Dirichlet distribution to sample action frequencies in order to generate datasets with diverse compositions (i.e. some more uniform, some that only choose a few actions, etc.):
$p_1 \sim \text{Dir}(\one)$ 
where $p_1 \in \Delta(\Acal)$ and $\one \in \RR^{|\Acal|}$. We also mixed this with a distribution that has all mass on one action: $\hat a \sim \unif(\Acal)$ and $p_2(\hat a) = 1$ and $p_2(a) = 0$ for all $a \neq \hat a$. The final action distribution is $p = (1 - \omega)  p_1 + \omega p_2$ where $\omega \sim \unif(0.1 [10])$.  We train on 100,000 pretraining samples for 300 epochs with an 80/20 train/validation split. In Figure~\ref{fig:basic-offline}, $\testd$ is generated in the same way.

\paragraph{Expert-Biased Bandit.} To generate expert-biased datasets for pretraining, we compute the action frequencies to bias the dataset towards the optimal action. Let $a^\star$ be the optimal one. As before, we take $p_1 \sim \text{Dir}(\one)$. Then, $p_2(a^\star) = 1$ and $p_2(a) = 0$ for all $a \neq a^\star$. For of bias of $\omega$, we take $p = (1 - \omega) p_1 + \omega p_2$ with $\omega \sim \unif(0.1 [10])$. We use the same pretraining sample size and epochs as before. For testing, $\testd$ is generated the same way except we fix a particular $\omega \in \{0, 0.5, 1\}$ to test on.

\paragraph{Linear Bandit.}
We consider the case where $|\Acal| = 10$ and $d = 2$. To generate environments from $\pretraint$, we first sampled a fixed set of actions from $\Ncal(\zero, I_d / d)$ in $\RR^d$ to represent the features. Then, for each $\tau$, we sampled $\theta_\tau \sim \Ncal(\zero, I_d / d)$ to produce the means $\mu_a = \dotp{\theta_\tau}{\phi(a)}$ for $a \in \Acal$. To generate the in-context dataset, we ran Gaussian TS (which does not leverage $\phi$) over $n = 200$ steps (see hyperparameters in previous section). Because order matters, we did not shuffle and used $1,000,000$ pretraining samples over 200 epochs with an 80/20 train/validation split. At test time, we set $\testt = \pretraint$ and $\testd = \pretraind$. Note that $\phi$ is fixed over all $\tau$, as is standard for a linear bandit.

\subsection{MDP Environment Details}
\label{app:environments}

\paragraph{Dark Room.} The agent must navigate a $10 \times 10$ grid to find the goal within $H = 100$ steps. The agent's observation is its $xy$-position, the allowed actions are left, right, up, down, and stay, and the reward is only $r=1$ when the agent is at the goal, and $r=0$ otherwise. At test time, the agent begins at the $(0, 0)$ position. We randomly designate $80$ of the $100$ grid squares to be goals for the training tasks, and hold out the remaining $20$ for evaluation. 

\paragraph{Miniworld.} The agent must navigate to the correct box, which is initially unknown, from $25 \times 25$ RGB image observations. The agent is additionally conditioned on its own direction vector. In each episode, the environment is initialized with four boxes of different colors, one in each corner of the square room. The agent can turn left, turn right, or move forward. The reward is only $r=1$ when the agent is near the correct box and $r=0$ otherwise, and each episode is $50$ time-steps long. At test time, the agent begins in the middle of the room.

\subsection{MDP Pretraining Datasets}
\label{app:pretrain-data}

\paragraph{Dark Room.} In Dark Room, we collect $100$K in-context datasets, each of length $H = 100$ steps, with a uniform-random policy. The $100$K datasets are evenly collected across the $100$ goals. The query states are uniformly sampled from the state space, and the optimal actions are computed as follows: move up/down until the agent is on the same $y$-position as the goal, then move left/right until the agent is on the $x$-position as the goal. Of the $100$K collections of datasets, query states, and optimal actions, we use the first $80$K (corresponding to the first $80$ goals) for training and the remaining $20$K for validation.

\paragraph{Miniworld.} While this task is solved from image-based observations, we also note that there are only four distinct tasks (one for each colored box), and the agent does not need to handle new tasks at test time. Hence, the number of in-context datasets required in pretraining is fewer -- we use $40$K datasets each of length $H = 50$ steps. So as to reduce computation, the in-context datasets only have only $(s, a, r)$ tuples. The query states, which consist of image and direction are sampled uniformly from the entire state space, i.e., the agent is place uniformly at random in the environment, pointing in a random direction. The optimal actions are computed as follows: turn towards the correct box if the agent is not yet facing it (within $\pm 15$ degrees), otherwise move forward. Of the $40$K collections of datasets, query states, and optimal actions, we use $32$K for training and the remaining $8$K for validation.

\section{Additional Experimental Results}
\label{app:more-exps}

\begin{figure}
    \centering
    \begin{subfigure}{0.32\textwidth}
        \centering
        \includegraphics[width=\linewidth]{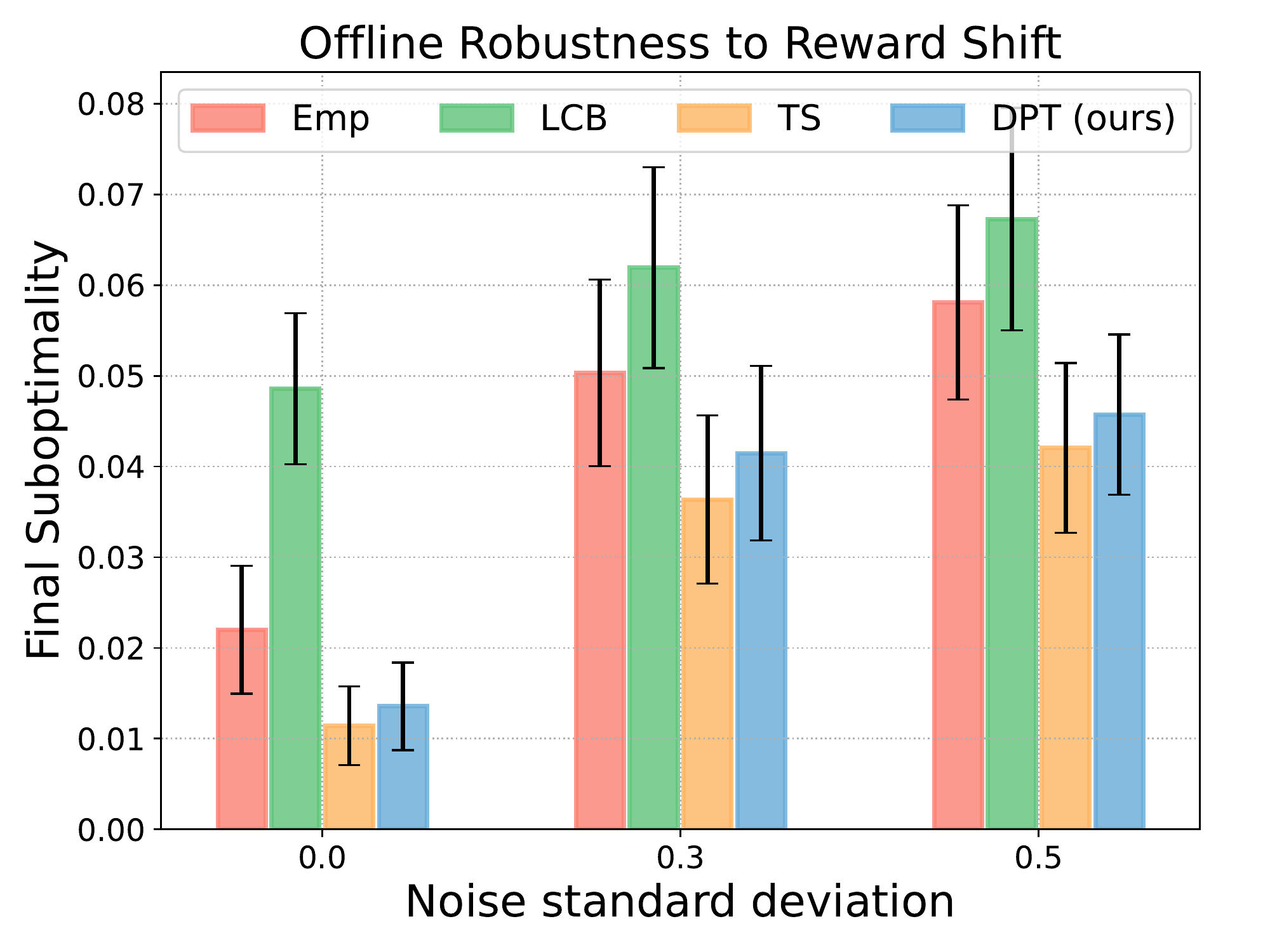}
        \caption{}
        \label{fig:vars-offline}
    \end{subfigure}
    \begin{subfigure}{0.32\textwidth}
        \centering
        \includegraphics[width=\linewidth]{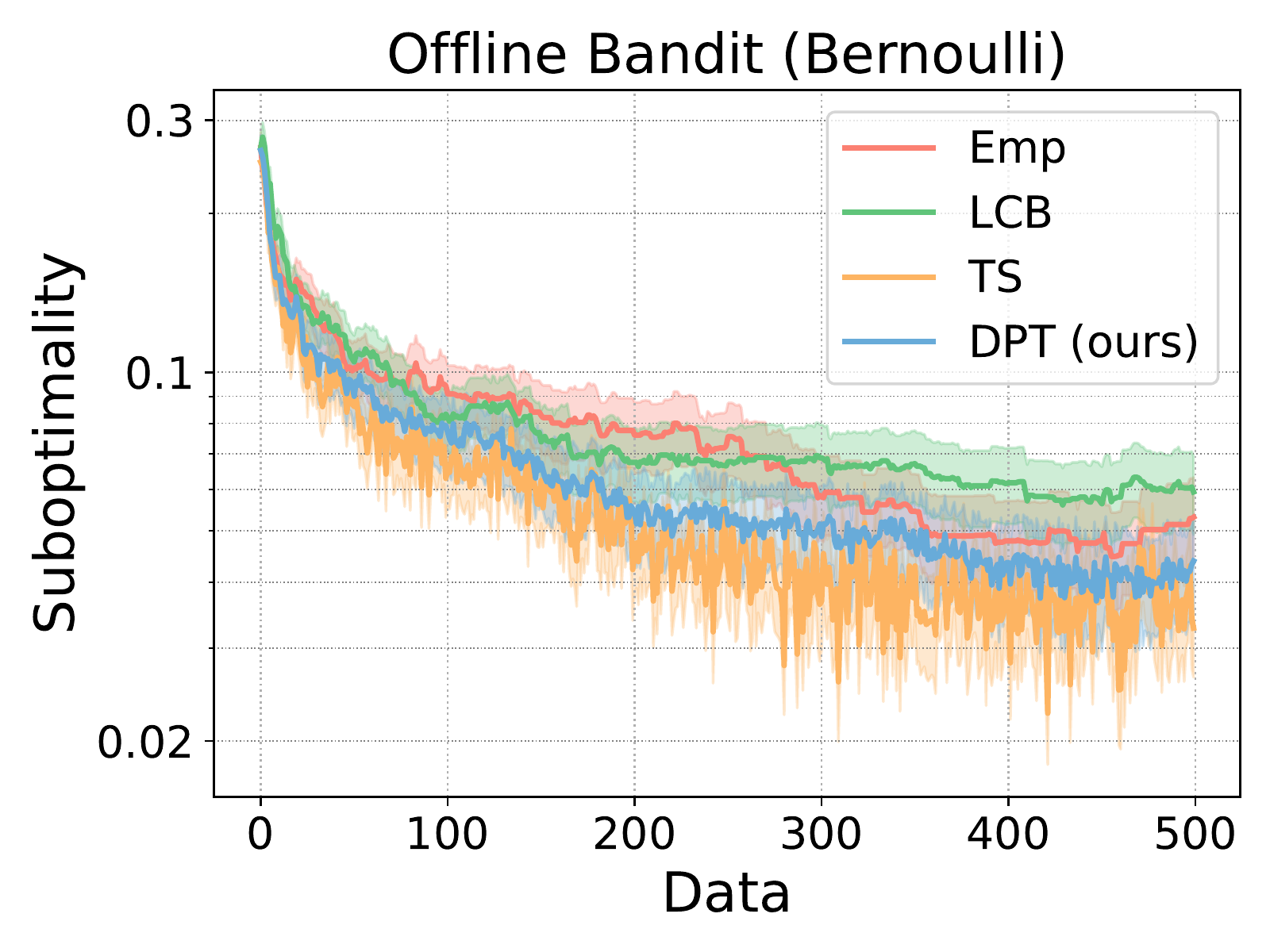}
        \caption{}
        \label{fig:bernoulli-offline}
    \end{subfigure}
    \begin{subfigure}{0.32\textwidth}
        \centering
        \includegraphics[width=\linewidth]{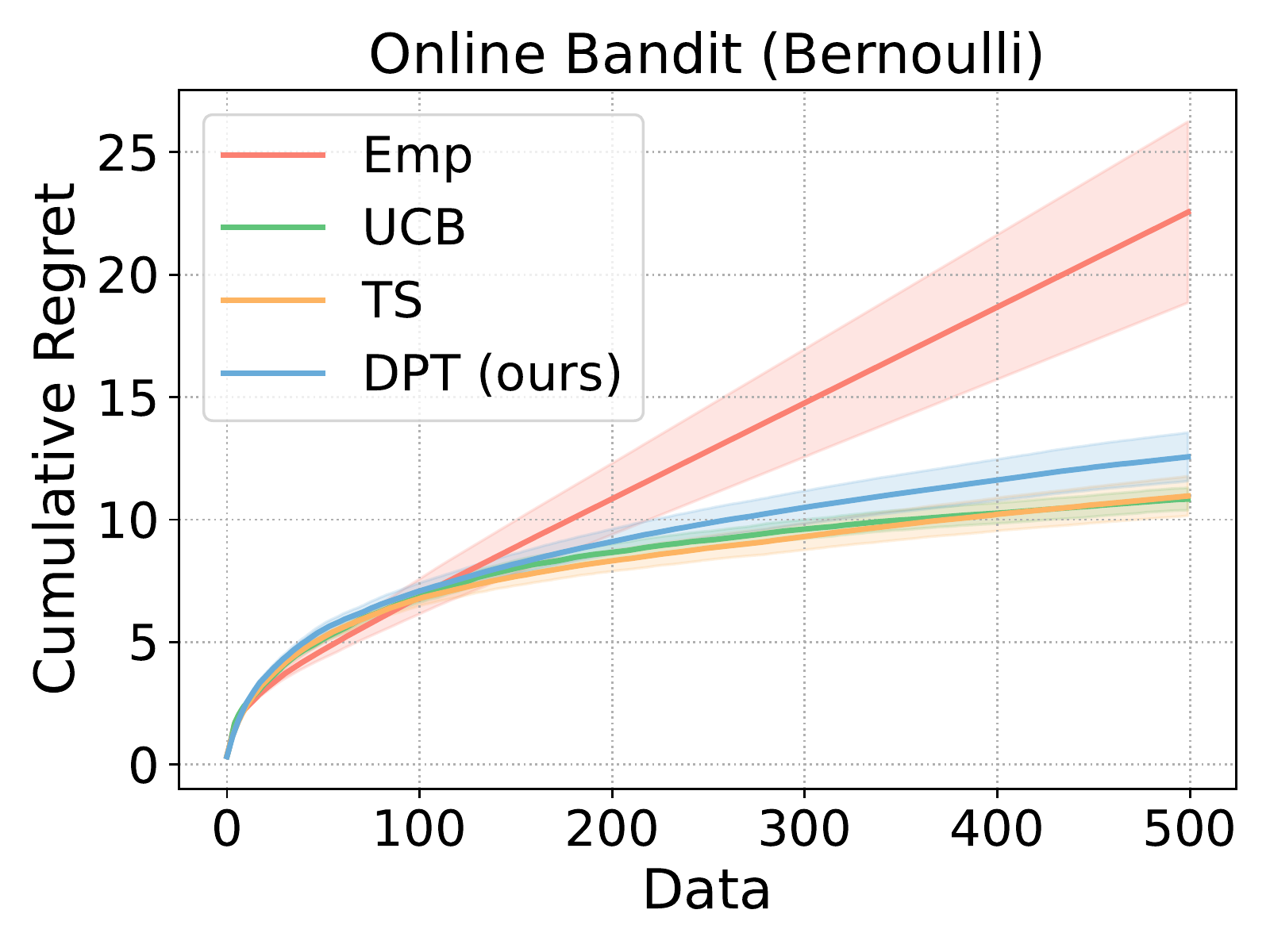}
        \caption{}
        \label{fig:bernoulli-online}
    \end{subfigure}
    \caption{\small (a) Final (after 500 steps) offline suboptimality on out-of-distribution bandits with different Gaussian noise standard deviations. (b) Offline performance on out-of-distribution Bernoulli bandits, given random in-context datasets. (c) Online cumulative regret on Bernoulli bandits. The mean and standard error are computed over $200$ test tasks.}
    
\end{figure}

\subsection{Bandits}

This section reports additional experimental results in bandit environments.

\paragraph{Out-of-distribution reward variances. } In Figures~\ref{fig:online-vars} and~\ref{fig:vars-offline}, we demonstrate the robustness of the basic pretrained model under shifts in the reward distribution at test time by varying the amount of noise observed in the rewards. \macroName{} maintains robustness to these shifts similar to TS.

\paragraph{Bernoulli rewards.} We test the out-of-distribution ability of \macroName{} further by completely changing the reward distribution from Gaussian to Bernoulli bandits. Despite being trained only on Gaussian tasks during pretraining, \macroName{} maintains strong performance both offline and online in Figures~\ref{fig:bernoulli-offline} and~\ref{fig:bernoulli-online}.

\begin{wrapfigure}{r}{0.24\textwidth}
  \centering
    \includegraphics[width=\linewidth]{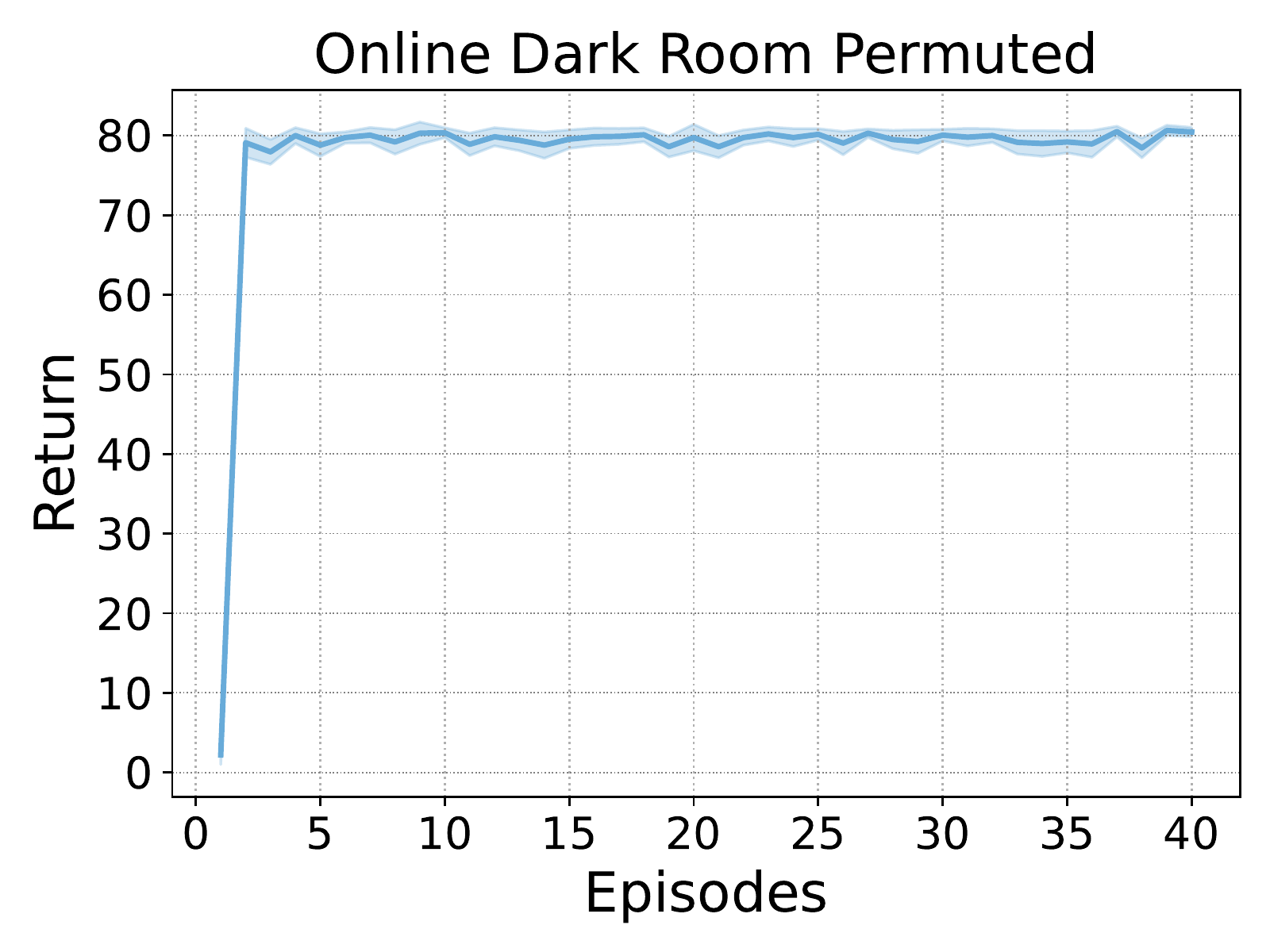}
    \caption{\small Online evaluation of DPT on Dark Room when tested on novel actions set permutations.}
    \label{fig:mdp-permute}
    \vspace{-1cm}
\end{wrapfigure}

\begin{figure}
    \centering
    \begin{subfigure}{0.24\textwidth}
        \centering
        \includegraphics[width=\linewidth]{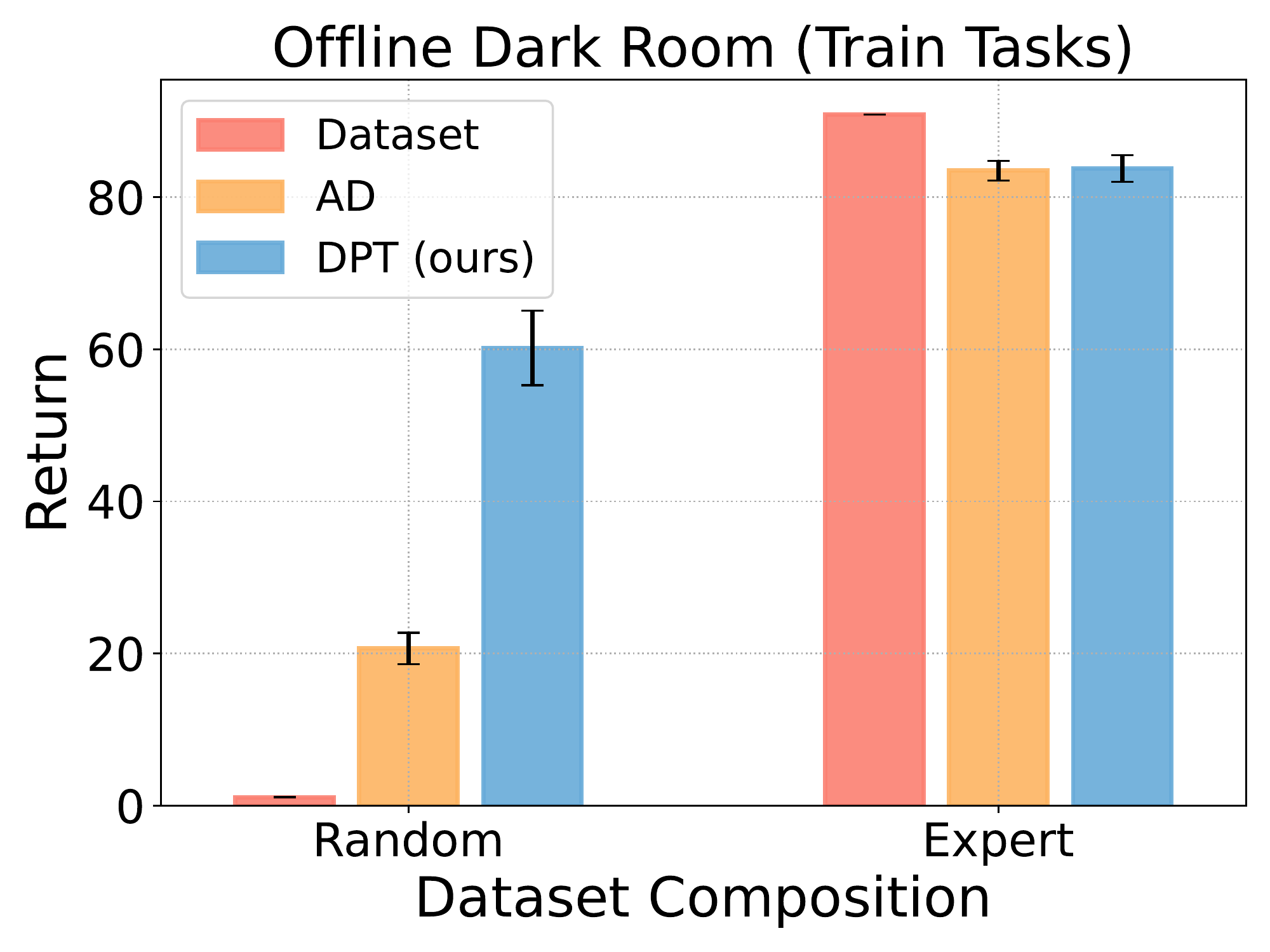}
        \caption{}
        \label{fig:dark-offline-train}
    \end{subfigure}
    \begin{subfigure}{0.24\textwidth}
        \centering
        \includegraphics[width=\linewidth]{figures/darkroom_offline.pdf}
        \caption{}
        \label{fig:dark-offline-train2}
    \end{subfigure}
        \begin{subfigure}{0.24\textwidth}
        \centering
        \includegraphics[width=\linewidth]{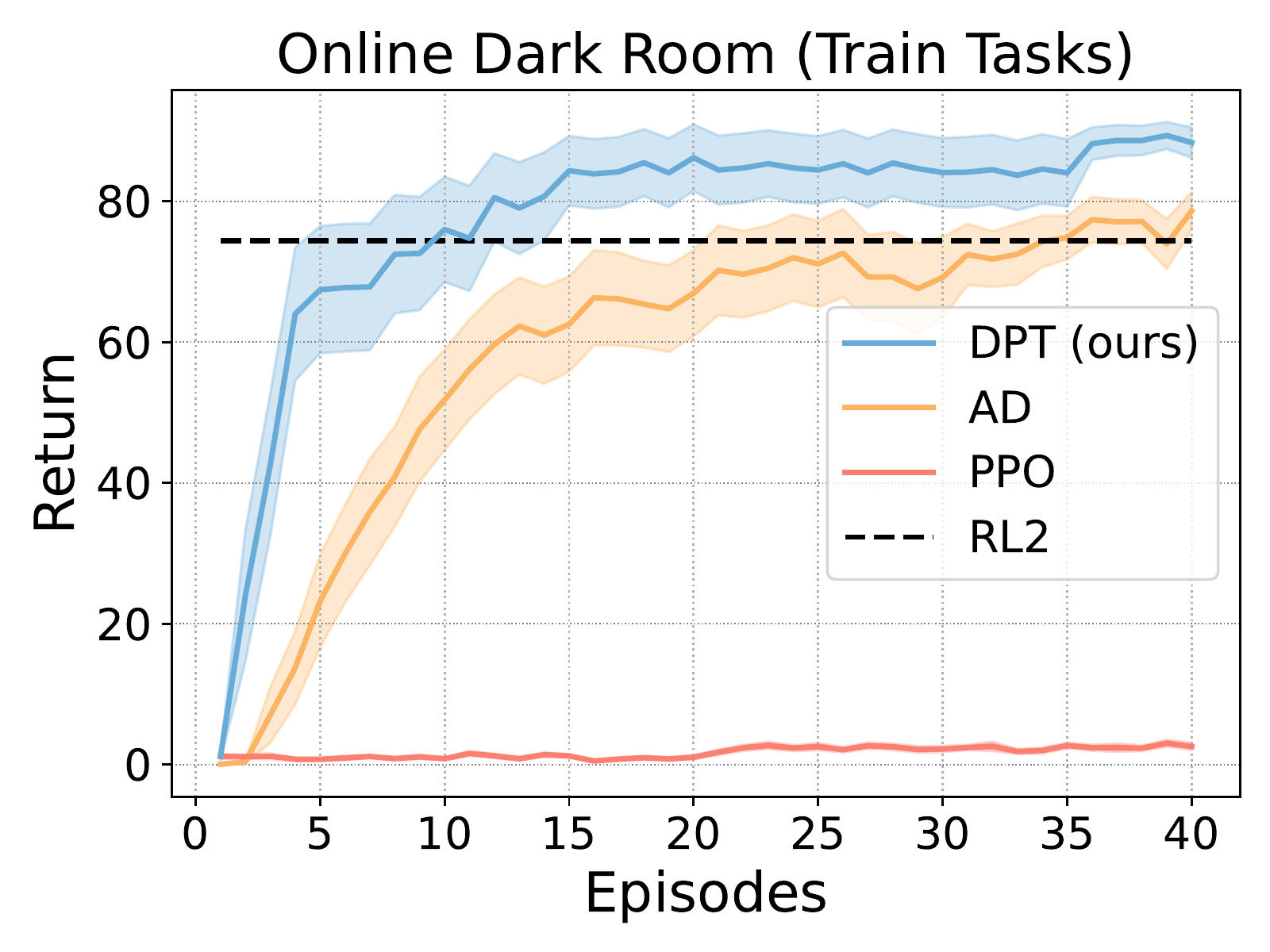}
        \caption{}
        \label{fig:dark-online-train}
    \end{subfigure}
        \begin{subfigure}{0.24\textwidth}
        \centering
        \includegraphics[width=\linewidth]{figures/darkroom_trainFalse.pdf}
        \caption{}
        \label{fig:dark-online-train2}
    \end{subfigure}

    \caption{\small All comparisons in Dark Room evaluated on the tasks that were seen during pretraining, displayed next to their evaluations on test task counterparts from the main text.}
    \label{fig:darkroom-train}
\end{figure}

\subsection{Markov Decision Processes}
This section reports additional experimental results in the Dark Room and Miniworld environments.

\paragraph{Performance on training tasks.} In Fig.~\ref{fig:darkroom-train}, we show the performance of each method on the training tasks in Dark Room. Offline, \macroName{} and AD demonstrate comparable performance as on the training tasks, indicating a minimal generalization gap to new goals. Online, \macroName{}, AD, and RL$^2$ also achieve performance on the training tasks similar to that on the test tasks.

\paragraph{Generalization to new dynamics.} In this experiment, we study generalization to variations in a different aspect of the MDP, namely the dynamics. We design \emph{Dark Room (Permuted)}, a variant of Dark Room in which the goal is fixed to a corner but the action space is randomly permuted. Hence, the agent must leverage its historical context to infer the effect of each action. On a held-out set of $20$ permutations, \macroName{} infers the optimal policy correctly every time offline, given only $100$ offline samples, matching the optimal policy at $83$ return. Similarly, the online performance immediately snaps to a near optimal policy in one episode once it identifies the novel permutation in Figure~\ref{fig:mdp-permute}.

\begin{figure}
    \centering
    \begin{subfigure}{0.24\textwidth}
        \centering
        \includegraphics[width=\linewidth]{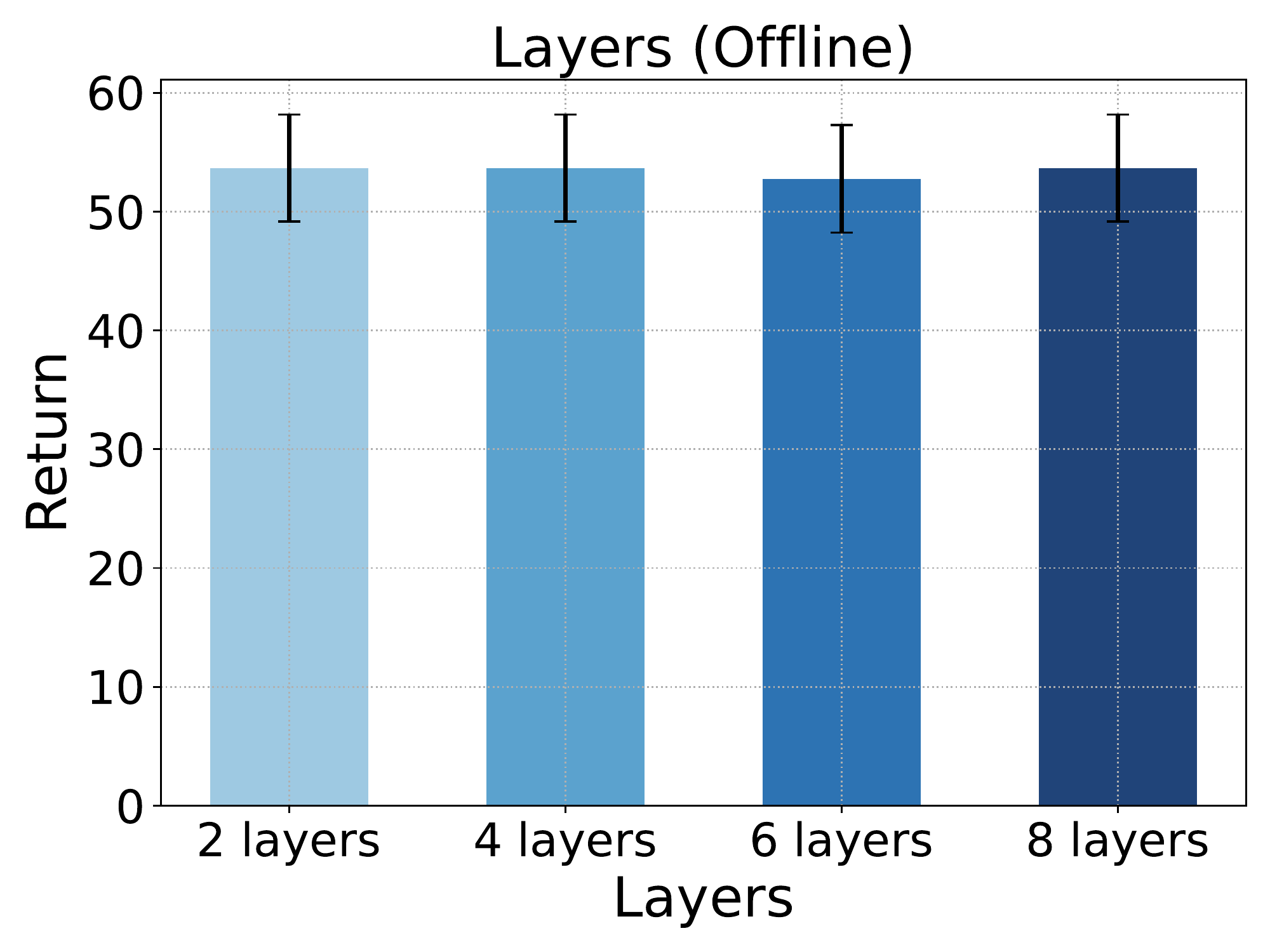}
        \caption{}
        \label{fig:sens-layers}
    \end{subfigure}
    \begin{subfigure}{0.24\textwidth}
        \centering
        \includegraphics[width=\linewidth]{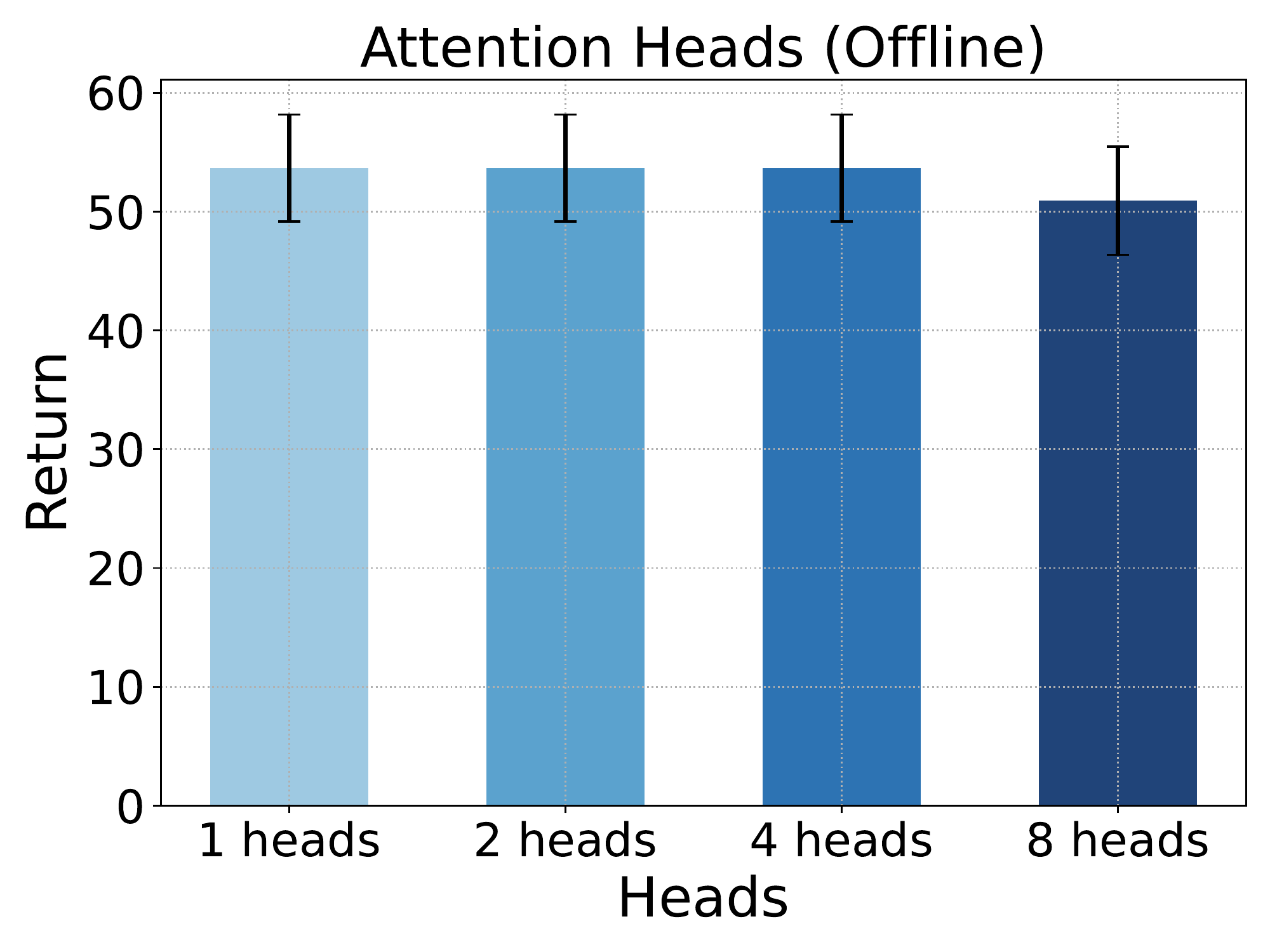}
        \caption{}
        \label{fig:sens-heads}
    \end{subfigure}
    \begin{subfigure}{0.24\textwidth}
        \centering
        \includegraphics[width=\linewidth]{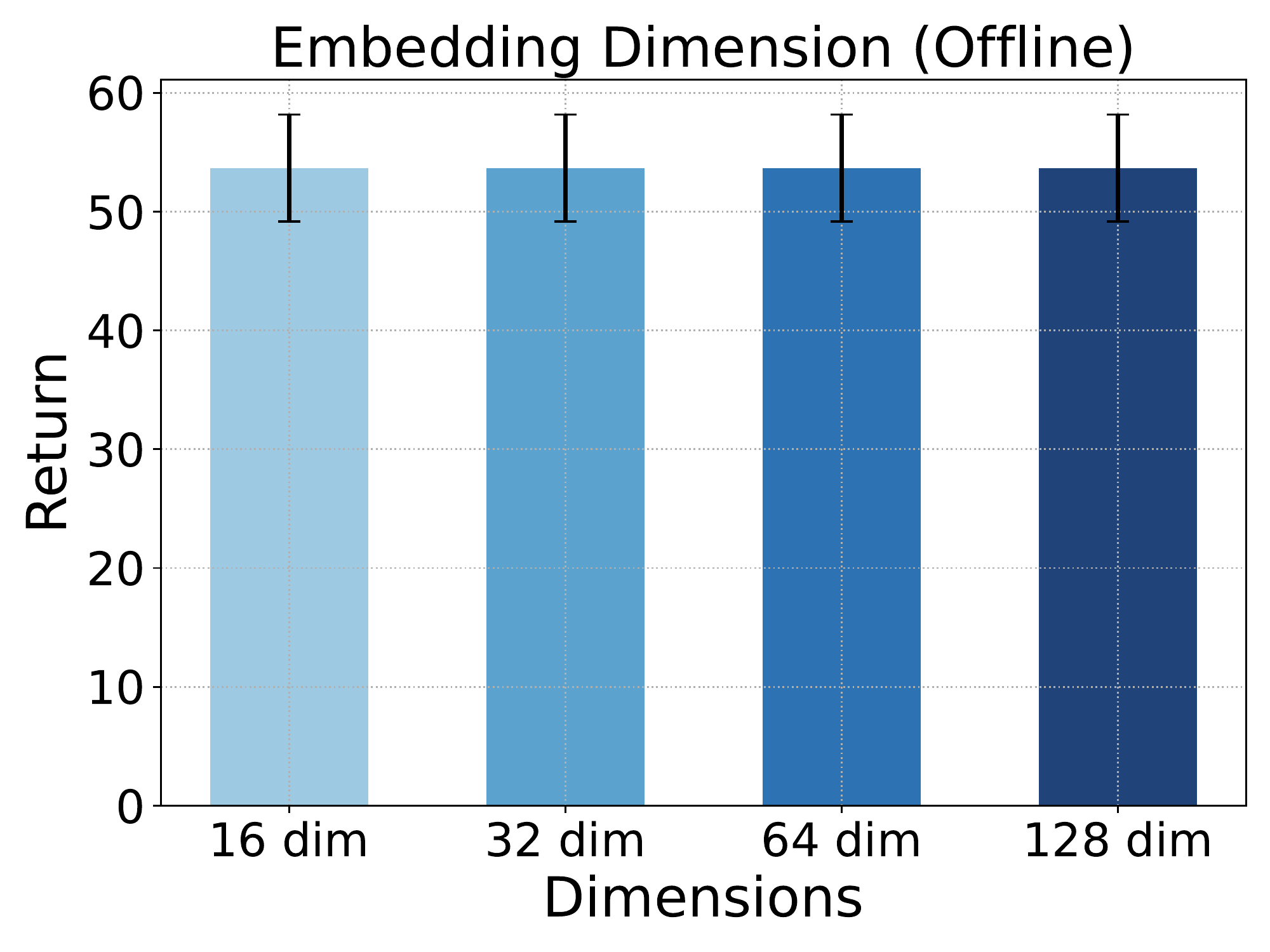}
        \caption{}
        \label{fig:sens-embd}
    \end{subfigure}
    \begin{subfigure}{0.24\textwidth}
        \centering
        \includegraphics[width=\linewidth]{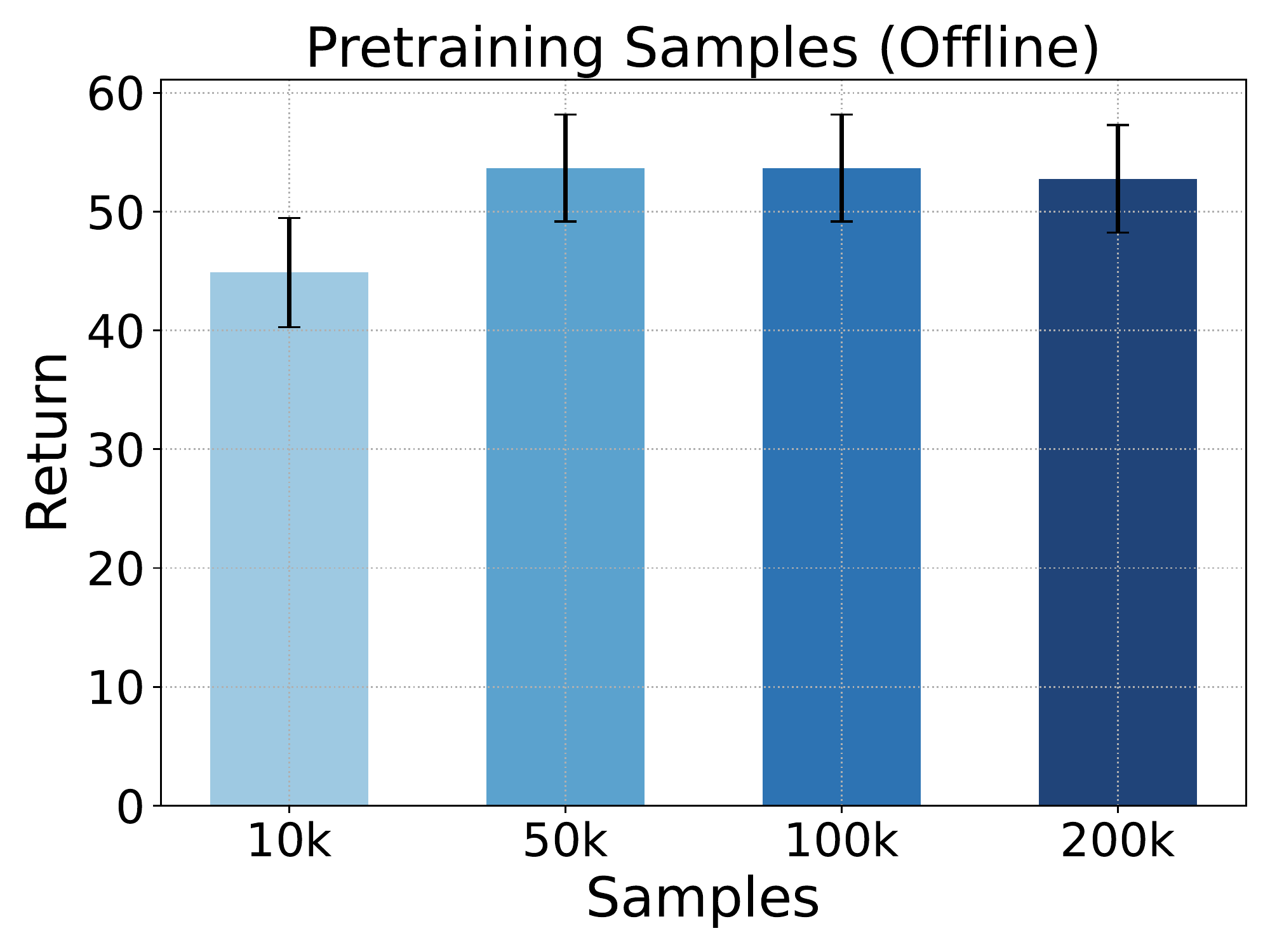}
        \caption{}
        \label{fig:sens-samples}
    \end{subfigure}
    \caption{\small Sensitivity analysis of the offline Dark Rook task over the GPT-2 transformer's hyperparameters: (a) layers (b) attention  heads (c) embedding dimensions (d) pretraining samples.}
    \label{fig:sensitivity}
\end{figure}

\subsection{Sensitivity Analysis}
We next seek to understand the sensitivity of \macroName{} to different hyperparameters, including the model size and size of the pretraining dataset. These experiments are performed in the Dark Room environment. As shown in Fig.~\ref{fig:sensitivity}, the performance of \macroName{} is robust to the model size; it is the same across different embedding sizes, number of layers, and number of attention heads. Notably, the performance is slightly worse with $8$ attention heads, which may be attributed to slight overfitting. We do see that when the pretraining dataset is reduced to $10\%$ of its original size ($10000$ samples) the performance degrades, but otherwise has similar performance with larger pretraining datasets.

\section{Additional Theory and Omitted Proofs}\label{app:theory}

We start with a well-known concentration inequality for the maximum-likelihood estimate (MLE) to provide some more justification for the approximation made in Assumption~\ref{asmp:equality}. We state a version from~\cite{agarwal2020flambe}.
Let $\Fcal$ be a finite function class used to model a conditional distribution $p_{Y| X} (y | x)$ for $x \in \Xcal$ and $y \in \Ycal$. Assume there is $f^\star \in \Fcal$ such that $p(y | x) = f^\star(y | x)$ (realizable), and $f(\cdot| x) \in \Delta(\Ycal)$ for all $x \in \Xcal$ and $f \in \Fcal$ (proper). Let $D = \{ x_i, y_i\}_{i \in [N]}$ denote a dataset of i.i.d samples where $x_i \sim p_X$ and $y_i \sim p_{Y | X} (\cdot | x_i)$.
Let
\begin{align}
    \hat f = \argmax_{f \in \Fcal}  \sum_{i \in [N]} \log f(y_i | x_i) 
\end{align}
\begin{proposition}[Theorem 21 of~\cite{agarwal2020flambe}]\label{prop:mle}
Let $D$ and $\hat f$ be given as above under the aforementioned conditions. Then, with probability at least $1 - \delta$, 
\begin{align}
    \E_{x \sim p_X}  \| \hat f(\cdot | x) - p_{Y| X}(\cdot | x) \|_1^2 & \leq \frac{8 \log \left( |\Fcal| /\delta \right) }{N}
\end{align}
\end{proposition}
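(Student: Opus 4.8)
The plan is to control the maximum-likelihood estimate $\hat f$ through the squared Hellinger distance and convert to the $\ell_1$ metric only at the very end. First I would record the sole consequence of optimality that is needed: since $\hat f$ maximizes the log-likelihood over $\Fcal$ and the realizable $f^\star \in \Fcal$, the empirical log-likelihood ratio is nonnegative, $\sum_{i \in [N]} \log\big(\hat f(y_i|x_i)/f^\star(y_i|x_i)\big) \ge 0$. The whole argument then amounts to showing that any $f$ far from $f^\star$ in population Hellinger distance is extremely unlikely to achieve a nonnegative empirical log-likelihood ratio.

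The crux is a Chernoff bound via the \emph{square-root trick}. For a fixed $f \in \Fcal$, Markov's inequality gives
\[ \Pr\Big[\textstyle\sum_i \tfrac12\log\tfrac{f(y_i|x_i)}{f^\star(y_i|x_i)} \ge 0\Big] \le \E\Big[\textstyle\prod_i \sqrt{f(y_i|x_i)/f^\star(y_i|x_i)}\Big]. \]
By the i.i.d. assumption this factorizes as $\big(\E_{x,y}\sqrt{f/f^\star}\big)^N$, and integrating out $y \sim p_{Y|X}(\cdot|x) = f^\star(\cdot|x)$ yields the identity $\E_{x,y}\sqrt{f(y|x)/f^\star(y|x)} = \E_x \int \sqrt{f(y|x)f^\star(y|x)}\,dy = 1 - \tfrac12 \E_x[H^2(f(\cdot|x), f^\star(\cdot|x))]$, where $H^2$ is the squared Hellinger distance. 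Applying $1-u \le e^{-u}$ bounds the tail probability by $\exp(-\tfrac{N}{2}\E_x H^2(f,f^\star))$.

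I would then lift this per-$f$ estimate to a uniform statement and finish with a metric comparison. Stratifying by the population error, $\Pr[\E_x H^2(\hat f, f^\star) > t]$ is at most the probability that \emph{some} $f$ with $\E_x H^2(f,f^\star) > t$ attains a nonnegative log-likelihood ratio, which by the previous display and a union bound over $\Fcal$ is at most $|\Fcal|\exp(-\tfrac{N}{2}t)$; setting $t = \tfrac{2\log(|\Fcal|/\delta)}{N}$ gives $\E_x H^2(\hat f, f^\star) \le \tfrac{2\log(|\Fcal|/\delta)}{N}$ with probability at least $1-\delta$. A Cauchy--Schwarz step converts the metric: $\|p-q\|_1 = \int |\sqrt p - \sqrt q|\,(\sqrt p + \sqrt q) \le 2\,H(p,q)$, hence $\|p-q\|_1^2 \le 4\,H^2(p,q)$. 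Applying this pointwise in $x$ and taking $\E_x$ produces the stated bound $8\log(|\Fcal|/\delta)/N$, the constant $8$ arising as $4 \times 2$.

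The step I expect to be the main obstacle is the union bound, since the threshold separating "good" from "bad" $f$ is itself the quantity $\E_x H^2(f,f^\star)$ we are trying to control, so one cannot naively union bound at a single fixed confidence level. The clean resolution is the stratification above, which pairs each $f$'s tail bound $\exp(-\tfrac{N}{2}\E_x H^2(f,f^\star))$ with exactly the event $\{\E_x H^2(f,f^\star) > t\}$; finiteness of $\Fcal$ then suffices and no peeling or covering over a continuum is required. The remaining ingredients (the square-root MGF identity and the Cauchy--Schwarz conversion) are routine once the square-root parameterization is in place.
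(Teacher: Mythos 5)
Your proof is correct, but note that the paper itself does not prove this proposition at all: it is imported verbatim as Theorem 21 of \cite{agarwal2020flambe}, so there is no internal proof to compare against. Your argument is the standard self-contained derivation of that result for i.i.d.\ data, and every step checks out: the MLE inequality $\sum_{i}\log\bigl(\hat f(y_i|x_i)/f^\star(y_i|x_i)\bigr)\ge 0$ uses realizability; the Markov/Chernoff step with exponent $\tfrac12$ produces the Bhattacharyya coefficient $\E_{x}\int\sqrt{f(y|x)f^\star(y|x)}\,dy = 1-\tfrac12\E_x H^2(f,f^\star)$ and hence the tail bound $\exp(-\tfrac{N}{2}\E_x H^2(f,f^\star))$; the stratified union bound over the finite class correctly resolves the self-referential threshold issue you flag (pairing each $f$'s own tail bound with the deterministic event $\{\E_x H^2(f,f^\star)>t\}$ is valid precisely because $\hat f$ always has nonnegative empirical log-likelihood ratio), giving $\E_x H^2(\hat f,f^\star)\le \tfrac{2\log(|\Fcal|/\delta)}{N}$; and the Cauchy--Schwarz conversion $\|p-q\|_1^2\le 4H^2(p,q)$ (using $\int(\sqrt p+\sqrt q)^2\le 4$) recovers exactly the stated constant $8=4\times 2$. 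The main difference from the proof in the cited reference is one of generality: there, the same square-root trick is routed through a martingale/decoupling inequality (in the style of Zhang's information-theoretic MLE analysis) so that it also covers sequentially or adaptively collected data, whereas your factorization $\bigl(\E_{x,y}\sqrt{f/f^\star}\bigr)^N$ exploits the i.i.d.\ assumption directly; that simplification is perfectly adequate for the i.i.d.\ sampling assumed in the proposition. One cosmetic caveat: state your Hellinger normalization $H^2(p,q)=\int(\sqrt p-\sqrt q)^2$ explicitly, since both constant computations ($\int\sqrt{pq}=1-\tfrac12 H^2$ and $\|p-q\|_1^2\le 4H^2$) depend on that convention.
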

The finiteness of $\Fcal$ is done for simplicity, but we can see that this yields dependence on the log-cardinality,  a common measure of complexity. Extensions to infinite $\Fcal$ of bounded statistical complexity can be readily made to replace this.
For our setting, the bound suggests that $\E_{P_{pre}} \| P_{pre}(\cdot | \squery, D, \xi_h) - M_{\theta} (\cdot | \squery, D, \xi_h) \|_1^2 \rightarrow 0$ as $N \rightarrow \infty$ with high probability, provided the function class of $M_{\theta}$ has bounded statistical complexity.

\subsection{Posterior Sampling}

Posterior sampling is most generally described with the following procedure~\citep{osband2013more}. Initialize a prior distribution $\Tcal_1 = \pretraint$ and dataset $D = \{\}$. For $k \in [K]$
\begin{enumerate}
    \item Sample $\tau_k \sim \Tcal_k$ and compute $\hat \pi_{\tau_k}   $
    \item Execute $\pistar_{\tau_k}$ and add interactions to $D$
    \item Update posterior distribution $\Tcal_{k + 1}(\tau) = P( \tau | D)$.
\end{enumerate}
The prior and posteriors are typically over models such as reward functions in bandits or transition dynamics in MDPs.

\subsection{Proof of Theorem~\ref{thm:main}}

\thmmain*

\begin{proof}
Without loss of generality, for a task $\tau$, we take $\pistar_\tau(\cdot | s)$ to be deterministic and denote the optimal action in state $s$ as $\pistar_\tau(s)$. Recall that we consider a fixed current task $\tau_c$ and a fixed in-context dataset $D$. Define $\xi_{h} = (s_1, a_1, \ldots, s_{h}, a_{h})$.

We now formally state the variant of the full joint distribution from which we sample during pretraining. Let $\tau$ and $D'$ be an arbitrary task and dataset and let $a^\star \in \Acal$, $\squery \in \Scal$, $\xi_{H - 1} \in (\Scal \times \Acal)^{H-1}$, and $h \in [0, H- 1]$ be arbitrary.
\begin{align}
    P_{pre} (\tau, a^\star, \squery, D', \xi_{H - 1}, h) & =  \pretraint(\tau) \pretraind(D'; \tau) \Dquery(\squery)  \Sfrak_H(s_{1:H}) \pistar_\tau(a^\star |\squery)  \\
    & \quad \times \unif{[0, H - 1]} \prod_{i \in [H]} \pistar_\tau( a_i | s_i)
\end{align}
The $\unif[0, H - 1]$ is due to the fact that we sample $h \sim \unif[0, H 
    - 1]$ and then truncate $\xi_h$ from $\xi_{H-1}$ (or, equivalently, sample $\xi_h \sim \Sfrak_h$ directly), marginalizing out the other variables.
For $h' \leq h - 1$, recall that we also use the notation $\Sfrak_{h'}(s_{1:h'})$ to denote the marginalization of the full joint $\Sfrak_H$. We will eventually work with the posterior of this distribution given the data $D$ and history $\xi_h$:
\begin{align}
    P_{pre}(\tau | D, \xi_h) & \propto \pretraint(\tau) \pretraind(D; \tau) \prod_{i \in [h]} \pistar_{\tau}(a_i | s_i) \\
    & \propto P_{pre} (\tau | D) \prod_{i \in [h]} \pistar_{\tau}(a_i | s_i) 
\end{align}

We define the following random sequences and subsequences:
\begin{align}
    \Xi_{ps}(h; D) =  \left(S_1^{ps}, A_1^{ps}, \ldots, S_h^{ps}, A_h^{ps}\right) 
\end{align}
where the variables are generated according to the following conditional process:  $\tau_{ps} \sim P(\cdot | D)$, $S^{ps}_1 \sim \rho_{\tau_c}$, $A^{ps}_h \sim \pistar_{\tau_{ps}}(\cdot | S^{ps}_h)$, and $S_{h + 1}^{ps} \sim T_{\tau_c} (\cdot  | S_{h}^{ps}, A^{ps}_{h}) $. We also define $\Xi_{ps}(h': h; D)$ to be the last $h - h'$ elements of $\Xi_{ps}(h; D)$. Analogously, we define
\begin{align}
    \Xi_{pre}(h; D) =  \left(S_1^{pre}, A_1^{pre}, \ldots, S_h^{pre}, A_h^{pre}\right)
\end{align}
where the variables are from the process: $S_1^{pre} \sim \rho_{\tau_c}$, $A_h^{pre} \sim P_{pre} (\cdot | S_h^{pre}, D, \Xi_{pre}(h - 1; D))$, and $S_{h + 1}^{pre} \sim T_{\tau_c}(\cdot | S_h^{pre}, A_h^{pre})$. Note that $A_h^{pre}$ is sampled conditioned on the sequence $\Xi_{pre}(h; D)$ so far. 

We will show that $\Xi_{ps} (h; D)$ and $\Xi_{pre}(h; D)$ follow the same distribution for all $h \in [H]$. For  convenience, we will drop notational dependence on $D$, except where it resolves ambiguity. Also, because of Assumption~\ref{asmp:equality}, we have that $P_{pre}(\cdot | S^{pre}_h, D, \Xi_{pre}(h - 1)) = M_{\theta} (\cdot | S^{pre}_h, D, \Xi_{pre}(h - 1))$, so we will just work with $P_{pre}$ for the remainder of the proof. We will also make use of the following lemma.
\begin{lemma}\label{lem:compliance}
    If $\pretraind$ is complaint, then $P_{pre} (\tau | D) = P(\tau_{ps} = \tau | D)$.
\end{lemma}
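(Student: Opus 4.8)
\textbf{Proof plan for Lemma~\ref{lem:compliance}.}

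The goal is to show that the posterior over tasks induced by the pretraining distribution after conditioning only on the in-context dataset $D$ coincides with the posterior sampling distribution's posterior over tasks given $D$. The plan is to compute both posteriors explicitly via Bayes' rule and observe that the compliance assumption (Definition~\ref{def:compliant}) causes the only potentially discrepant factors to cancel. First I would write the joint probability of $\tau$ and $D$ under $P_{pre}$ by factoring the dataset likelihood $\pretraind(D; \tau)$ into its per-sample chain, using the transition-tuple structure $D = \{(s_i, a_i, s_i', r_i)\}_{i \in [n]}$:
\begin{align}
    \pretraind(D; \tau) = \prod_{i \in [n]} \pretraind(s_i \mid D_{i-1}; \tau)\, \pretraind(a_i \mid s_i, D_{i-1}; \tau)\, R_\tau(r_i \mid s_i, a_i)\, T_\tau(s_i' \mid s_i, a_i).
\end{align}

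Next I would invoke compliance: by Definition~\ref{def:compliant}, the factor $\pretraind(a_i \mid s_i, D_{i-1}; \tau)$ is invariant to $\tau$, and similarly the state-sampling factor $\pretraind(s_i \mid D_{i-1}; \tau)$ is determined by the environment dynamics observed so far and does not depend on $\tau$ except through quantities already conditioned upon. Consequently, when forming the posterior $P_{pre}(\tau \mid D) \propto \pretraint(\tau)\, \pretraind(D; \tau)$, the $\tau$-independent action (and exogenous state) factors can be pulled out of the proportionality and absorbed into the normalizing constant. What remains is $P_{pre}(\tau \mid D) \propto \pretraint(\tau) \prod_{i \in [n]} R_\tau(r_i \mid s_i, a_i) T_\tau(s_i' \mid s_i, a_i)$, i.e.\ the posterior depends on $D$ only through the reward and transition observations weighted against the prior.

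The posterior sampling distribution, by its definition in the procedure of Appendix~\ref{app:theory}, maintains exactly $P(\tau_{ps} = \tau \mid D) \propto \pretraint(\tau) \prod_{i} R_\tau(r_i \mid s_i, a_i) T_\tau(s_i' \mid s_i, a_i)$, since PS uses the well-specified prior $\pretraint$ and updates its posterior over models using only the reward/transition likelihood of the collected interactions (the actions themselves carry no task information under a data-dependent behavior policy). Since both normalized posteriors are proportional to the identical expression and both integrate to one over the same task space, they are equal, giving $P_{pre}(\tau \mid D) = P(\tau_{ps} = \tau \mid D)$.

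I expect the main obstacle to be stating the dataset factorization cleanly and justifying rigorously that \emph{every} $\tau$-dependent factor other than the reward/transition likelihood is genuinely $\tau$-invariant under compliance. In particular, one must be careful that the state-visitation factors $\pretraind(s_i \mid D_{i-1}; \tau)$ are handled correctly: in the MDP case the next states are drawn from $T_\tau$, so one must check that the way these are bookkept in the factorization matches on both sides rather than double-counting transition factors. The cleanest route is to group each transition so that $T_\tau(s_i' \mid s_i, a_i)$ appears once and treat the initial/within-dataset state sampling as governed by the compliant behavior process; compliance then guarantees the residual factors cancel identically in the ratio defining the posterior.
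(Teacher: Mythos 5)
Your proposal is correct and takes essentially the same approach as the paper: both apply Bayes' rule, factor the dataset likelihood into per-transition reward/transition factors times action factors, and use compliance (together with the fact that posterior sampling chooses actions based only on the observed history, hence $\tau$-independently) to drop the discrepant factors, concluding equality from proportionality of two normalized distributions over tasks. The only cosmetic difference is that you reduce both posteriors to the common core $\pretraint(\tau)\prod_{i} R_\tau(r_i \mid s_i, a_i)\, T_\tau(s_i' \mid s_i, a_i)$, whereas the paper transforms the posterior-sampling posterior directly into $\pretraint(\tau)\,\pretraind(D;\tau)$ by reinserting the $\tau$-invariant action probabilities.
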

\begin{proof}
From the definition of posterior sampling (using the same prior, $\pretraint$), we have that
    \begin{align}
            P(\tau_{ps}  = \tau | D) & \propto P(D | \tau) \pretraint(\tau) \\
            & \propto \pretraint(\tau) \prod_{j \in [n]} T_{\tau}(s'_j  | s_j, a_j) R_\tau(r_j | s_j, a_j)  \\
            & \propto \pretraint(\tau) \prod_{j \in [n]} T_{\tau}(s'_j  | s_j, a_j) R_\tau(r_j | s_j, a_j) \pretraind(a_j | s_j, D_{j - 1}) \\
            & = \pretraint(\tau) \pretraind(D; \tau) \\
            & = P_{pre}(\tau | D)
    \end{align}
where the second line crucially uses the fact that posterior sampling chooses actions based only on the prior and history so far. Similarly, the third line uses the fact that $\pretraind$ is compliant. Since the two sides are proportional in $\tau$, they are equivalent.
\end{proof}

We will prove Theorem~\ref{thm:main} via induction for each $h \in [H]$. First, consider the base case for a sequence of length $h = 1$. Recall that $\rho_{\tau_c}$ denotes the initial state distribution of $\tau_c$. We have that the densities can be written as
\begin{align}
    P(\Xi_{ps}(1) = \xi_1) &  = P(S^{ps}_1  = s_1, A^{ps}_1 = a_1) \\
    & = \rho_{\tau_c}(s_1) P(A^{ps}_1 = a_1 | S^{ps}_1 = s_1) \\
    & = \rho_{\tau_c}(s_1) \int_{\tau} P(A^{ps}_1 = a_1, \tau_{ps} = \tau | S^{ps}_1 = s_1) d\tau \\
    & = \rho_{\tau_c}(s_1) \int_{\tau} \pistar_\tau(a_1 | s_1)  P_{ps} (\tau_{ps} = \tau | D, S^{ps}_1 = s_1) d\tau \\
    & = \rho_{\tau_c}(s_1) \int_{\tau} \pistar_\tau(a_1 | s_1)  P_{ps} (\tau_{ps} = \tau | D) d\tau \\
    & = \rho_{\tau_c}(s_1) P_{pre}( A^{pre}_1 = a_1 | s_1, D) \\
    & = P(\Xi_{pre}(1) = \xi_1)
\end{align}
where the second line uses the sampling process of $S^{pre}_1$; the third marginalizes over $\tau_{ps}$, which is the task that posterior sampling samples to find the optimal policy; the fourth decomposes this into the optimal policy and the posterior over $\tau_{ps}$ given $D$ and $S^{ps}_1$. Since $S^{ps}_1$ is independent of sampling of $\tau_{ps}$ this dependence goes away in the next line. The sixth line applies Lemma~\ref{lem:compliance} and then, for $h = 1$, there is no history to condition on.

Now, we leverage the inductive hypothesis to prove the full statement. Suppose that the hypothesis holds for $h - 1$. Then, 
\begin{align}
    P(\Xi_{ps}(h) = \xi_h) & = P(\Xi_{ps}(h - 1) = \xi_{h - 1})  P(S^{ps}_{h} = s_h, A^{ps}_h = a_h | \Xi_{ps}(h - 1) = \xi_{h - 1}) \\
\end{align}
By the hypothesis, we have that $P(\Xi_{ps}(h - 1) = \xi_{h - 1}) = P(\Xi_{pre}(h - 1) = \xi_{h - 1}) $.  For the second factor,
\begin{align}
     & P(S^{ps}_{h} = s_h, A^{ps}_h = a_h | \Xi_{ps}(h - 1) = \xi_{h - 1}) \\
     & = T_{\tau_c} ( s_h | s_{h - 1},  a_{h - 1})  \cdot P(A_h^{ps} = a_h | S^{ps}_h = s_h, \Xi_{ps}(h - 1) = \xi_{h - 1})\\
      & = T_{\tau_c} (s_h |  s_{h - 1}, a_{h - 1}) \cdot  \int_{\tau} P(A_h^{ps} = a_h, \tau_{ps} = \tau | S^{ps}_h = s_h, \Xi_{ps}(h - 1) = \xi_{h - 1}) d\tau
\end{align}
As before, we can further rewrite the last factor as
\begin{align}
    & P(A_h^{ps} = a_h, \tau_{ps} = \tau | S^{ps}_h = s_h, \Xi_{ps}(h - 1) = \xi_{h - 1}) \\
    & = \pistar_{\tau}(a_h | s_h) \cdot P(\tau_{ps} = \tau  |  S^{ps}_h = s_h,  \Xi_{ps}(h - 1) = \xi_{h - 1})
\end{align}
where
\begin{align}
    P(\tau_{ps} = \tau  |  S^{ps}_h = s_h,  \Xi_{ps}(h - 1) = \xi_{h - 1}) 
    & = \frac{P(S^{ps}_h = s_h, \Xi_{ps}(h - 1) = \xi_{h - 1 }  | \tau_{ps} = \tau ) P(\tau_{ps} = \tau | D) } {P(S^{ps}_h = s_h, \Xi_{ps}(h - 1) = \xi_{h - 1 } )} \\
    & \propto P_{pre}(\tau | D)\prod_{i \in [h - 1]} T_{\tau_c} ( s_{i + 1} | s_i, a_i)  \pistar_{\tau}(a_i | s_i) \\
    & \propto P_{pre}(\tau | D)\prod_{i \in [h - 1]} \pistar_{\tau}(a_i | s_i) \\
    & \propto P_{pre}(\tau | D) \Dquery(s_h) \Sfrak_{h - 1} (s_{1:h - 1}) \prod_{i \in [h - 1]}  \pistar_{\tau}(a_i | s_i) \\
    & \propto P_{pre}(\tau | s_h, D, \xi_{h - 1})  \\
\end{align}
where $\propto$ denotes that the two sides are equal up to multiplicative factors independent of $\tau$.  In the first line, we used Bayes rule. In the second line, given that $\tau_{ps} = \tau$ (i.e. posterior sampling selected $\tau$ to deploy),  we decompose the probability of observing that sequence of states of actions. We also used Lemma~\ref{lem:compliance}. The denominator does not depend on $\tau$. Similarly, for the third and fourth lines, $T_{\tau_c}$ and $\Sfrak$ do not depend on $\tau$. The final line follows from the definition of the joint pretraining distribution in this regime.

Therefore, we conclude that the posterior over the value of $\tau_{ps}$ is the same as the posterior over the task in the pretraining distribution, given $s_h, D, \xi_{h - 1}$. Substituting back through all the previous equations, we have
\begin{align}
    & P(\Xi_{ps}(h) = \xi_h) \\
    & = P(\Xi_{pre}(h - 1) = \xi_{h - 1} ) \cdot T_{\tau_c}(s_h | s_{h - 1}, a_{h - 1})\int_{\tau} \pistar_\tau(a_h| s_h) P_{pre}(\tau | s_h, D, \xi_{h - 1}) d \tau \\
    & = P(\Xi_{pre}(h - 1) = \xi_{h - 1} ) \cdot T_{\tau_c}(s_h | s_{h - 1}, a_{h - 1}) P_{pre}(a_h | s_h, D, \xi_{h - 1}) \\
    & = P(\Xi_{pre}(h) = \xi_h)
\end{align}
This concludes the proof.

\end{proof}

\subsection{Proof of Corollary~\ref{cor:mdp}}

\cormdp*

\begin{proof} Note that $\pretraind$ is clearly compliant since it is generated by random sampling.
We use the equivalence between $M_{\theta}$ and posterior sampling established in Theorem~\ref{thm:main}.
The proof then follows immediately from Theorem~1 of \cite{osband2013more} to guarantee that
\begin{align}
    \E_{\pretraint} \left[ \regret_\tau(M_{\theta})\right] & \leq \widetilde{\Ocal} \left(H^{3/2}S\sqrt{A K}\right)
\end{align}
where the notation $\widetilde{\Ocal}$ omits polylogarithmic dependence. The bound on the test task distribution follows from the assumed bound on the likelihood ratio under the priors:
\begin{align}
     \int \testt(\tau) \regret_\tau(M_{\theta})  d\tau  & \leq  \Ccal \int \pretraint(\tau) \regret_\tau(M_{\theta})  d\tau.
\end{align}
\end{proof}

\subsection{Proof of Corollary~\ref{cor:lin}}

\corlin*

\begin{proof}
    The distribution $\pretraind$ satisfies compliance by definition because it is generated by an adaptive algorithm TS.
    The proof once again follows by immediately deferring  to the established result of \cite{russo2014learning} (Proposition~3) for linear bandits by the posterior sampling equivalence of Theorem~\ref{thm:main}. This ensures that posterior sampling achieves regret $\widetilde{\Ocal}(d\sqrt{K})$. It remains, however, to justify that $P_{pre}(\cdot | D_k)$ will be covered by Gaussian Thompson Sampling for all $D_k$ with $k \in [K]$. This is verified by noting that $P_{ps}(a | D_k) > 0$ for non-degenerate Gaussian Thompson Sampling (positive variances of the prior and likelihood functions) and finite $K$. This guarantees that any $D_k$ will have support.
\end{proof}

\subsection{Proof of Proposition~\ref{prop:invariance}}

\propinvariance*

\begin{proof}
    The proof follows by direct inspection of the pretraining distributions. For $P^1_{pre}$, we have
    \begin{align}
         P^1_{pre} ( a^\star | \squery, D, \xi) & = \int_{\tau} \pistar_\tau(a^\star | \squery) P^1_{pre}(\tau| \squery, D, \xi) d \tau \label{eqn:invariant_tau}
    \end{align}
    The posterior distribution over tasks is simply
    \begin{align}
     P^1_{pre}(\tau| \squery, D, \xi) & = \frac{P^1_{pre}(\tau, \squery, D, \xi)}{P^1_{pre}(\squery, D, \xi)} \\
     & \propto P^1_{pre}(\tau) P^1_{pre} (\xi | \tau)  \Dquery(\squery) \pretraind^1 (D ; \tau) \\
     & = P^2_{pre}(\tau) P^2_{pre} (\xi | \tau)  \Dquery(\squery) \pretraind^1 (D ; \tau)
    \end{align}
    Then, the distribution over the in-context dataset can be decomposed as
    \begin{align}
        \Dcal^1_{pre} (D ; \tau) & = \prod_{i \in [n]} R_\tau(r_i | s_i, a_i) T_\tau(s'_i | s_i, a_i) \pretraind^1( a_i | s_i, D_{i - 1}; \tau) \\
        & = \prod_{i \in [n]} R_\tau(r_i | s_i, a_i) T_\tau(s'_i | s_i, a_i) \pretraind^1( a_i | s_i, D_{i - 1}) \\
        & \propto \prod_{i \in [n]} R_\tau(r_i | s_i, a_i) T_\tau(s'_i | s_i, a_i) \pretraind^2( a_i | s_i, D_{i - 1}) \\ 
        & =  \Dcal^2_{pre} (D ; \tau),
    \end{align}
    where the second equality holds because $\pretraind^1(a_j|s_j,D_j; \tau)$ is assumed to be invariant to $\tau$ by compliance, and the fifth equality holds because $\pretraind^2(a_j|s_j,D_j; \tau)$ is assumed to be invariant to $\tau$. 
    
    Therefore, we conclude that, for any $s, D, \xi$,
    \begin{align}
        P^1_{pre}(\tau| s, D, \xi) & \propto P^2_{pre}(\tau) P^2_{pre} (\xi | \tau)  \Dquery(s) \pretraind^2 (D ; \tau) \\
        & \propto P^2_{pre}(\tau| s, D, \xi). 
    \end{align}
    Since also $\int_{\tau} P^1_{pre}(\tau| s, D, \xi) = 1 = \int_{\tau} P^2_{pre}(\tau| s, D, \xi) $, then 
    \begin{align}
P^1_{pre}(\tau| s, D, \xi) =  P^2_{pre}(\tau| s, D, \xi).
\end{align}
    
    Substituting this back into Equation\ref{eqn:invariant_tau} yields $P^1_{pre} ( a^\star | s, D, \xi) = P^1_{pre} ( a^\star | s, D, \xi)$.
\end{proof}


\end{document}